\newtheorem{lemma}{Lemma}[section]
\newtheorem{assumption}{Assumption}
\newcommand\numberthis{\addtocounter{equation}{1}\tag{\theequation}}
\titleformat*{\paragraph}{\bfseries}
\def \A {\mathcal{A}}
\def \C {\mathcal{C}}
\def \O {\mathcal{O}}
\def \P {\mathcal{P}}
\def \I {\mathcal{I}}
\def \S {\mathcal{S}}
\def \E {\mathcal{E}}
\def \eps {\epsilon}
\def \del {\delta}
\def \nat {\mathbb{N}}
\def \rr {\mathbb{R}}
\DeclareMathOperator*{\expec}{\mathbb{E}}
\DeclareMathOperator*{\argmax}{arg\,max}
\newcommand{\innp}[1]{\left\langle #1 \right\rangle}
\newcommand{\cbra}[1]{\left\{ #1 \right\}}
\newcommand{\sbra}[1]{\left[ #1 \right]}
\newcommand{\rbra}[1]{\left( #1 \right)}
\newcommand{\ind}{\mathbbm{1}}
\newcommand{\bunderbrace}[2]{%
  \begin{array}[t]{@{}c@{}}
  \underbrace{#1}\\
  #2
  \end{array}
}
\def \alg {\textup{ALG}}
\def \opt {\textup{OPT}}
\def \ct {c_t}
\def \cti {c_{t,i}}
\def \qti {q_{t,i}}
\def \qtz {q_{t,0}}
\def \pt {p_t}
\def \dti {d_{t,i}}
\def \edti {\overline{d}_{t,i}}
\def \ed {\overline{d}}
\def \rti {r_{t,i}}
\def \erti {\overline{r}_{t,i}}
\def \profit {\textup{Profit}}
\def \eprofit {\overline{\textup{Profit}}}
\def \lti {\ell_{t,i}}
\def \lt {\ell_t}
\def \elti {\overline{\ell}_{t,i}}
\def \elt {\overline{\ell}_t}
\def \lossF {\widehat{f}}
\def \lossH {\widehat{h}}
\def \uniform {\mathsf{Uniform}}
\def \envbase {\E_\textup{base}}
\def \envcstarpstar {\E_{(c^\star,p^\star)}}
\def \eeAbase {\mathbb{E}^\A_\textup{base}}
\def \eeAcstarpstar {\mathbb{E}^\A_{(c^\star,p^\star)}}
\def \ppAbase {\mathbb{P}^\A_\textup{base}}
\def \ppAcstarpstar {\mathbb{P}^\A_{(c^\star,p^\star)}}
\begin{document}

\title{Bandit Profit-Maximization for Targeted Marketing}

\author[1]{Joon Suk Huh}
\author[2,3]{Ellen Vitercik}
\author[1]{Kirthevasan Kandasamy}
\affil[1]{Computer Science Department, UW--Madison}
\affil[2]{Management Science and Engineering Department, Stanford University}
\affil[3]{Computer Science Department, Stanford University}
\date{\vspace{0ex}}

\maketitle

\begin{abstract}
    We study a sequential profit-maximization problem, optimizing for both price and ancillary variables like marketing expenditures. Specifically, we aim to maximize profit over an arbitrary sequence of multiple demand curves, each dependent on a distinct ancillary variable, but sharing the same price.
    A prototypical example is targeted marketing, where a firm (seller) wishes to sell a product over multiple markets.
    The firm may invest different marketing expenditures for different markets to optimize customer acquisition, but must maintain the same price across all markets.
    Moreover, markets may have heterogeneous demand curves, each responding to prices and marketing expenditures differently.
    The firm's objective is to maximize its gross profit, the total revenue minus marketing costs. 

    Our results are near-optimal algorithms for this class of problems in an adversarial bandit setting, where demand curves are arbitrary non-adaptive sequences, and the firm observes only noisy evaluations of chosen points on the demand curves. For $n$ demand curves (markets), we prove a regret upper bound of $\widetilde{\O}\big(nT^{\nicefrac{3}{4}}\big)$ and a lower bound of $\Omega\big((nT)^{\nicefrac{3}{4}}\big)$ for monotonic demand curves, and a regret bound of $\widetilde{\Theta}\big(nT^{\nicefrac{2}{3}}\big)$ for demands curves that are monotonic in price and concave in the ancillary variables. 
\end{abstract}
\section{Introduction}\label{sec:intro}
The design of revenue-maximizing mechanisms is one of the most important problems in economics. This problem is appealingly simple in the single-item setting: it boils down to choosing a revenue-maximizing price $p$. When the market is characterized by a demand curve $d(p)$, as in Figure~\ref{fig:1a}, then the revenue is $p\cdot d(p)$. Thus, finding the revenue-maximizing price under a known, fixed demand curve is straightforward. In reality, however, a firm (seller) will have only noisy, incomplete information about the demand curve.
As a result, an explosion of research has studied the more realistic setting where the demand curve is unknown, and the firm must learn the agents’ willingness to pay from repeated interactions~\citep[e.g.,]{kleinberg2003value,besbes2009dynamic,den2014simultaneously,besbes2015surprising,cheung2017dynamic,misra2019dynamic,den2015dynamic}.

However, this body of literature at the intersection of machine learning and mechanism design has not taken into account a critical lever of power that the firm has in many markets: the firm can shift the demand curve through advertising as illustrated in Figure~\ref{fig:1b},
resulting in higher revenue. This
phenomenon is known as the \emph{advertising elasticity of demand}~\citep{png2022managerial,choi2020online}. 

Moreover, different markets respond differently to advertising and/or price.
For instance, advertising for a luxury car will likely have more impact in affluent markets, while for non-luxury cars, advertising may be more impactful in emerging markets.
Thus, if a firm discovers which markets are more profitable over time, they can concentrate their advertising expenditures on those markets, as illustrated in Figure~\ref{fig:1c}.
In most real-world settings, the firm cannot practice non-anonymous (discriminatory) pricing, i.e., it must choose a
\emph{common price} for all markets; otherwise, buyers could buy across markets to pay a lower price.
Hence, the firm faces a complex, multi-dimensional profit maximization problem, where they must optimize the advertising costs in each market 
and the common price to simultaneously maximize revenue while minimizing advertising costs.

In this work, we formalize this problem with a bandit-learning model where the firm interacts with buyers from $n$ local markets over a series of $T$ timesteps. On each round $t \in [T]$, the firm chooses a common, anonymous price $p_t$, and for each market $i \in [n]$, it chooses an ancillary cost $c_{t,i}$, representing, for example, advertising spend in the market. 
Each local market $i \in [n]$ has demand $d_{t,i}$, which is a function of the price $p$ and the ancillary cost $c_{t,i}$. Thus, the firm's revenue on round $t$ is $\sum_{i = 1}^n p_t d_{t,i}(c_{t,i}, p_t) - c_{t,i}.$ The goal is to minimize \emph{regret}, which is the difference between the firm's cumulative revenue and that of the optimal price and ancillary costs in hindsight.

\paragraph{Variants.}
In Section~\ref{sec:otherprobs}, we also present variants of this targeted marketing problem for which the above framework applies.
The first is the \textit{subscription problem}, where new customers subscribe to a service on each round $t$,
and may choose to stay for future rounds.
Hence, there is a \textit{memory} effect where the demands at round $t$ can depend on past marketing costs and prices.
In each round, there is an influx of new customers depending on the current price level and marketing expenditures, and a fraction of existing existing canceling their membership.
The firm wishes to maximize its revenue by maintaining a large active customer pool.
The second variant is the \textit{promotional credit problem}, where a service provider (e.g., cloud services) wishes to attract users by giving promotional credits to users depending on whether they belong to a population segment (e.g., students, developers).
The third variant is a \textit{profit-maximizing A/B test}, where the firm conducts a series of experiments (e.g., presenting one of $k$ webpages) over $n$ segments of the population in conjunction with non-discriminatory pricing to maximize their profit. 

\begin{figure}[t]
    \centering
    \begin{subfigure}[t]{0.3\textwidth}
        \centering
        \includegraphics[width=\textwidth]{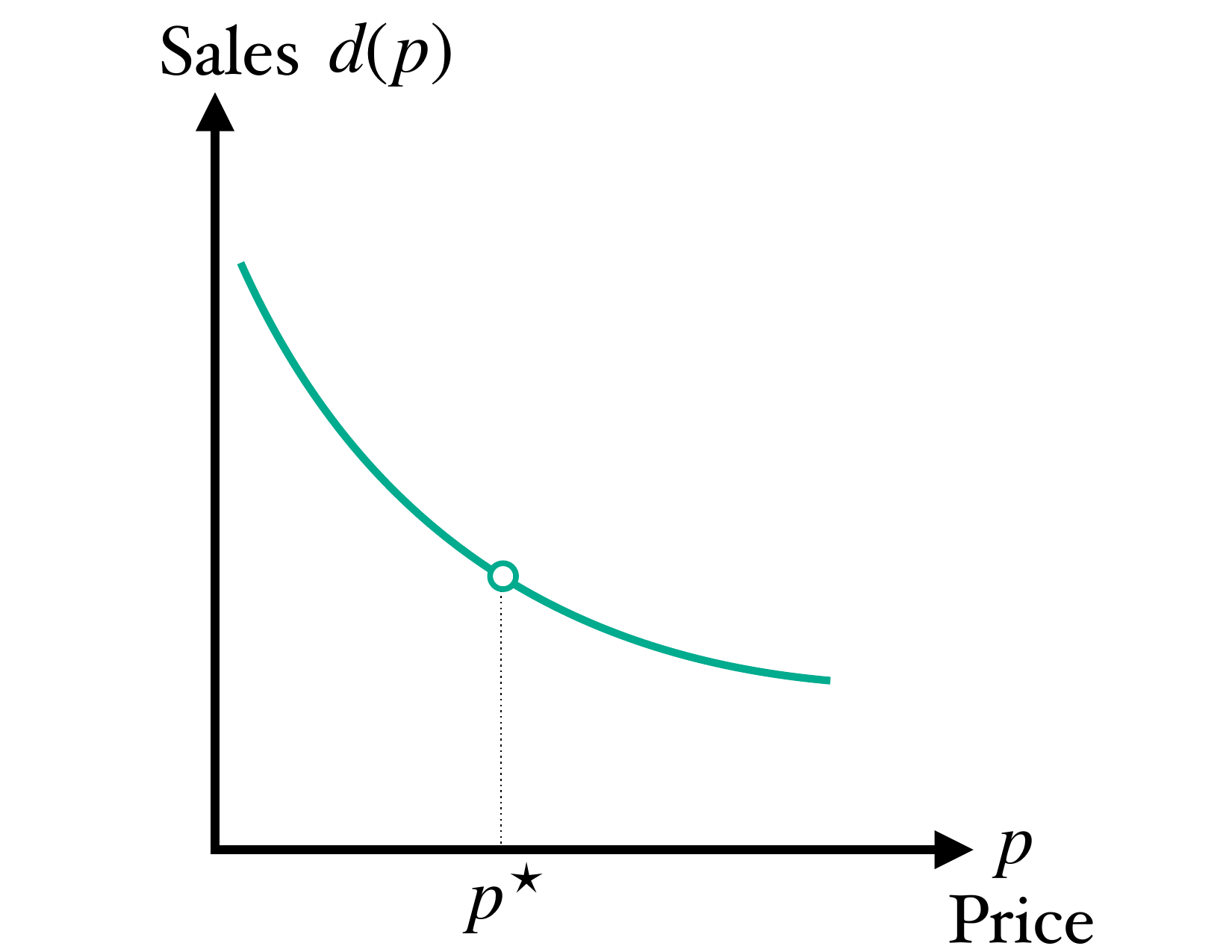}
        \caption{Single market with a fixed demand curve.}
        \label{fig:1a}
    \end{subfigure}
    \hfill
    \begin{subfigure}[t]{0.3\textwidth}
        \centering
        \includegraphics[width=\textwidth]{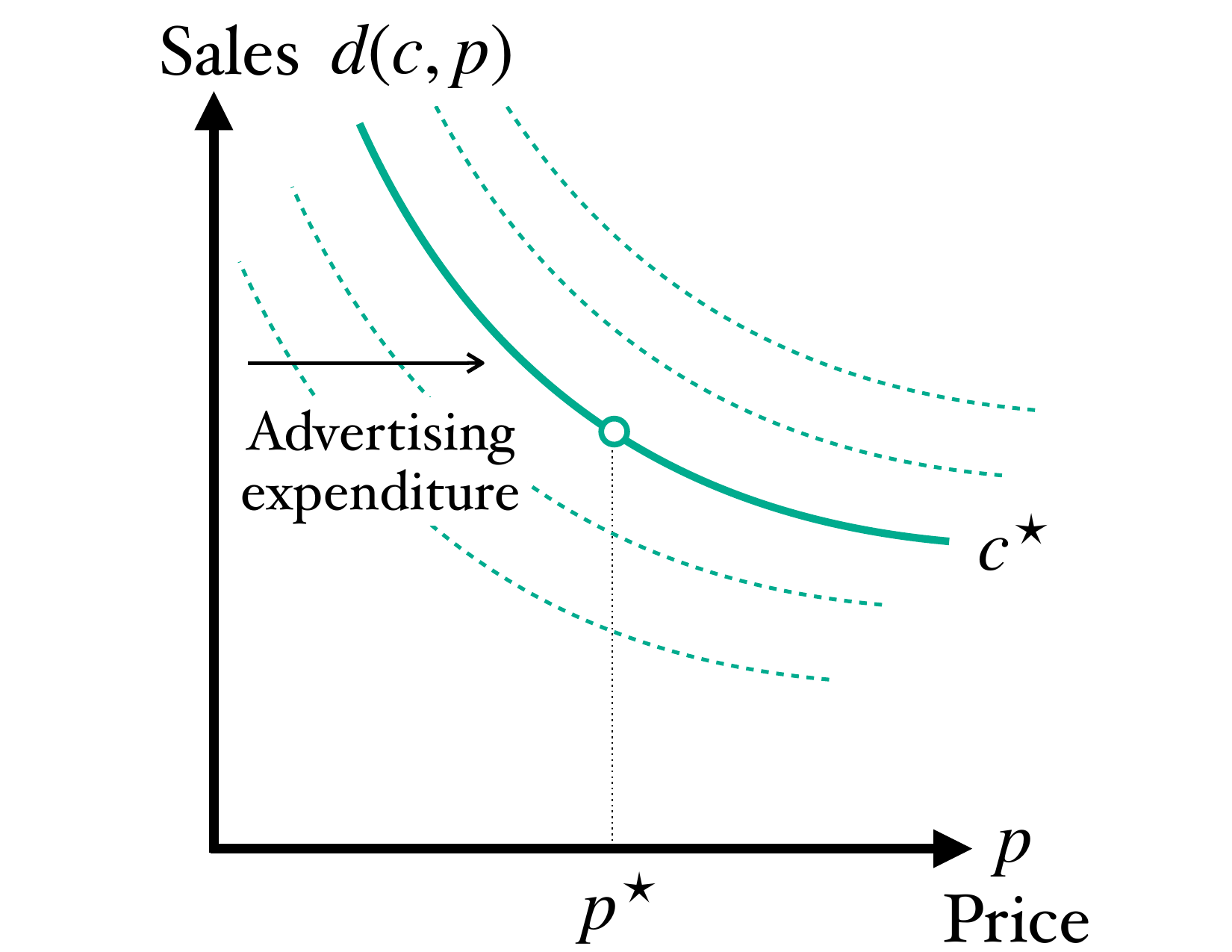}
        \caption{Single market with a shifting demand curve.}
        \label{fig:1b}
    \end{subfigure}
    \hfill
    \begin{subfigure}[t]{0.3\textwidth}
        \centering
        \includegraphics[width=\textwidth]{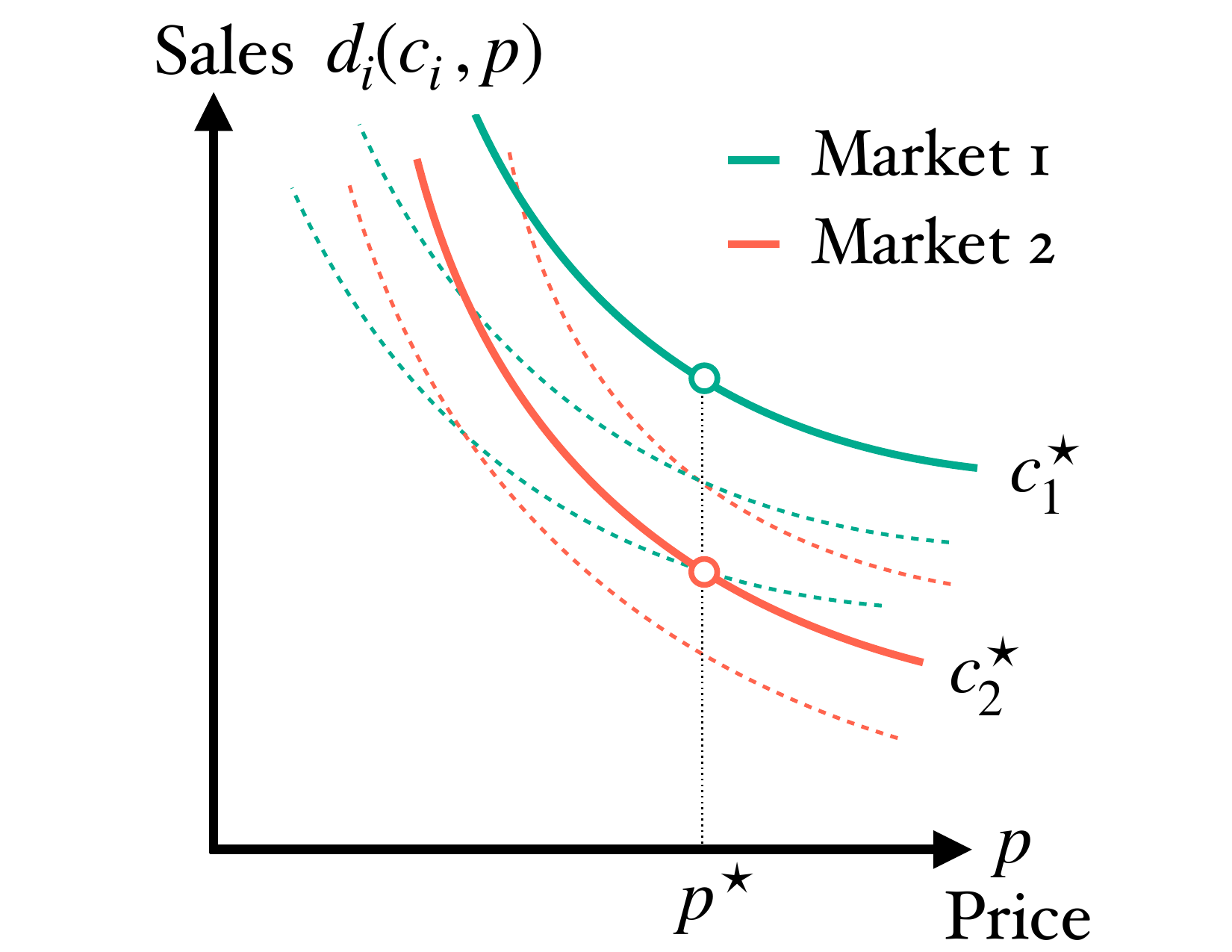}
        \caption{Multiple markets with shifting demand curves.}
        \label{fig:1c}
    \end{subfigure}
    \caption{A landscape of profit-maximization problems.
    In (a), we wish to maximize revenue under some demand curve $d(p)$, which boils down to choosing a price $p$ which maximizes $p\cdot d(p)$.
    In (b), advertising can shift the demand curve, and the goal is to maximize the profit $p\cdot d(c,p) - c$, i.e., revenue minus advertising cost $c$.
    The setting of this work is illustrated in (c), where we have $n$ different markets, and we wish to
    choose advertising costs $c_1,\dots,c_n$ and a
    \emph{common price} $p$ to maximize the total profit
    $\sum_{i}p\cdot d_i(c_i,p) - c_i$.
    The demand curves are unknown to \textit{a priori}, 
    and we are interested in learning the optimal price and costs via repeated interactions.
    }
    \label{fig:1}
    \vspace{-1em}
\end{figure}

\subsection{Our contributions}
\label{sec:contributions}
Our main contributions in this work are as follows.
\begin{enumerate}
    \item We formalize profit maximization in the adversarial bandit setting where the firm only observes the realized demands (i.e., sales) for the chosen price and marketing costs. In particular, we study two different, yet natural assumptions on the demand curves: (1) Monotonic demands (Assumption \ref{assump:monotonic}), where the demands for each market are monotonically increasing in the marketing expenditure and decreasing in price. (2) Cost-concave demands (Assumption \ref{assump:costconcave}), where the demand is a concave function of the marketing expenditure, modeling diminishing returns in the effectiveness of the marketing campaign.
    \vspace{0.5em}
    \item
    We provide two profit-maximizing algorithms for the targeted marketing problem, under the two different assumptions. Our regret bounds are linear in the number of local markets, thus avoiding the curse of dimensionality. Specifically, we prove regret bounds of $\widetilde{\O}\!\rbra{nT^{\nicefrac{3}{4}}}$ for monotonic demands and $\widetilde{\O}\!\rbra{nT^{\nicefrac{2}{3}}}$ for cost-concave demands. 
    Surprisingly, for cost-concave demands, our regret bound matches, up to logarithmic factors, well-known upper and lower bounds for pricing without shifting demand curves, i.e. single-item pricing~\citep{kleinberg2003value}.
    Our theoretical results are summarized in Table~\ref{tab:results}.
    \vspace{0.5em}
    \item We prove matching and nearly-matching regret lower bounds. Under monotonic demands, we show any algorithm has to incur $\Omega\!\rbra{(nT)^{\nicefrac{3}{4}}}$ regret, meaning that our algorithm is optimal up to a $n^{\nicefrac{1}{4}}$ term. For cost-concave demands, the lower bound is $\Omega\!\rbra{nT^{\nicefrac{2}{3}}}$, which follows almost directly from the lower bound proved by \citet{kleinberg2003value}.
    \vspace{0.5em}
    \item We show that our algorithms can be adapted to solve the aforementioned variants of the targeted marketing problem, without modifying their internal logic. In Section \ref{sec:otherprobs} and Appendix~\ref{app:otherprobs}, we formalize these variants and outline our reductions.
\end{enumerate}

\paragraph{Key challenges and insights.}
A key technical challenge in our problem is that the firm needs to choose a common price across all markets.
If this were not the case, i.e., if the firm could choose a non-anonymous price $p_{t,i}$ for each market $i$, the targeted marketing problem would reduce to $n$ separate, yet simple, two-variable problems: maximizing $p_{t,i}\cdot \edti(c_{t,i},p_{t,i})-c_{t,i}$, for each market $i$.
At the same time, naively treating this as a  $(n+1)$--dimensional bandit optimization problem, say by applying an  algorithm such as EXP3 \citep{auer1995gambling}, fails to exploit the problem structure,
and leads to a regret bound that is exponential in the number of local markets $n$.
In this work, we show that one can carefully decompose the targeted marketing problem into $n$ local cost optimizations, and one price optimization problem, leading to a regret bound that is linear in the number of markets.
Interestingly, this is essentially the same bound as the non-anonymous pricing case.

\begin{table}[t]
\center
\begin{tabular}{|c|c|c|}
\hline
Demand types                                                                           & Regret bounds & References                                                                \\ \hline
\begin{tabular}[c]{@{}c@{}}Non-shifting \\ demands\end{tabular}                        & $\widetilde{\Theta}(nT^{\nicefrac{2}{3}})$           & \begin{tabular}[c]{@{}c@{}}\citet{kleinberg2003value}\\ Theorem \ref{thm:lb-concave}\end{tabular} \\ \hline
\multirow{2}{*}{\begin{tabular}[c]{@{}c@{}}Monotonic\\ (Assumption 1)\end{tabular}}    & Upper bound: $\O\!\rbra{nT^{\nicefrac{3}{4}}\log T}$  & Theorem \ref{thm:alg1-regret}, Alg. 1 (Ours)                                               \\ \cline{2-3} 
                                                                                       & Lower bound: $\Omega\!\rbra{(nT)^{\nicefrac{3}{4}}}$ & Theorem \ref{thm:lb-monotonic}  (Ours)                                                             \\ \hline
\multirow{2}{*}{\begin{tabular}[c]{@{}c@{}} Cost-concave\\ (Assumption 2)\end{tabular}} & Upper bound: $\O\!\rbra{nT^{\nicefrac{2}{3}}\log T}$ & Theorem \ref{thm:alg2-regret}, Alg. 2 (Ours)                                               \\ \cline{2-3} 
                                                                                       & Lower bound: $\Omega\!\rbra{nT^{\nicefrac{2}{3}}}$ &
                                                                                       \begin{tabular}[c]{@{}c@{}}Theorem \ref{thm:lb-concave}   
                                                                                       \end{tabular}
                                                                                                                                                \\ \hline
\end{tabular}
\vspace{0.5em}
\caption{Regret bounds of bandit profit-maximization problems, where our regret is defined in \eqref{eq:regret} Here, ``Non-shifting demands'' stands for the case when demand curves do not depend on ancillary variables, which is the usual profit-maximization without marketing contexts.}
\label{tab:results}
\end{table}
\paragraph{Algorithm and upper bound proof overview.}
We summarize the chain of thoughts behind the design of our algorithms, Algorithm \ref{alg:1} in Section~\ref{subsec:alg-monotnic} for monotonic demands and Algorithm \ref{alg:2} in Section \ref{subsec:alg-concave} for cost-concave demands as follows.
\begin{enumerate}
    \item \label{itm:algintuition}
    First, let us assume that the buyer is aware
    of the demand curves $\dti$ on each round. 
    Instead of viewing this as an $n+1$ dimensional optimization problem, we can decompose the firm's profit $\sum_i (\pt \dti(\cti,\pt) -\cti)$ as follows.
    For a fixed price, we can find the best costs via $n$ separate one-dimensional optimization problems.
    We can evaluate each price in this manner and then choose the optimal price (along with the costs).
    Our algorithms are designed to exploit this decomposable structure. This decomposition into smaller optimization problems is statistically advantageous as an algorithm needs to track significantly fewer parameters.
    \vspace{0.5em}
    \item To leverage the above structure for monotonic demands, our algorithm--which builds on EXP3--maintains one distribution for the price; then, for each price in a discretized set, it maintains $n$ distributions, one for each cost. The main challenge in applying the intuition in~(\ref{itm:algintuition}) is in the design of appropriate objectives to update these distributions. Directly using unbiased estimators of local profits as objective functions---like in EXP3---tends to exploit too aggressively.
    The problem is that
    these unbiased estimators tend to have large variances,
    resulting in potentially large ($\Omega(T)$) regret. To address this issue, we carefully design (biased) objectives that have lower variance and encourage exploration.
    \vspace{0.5em}
    \item For cost-concave demands, we use the same decomposition, but leverage ideas from bandit convex optimization \cite{flaxman2005online,hazan2014bandit}. In particular, we use a similar update rule for the price distributions but adopt a modified version of the kernelized exponential weights update rule~\citep{bubeck2017kernel} for cost distributions.
\end{enumerate}

\paragraph{Regret lower bound proof overview.}
We prove a regret lower bound of $\Omega\!\rbra{(nT)^{\nicefrac{3}{4}}}$ for the targeted marketing problem under Assumption \ref{assump:monotonic}, where the demands are monotonically increasing (decreasing) with respect to the marketing expenditure (the common price).
Following standard approaches~\citep{bubeck2012regret,lattimore2020bandit},
we reduce the adversarial bandit problem to the stochastic setting and apply hypothesis testing arguments to construct the lower bound.
The main challenge is in the design of alternatives for hypothesis testing that are statistically indistinguishable but have large differences in profits. 
Due to the structure of our problem, we require the demand curves in our alternatives to be monotonically decreasing in price, monotonically increasing in cost, and be coupled via the common price.

\subsection{Related work}
\paragraph{Dynamic pricing.}
Bandit profit-maximization without shifting demands has long been studied in the context of dynamic pricing~\citep{den2015dynamic}. One of the seminal works in this area is \citet{kleinberg2003value}, who studied a setting where a single new buyer appears at each round who will purchase the product if the price $p_t$ is lower than her value $v_t$. This conforms to our setting, where the demand function is $d_t(p)=\ind\!\sbra{v_t\geq p}$. They proved a regret lower bound of $\Omega(T^{2/3})$, and showed that a straightforward adaptation of EXP3 achieves $\widetilde{\O}(T^{2/3})$ regret.

The setting of \citet{kleinberg2003value} was later generalized to more general classes of non-parametric demands~\citep{besbes2009dynamic,wang2021multimodal,cheung2017dynamic,misra2019dynamic,perakis2023dynamic} and parametric demand models~\citep{keskin2014dynamic,den2014simultaneously,besbes2015surprising,javanmard2017perishability,javanmard2019dynamic,xu2021logarithmic}. However, except for \citet{kleinberg2003value}, all work assumed that the underlying distribution of demands is fixed over time. 
In this work, allow the underlying demand distributions to be any sequence over time, reflecting drifts that may occur in real markets.

\paragraph{Bandits for marketing.}
Besides the pricing problem, various marketing problems have been studied in the bandit setting. \citet{schwartz2017customer} studied how the effectiveness of online advertisements can be enhanced by the real-world deployment of a multi-armed bandit algorithm. \citet{urban2014morphing} reformulated the traditional A/B test and suggested an online algorithm for A/B test in the profit-maximizing scenario. \citet{sawant2018contextual} developed a contextual multi-armed bandit algorithm for marketing by harnessing the underlying causal effects of marketing. Finally, it is worth mentioning that there is a series of works on morphing websites for customer acquisition \citep{hauser2009website,hauser2014website,urban2014morphing,liberali2022morphing}, related to one of the variations (Profit-maximizing A/B test) of the targeted marketing problem.

To the best of our knowledge, no prior work studies the targeted marketing problem akin to our setting. However, it is worth mentioning that in some recent work, \citet{jain2023effective} apply Thompson sampling to a pricing problem, while also optimizing for additional variables such as promotions and advertising.
There are three clear differences between this work and ours.
First, while their focus is on personalized (non-anonymous) prices, our focus is on anonymous prices; in our setting, this couples the local markets making the problem challenging.
Second, their work is in the stochastic setting under a parametric model, whereas we are in an adversarial setting under a non-parametric model.
Third, unlike them, our focus is on developing efficient algorithms that exploit the decomposable structure of the problem.
\section{Problem setting}\label{sec:setting-results}
In this section, we formally describe the problem setting.

\paragraph{Notation.}
For any $n\in\nat$, let $[n]:=\{1,\dots,n\}$. Let $\ind[\cdot]$ be the indicator function. For any set $S$ measurable in a probability space, let $\Delta(S)$ denote the set of probability distributions over $S$. For a compact set $S\subseteq \rr^m$ or a finite set $S$, let $\uniform(S)$ denote the uniform distribution over $S$.
Unless the scope is explicitly specified, any expectation $\expec$ is taken over all randomness.

\paragraph{Problem setting.}
Our online learning problem is formulated as follows. Let there be $n$ local markets indexed by $i\in[n]$.
On each round, a firm (seller) chooses
$(\ct,\pt):=(c_{t,1},\dots,c_{t,d},\pt)\in[0,1]^{n+1}$, 
where $\pt$ denotes the \emph{common price} on round $t$ and $\cti$ denotes the marketing expenditure for market $i$.
At the end of the round, the firm observes the local demands (e.g., volume of sales)
$d_t:=(d_{t,1},\dots,d_{t,n})\in[0,1]^n$ in each market, which depend on the chosen $(\ct, \pt)$.
By normalizing prices, marketing expenditures, and demands, we assume that their ranges are in $[0,1]$. 

\emph{Environment. } An environment is a sequence of mappings $\{D_t\}_{t\in\nat}$, chosen possibly by an oblivious adversary. Here $D_t:[0,1]^{n+1}\rightarrow\Delta([0,1]^n)$
maps the chosen costs and price to a distribution over demands.
If the firm chooses costs and price $(\ct, \pt)$ on round $t$,
the realized demands $d_t:=\{d_{t,i}\}_{i=1}^n$
are simply random variables whose joint distribution is $D_t(\ct, \pt)$.

We assume that the expected demand in market $i$ on round $t$ depends \emph{only} on the price $\pt$ and the marketing expenditure $\cti$ for that market\footnote{%
This condition is implied by, albeit weaker than
the conditional independence condition $d_{t,i}\perp d_{t,j} | \ct,\pt$ for any two markets $i,j\in[n]$.
It is also considerably weaker than assuming that the demand $\dti$ for market $i$ on round $t$ is a deterministic function of $\cti$ and $\pt$,
where $\{\dti\}_{t,i}$ are chosen adversarially.
}.
To state this explicitly,
let $\ct:=(c_{t,1},\dots,c_{t,i}, \dots, c_{t,d})$
and $\ct':=(c'_{t,1},\dots,c_{t,i}, \dots, c'_{t,d})$
be two sets of marketing costs that differ in all but the $i^{\rm{th}}$ market.
We than have $\expec_{d_t\sim D_t(\ct, \pt)}[\dti] = \expec_{d_t\sim D_t(\ct', \pt)}[\dti]$ for all such $\ct$ and $\ct'$ and all $\pt$.
We will denote this expected value by $\edti(\cti, \pt)$
where we view $\edti:[0,1]^2\rightarrow[0,1]$ as a function that maps the cost for a market and the price to the expected demand for that market.
Note that only the realized values $\{\dti\}_{i\in[n]}$ and not the expected values $\edti(\cti,\pt)$ are revealed at the end of the round.

\emph{Algorithm. } At the beginning of round $t$, the firm has the history of
previous prices, costs, and observed demands $\{(c_\tau, p_\tau, d_\tau)\}_{\tau=1}^{t-1}$.
An algorithm for the firm can be viewed as a map from this history to a  distribution $q_t\in\Delta([0,1]^{n+1})$ over the $n$ marketing expenditures and price; then the firm samples $(c_t,p_t)\sim q_t$ and executes $(c_t,p_t)$.

\paragraph{Regret.} The random variable representing the total profit made by the firm after $T$ rounds is
\begin{align*}
    \alg_T:=\sum_{t,i\in[T]\times[n]}\profit_{t,i},
     \quad\quad\text{where}\ \ \profit_{t,i}:=\pt\cdot\dti-\cti.
     \numberthis \label{eqen:algT}
\end{align*}
We compare the expectation of $\alg_T$ with the best profit $\opt_T$ achievable in expectation by the optimal fixed marketing expenditures and a common price in hindsight.
\begin{align*}
    \opt_T:=\sup_{(c,p)\in[0,1]^{n+1}}\sum_{t,i\in[T]\times[n]}\eprofit_{t,i}(c_i,p), \quad\text{where }\; \eprofit_{t,i}(c_i,p):=p\cdot\edti(c_i,p)-c_i.
\end{align*}
Hence, the algorithm's expected regret after round $T$ is defined as
\begin{align}
    R_T:=\opt_T-\mathbb{E}\big[\alg_T\big],
    \label{eq:regret}
\end{align}
where the expectation is taken over the randomness of the environment and the algorithm's choices.
We wish to bound $R_T$ over any sequence of mappings $\{D_t\}_{t}$ chosen by an oblivious adversary, 
which induces a sequence of expected demand functions $\{\edti\}_{t,i}$.

\paragraph{Assumptions.}
We present bandit profit-maximization algorithms for the targeted marketing problem under two assumptions on the expected demand functions $\{\edti\}_{t,i}$, which in turn imply conditions on $\{D_t\}_t$. The first assumes that $\edti(c_i,p)$ is monotonic with respect to $c_i$ and $p$. This captures the natural intuition that the demand increases with marketing and decreases with price.

\begin{assumption}[\textbf{Monotonic demands}]
    \emph{For each $t\in[T]$ and $i\in[n]$, the expected demand function $\edti(c_i,p)$ is non-decreasing in $c_i$ and non-increasing in $p$.}
    \label{assump:monotonic}
\end{assumption}

Another natural assumption is that $\edti$ is concave with respect to $c_i$. In other words, demand exhibits diminishing returns as we increase marketing costs. While it is natural to assume that the demand is also non-decreasing with these costs, it is not necessary for our analysis.

\begin{assumption}[\textbf{Cost-concave demands}]
    \emph{For each $t\in[T]$ and $i\in[n]$, the expected demand function $\edti(c_i,p)$ is concave in $c_i$ and non-increasing in $p$.}
    \label{assump:costconcave}
\end{assumption}

In Section \ref{sec:monotonic}, we present a no-regret algorithm, upper bound, and lower bound for Assumption~\ref{assump:monotonic}.
In Section \ref{sec:cost-concave}, we do the same for Assumption~\ref{assump:costconcave}.
\section{Targeted marketing with monotonic demands}\label{sec:monotonic}

In this section, we first present our algorithm
in Section~\ref{subsec:alg-monotnic}. In Section \ref{subsec:ub-monotnic}, 
we upper bound its regret, and in Section \ref{subsec:lb-monotinic}, we prove a nearly-matching lower bound on the regret.
\subsection{Algorithm for monotonic demands}\label{subsec:alg-monotnic}

\paragraph{EXP3 review.}
We begin with a brief review of the EXP3 algorithm for adversarial multi-armed bandits~\citep{auer2002nonstochastic}.
A learner sequentially chooses one of $K$ arms over a series of rounds. Pulling arm $i$ on round $t$ incurs loss $\ell_{t}(i)$. The learner only observes the loss $\ell_{t}(a_t)$ for the arm $a_t$ pulled on round $t$.
The goal is to minimize regret with respect to the best arm in hindsight: $\sum_{t=1}^T \ell_{t}(a_t) - \min_{i\in[K]}\sum_{t=1}^T\ell_{t}(i)$.
EXP3 maintains a distribution $q_t=(q_{t}(1),\dots,q_{t}(K))\in\Delta([K])$ over the $K$ arms, and samples an arm $a_t$ from $q_t$. At the end of the round, it updates the distribution as
follows: $q_{t+1}(i) \propto q_{t}(i)\cdot\exp(-\eta\,\widehat{\ell_t}(i))$, where $\widehat{\ell_t}(i):=\ind[a_t=i]\ell_t(i)/q_t(a_t)$.
This update reduces the probability that arm $a_t$ is selected in a future round by an amount depending on the observed loss $\ell_t(a_t)$ and the probability $q_t(a_t)$ of selecting $a_t$.
If an arm consistently achieves large losses, it will be heavily discounted and be chosen rarely in the future.
EXP3 achieves $\widetilde{\O}\big(\sqrt{KT}\big)$ regret.

A straightforward, yet inefficient solution to the targeted marketing problem is to discretize the price and cost space and apply EXP3.
If the size of the discretization is $K$ along each dimension, then there are $K^{n+1}$ arms, so EXP3's regret when competing with the best price and costs in the discretization is $\widetilde{\O}\big(\sqrt{K^{n+1} T}\big)$.
If we additionally account for the regret due to discretization, which is $\O(T/K)$, and optimize for $K$, the regret is $\widetilde{\O}\big(T^\frac{n+2}{n+3}\big)$, which scales poorly with $n$.

\paragraph{Our method.}
To improve beyond the above approach, we exploit our problem's structure and decompose the problem into several simpler problems.
To illustrate, assume the firm knows the
expected demand curves $\{\edti\}_{i\in[n]}$. To optimize the expected profit 
$\sum^n_{i=1}p\, \edti(c_i,p)-c_i$ over the costs $\{c_i\}_{i\in[n]}$ and price $p$, we can fix a  candidate price $p$ and optimize $p\,d_i(c_i,p)-c_i$ for each $c_i$. We can repeat this for each $p$ and output the optimal price and the corresponding costs.
While the demand curves $\{\edti\}_{i\in[n]}$ are unknown in the bandit setting, our algorithm leverages this intuition.

\begin{algorithm}[t]
    \caption{Algorithm for monotonic demands}\label{alg:1}
    \textbf{Inputs:} learning rate $\eta > 0$,
    bias control parameter $\gamma>0$, discretization size $K\in\nat$.\\
    Let $\I_K:=\{0,K^{-1},2K^{-1},\dots,1\}\subset[0,1]$ be a discretized unit interval.\\
    $q_{1,0}(p)\leftarrow\uniform(\I_K)$, $q_{1,i}(c_i|p)\leftarrow\uniform(\I_K)$ for each $i\in[n],\,p\in\I_K$.\\
    \For{$t=1,\dots,T$}{
        Sample $\pt\sim \qtz$.
        Then sample $\cti\sim \qti(\,\cdot\,|\,\pt)$ for each $i\in[n]$.\label{line:alg1-sample}\\
        From observations $\{\dti\}_{i\in[n]}$ compute $\{\lti\}_{i\in[n]}$ according to \eqref{eq:lti}.\\
        For each $i\in[n], p\in \I_K$, update cost distributions $q_{t+1,i}(c_i|\,p)\propto \qti(c_i|\,p)\cdot \exp\rbra{-\eta\,\lossF_{t,i}(c_i,p)}$, where $\lossF_{t,i}(c_i,p)$ is defined in \eqref{eq:alg1-fti}.\label{line:alg1-qti}\\
        For each $p\in \I_K$, update price distribution $q_{t+1,0}(p)\propto \qtz(p)\cdot\exp\rbra{-\eta\,\lossH_t(p)}$, where $\lossH_t(p)$ is defined in \eqref{eq:alg1-ht}.\label{line:alg1-qt0}
    }
\end{algorithm}

To describe our algorithm (Algorithm~\ref{alg:1}), let $K\in \nat$ be the discretization size we will choose shortly.
Let $\I_K:=\{0,K^{-1},2K^{-1},\dots,1\}$ be a uniformly discretized unit interval of size $K$.
We maintain a price distribution $\qtz\in\Delta(\I_K)$, and
cost distributions $\qti(\,\cdot\,|\,p)\in\Delta(\I_K)$ for each  $p\in\I_K$ and $i\in[n]$.
In this approach, we need to track $K$ parameters for $\qtz$, and $K$ parameters for $\qti(\,\cdot\,|\,p)$ for each $i\in[n]$ and $p\in \I_K$, leading to a total of $K + nK^2$ parameters. This is significantly fewer than the naive application of EXP3 explained above, where we would need to track $K^{n+1}$ parameters.

On each round $t$, in line~\ref{line:alg1-sample}
of Algorithm~\ref{alg:1}, a price $p_t\in\I_K$ is first sampled from  $\qtz(\,\cdot\,)$.
We then sample the cost $\cti$ for each market $i\in[n]$ from $\qti(\,\cdot\,|\,\pt)$.
Finally, the algorithm updates $\qti$ to $q_{t+1,i}$ for $i\in\{0\}\cup[n]$ using the observed demand $\{d_{t,i}\}_{i\in[n]}$.

\paragraph{Loss functions.}
We first define the normalized (observed) losses on round $t$, which we will use in updating the distributions $\qtz, \qti$. We have:
\begin{align*}
    &\lti:=\frac{1}{2}\left(1-\profit_{t,i}\right),
    \hspace{0.5in}\lt:=\sum_{i\in[n]}\lti,\numberthis\label{eq:lti}
\end{align*}
Here, $\lti$ is the observed loss for market $i$ on round $t$,
while $\lt$ is the total loss in round $t$.
By normalizing, we ensure
$\lti,\,\elti(\cdot,\cdot)\in[0,1]$ and $\lt,\,\elt(\cdot,\cdot)\in[0,n]$, a technical condition required for the updates.
Both $\elti$ and $\lt$ can be computed as the realized demands $\dti$ are observed (see~\eqref{eqen:algT}).

For what follows, we will also find it useful to define the true losses $\elti(c_i,p)$ and $\elt(c, p)$ below.
Note that $\elti$ and $\elt$ are unknown, since our observations $\{\dti\}_i$ and hence $\{\lti\}$ are stochastic, but more importantly since we observe realized $\dti,\lti$ values at our chosen $(\cti,\pt)$. 
We have:
\begin{align*}
    \elti(c_i,p):=\frac{1}{2}\left(1-\eprofit_{t,i}(c_i,p)\right),\hspace{0.25in} \elt(c,p):=\elt((c_1,\dots,c_n),p):=\sum_{i\in[n]}\elti(c_i,p).\numberthis\label{eq:elti}
\end{align*}

We now present the loss functions $\lossF_{t,i}$ and $\lossH_{t}$ used in lines~\ref{line:alg1-qti} and~\ref{line:alg1-qt0} to update the distributions.
We first describe them, and then later motivate our design.
Let $\gamma>0$ be a bias control parameter whose value we will specify later.
First, $\lossF_{t,i}$ serves as an estimator for $\elti$, and is used to update $\qti$: 
\begin{align*}
    \lossF_{t,i}(c_i,p):=\frac{\lti\,\ind[\cti=c_i,\pt=p]}{\qti(\cti|\pt)(\qtz(\pt)+\gamma)}.
    \numberthis\label{eq:alg1-fti}
\end{align*}
Here, $\cti,\pt\in\I_K$ were chosen in Line \ref{line:alg1-sample} of Algorithm \ref{alg:1}.
A straightforward calculation reveals that,
as $\gamma\to 0$, we have $\expec_t\!\big[\lossF_{t,i}(c_i,p)\big]\to\elti(c_i,p)$ for any $c_i,p\in\I_K$, 
where the expectation $\expec_t$ is with respect to the environment's stochasticity and the algorithm's random choice in line~\ref{line:alg1-sample}, both on round $t$.
Next, consider $\lossH_{t}(\cdot)$ for updating the pricing distribution $\qtz$, which is given by,
\begin{align*}
    \lossH_t(p):=\frac{1}{n}\rbra{\frac{\lt\,\ind\!\sbra{\pt=p}}{\qtz(\pt)+\gamma}}+\eta|\I_K|\rbra{\frac{1}{\gamma}-\frac{1}{\qtz(p)+\gamma}}
    \numberthis\label{eq:alg1-ht}.
\end{align*}
Here, $\lt$ is as defined in~\eqref{eq:lti}.
As before,
a straightforward calculation reveals that,
as $\gamma\to 0$, we have
\begin{align*}
    \mathbb{E}_t\sbra{\frac{\lt\,\ind\!\sbra{\pt=p}}{\qtz(\pt)+\gamma}}\to\sum_{c_i\in\I_K\,\forall\,i\in[n]}q_{t,1}(c_1|p)\cdots q_{t,n}(c_n|p)\,\elt((c_1,\dots,c_n),p),
\end{align*}
for any $p\in\I_K$ (see~\eqref{eq:elti}).
In other words, the first term of $\lossH_t(p)$ serves as an estimate for the expected loss $\sum_i\elti(c_i, p)$, when the $c_i$ values are also sampled from the $\qti$ distributions. Meanwhile, the second term is an exploration bonus term \footnote{Or equivalently, an exploitation penalty.} as it is small for less explored prices: $p\text{ s.t. }\qtz(p)\ll 1$.

\emph{Design choices. } We will now motivate the design of~\eqref{eq:alg1-fti} and~\eqref{eq:alg1-ht}.
On the one hand, choosing a small $\gamma$ reduces the
bias in both $\lossF_{t,i}$ and $\lossH_{t,i}$. However, this also results in a potentially large variance, as $(\inf_{p\in\I_K}\qtz(p))^{-1}$ may be small.
In particular, since we have decomposed the problem into smaller optimization problems, there is no ``variance cancellation'' effect that usually arises in the standard EXP3 analysis~\citep{auer2002nonstochastic}.
The role of the second term in~\eqref{eq:alg1-ht}, and $\gamma$ in the denominator of~\eqref{eq:alg1-fti} and~\eqref{eq:alg1-ht} is to induce a more favorable bias-variance trade-off in the decomposed problem.

An alternative interpretation of this design choice is in terms of the exploration-exploitation trade-off.
For this, recall---from the overview of EXP3---that the purpose behind the update in line~\ref{line:alg1-qt0} is to reduce the probability that the same price $p_t$ would be chosen in a future round, by an amount depending on the observed loss $\elt$. However, this loss also depends on the costs $\cti$ chosen. Hence, even if $\elti$ is large, we should not be quick to dismiss $\pt$ since there may be other costs that yield a high profit at the same price.
The loss $\lossH_{t}$ ensures that we are more liberal with exploring a chosen price $\pt$ even if it incurred large losses; in particular, as the second term in the RHS of~\eqref{eq:alg1-ht} is applied to all prices, and not just $\pt$, its effect is that $\pt$ is not discounted as heavily as it otherwise would have.
The role of $\gamma$, from a mathematical perspective, is to ensure that $\lossH_{t}$ is bounded.

\subsection{Regret upper bound for monotonic demands}\label{subsec:ub-monotnic}

We will now state and prove the following upper bound on the regret for Algorithm \ref{alg:1}.
Due to space constraints, we will defer the proofs of some intermediate technical results to the Appendix.

\begin{restatable}{theorem}{AlgOneRegret}
    For any $\eta>0$ and $K\in\nat$, when $\gamma:=\eta$, the regret~\eqref{eq:regret} of Algorithm \ref{alg:1} satisfies,
    \begin{align*}
        R_T\leq \O\rbra{n\eta K^2T+\frac{n}{\eta}\log K+\frac{nT}{K}}.
    \end{align*}
    By choosing $K\in\Theta(T^{\nicefrac{1}{4}})$ and $\eta\in\Theta(T^{-\nicefrac{3}{4}})$, Algorithm \ref{alg:1} guarantees $R_T\in\O(nT^{\nicefrac{3}{4}}\log T)$.
    \label{thm:alg1-regret}
\end{restatable}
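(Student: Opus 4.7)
The plan is to combine a discretization step (from Assumption~\ref{assump:monotonic}) with an exponential-weights (EXP3-style) potential analysis that carefully tracks the biases and variances of the designed estimators $\lossF_{t,i}$ and $\lossH_t$.

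\textbf{Discretization and decomposition.} Fix any optimizer $(c^\star,p^\star)\in[0,1]^{n+1}$. Let $\tilde p\in\I_K$ be the largest grid point at most $p^\star$ and $\tilde c_i\in\I_K$ the smallest grid point at least $c_i^\star$. By Assumption~\ref{assump:monotonic}, $\edti(\tilde c_i,\tilde p)\geq\edti(c_i^\star,p^\star)$, and then $\eprofit_{t,i}(\tilde c_i,\tilde p)\geq\eprofit_{t,i}(c_i^\star,p^\star)-\O(1/K)$, yielding an $\O(nT/K)$ discretization regret (the third term of the bound). Using $\elti=(1-\eprofit_{t,i})/2$, the (normalized) on-grid regret splits into a cost regret and a price regret,
\begin{align*}
\mathbb{E}\Bigl[\sum_t\elt(c_t,\pt)-\sum_t\elt(c^\star,p^\star)\Bigr] = \mathbb{E}\Bigl[\sum_t(\elt(c_t,\pt)-\elt(c^\star,\pt))\Bigr] + \mathbb{E}\Bigl[\sum_t(\elt(c^\star,\pt)-\elt(c^\star,p^\star))\Bigr],
\end{align*}
mirroring the algorithmic split into the cost distributions $\qti(\cdot\mid p)$ and the price distribution $\qtz$, to which I would apply the EXP3 potential bound separately.

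\textbf{Estimator analysis.} A direct calculation gives $\mathbb{E}_t[\lossF_{t,i}(c_i,p)]=\elti(c_i,p)\cdot\qtz(p)/(\qtz(p)+\gamma)$, so each use of $\lossF_{t,i}$ is biased by $\qtz(p)/(\qtz(p)+\gamma)\in(0,1]$, contributing aggregate bias $\O(n\gamma T)$. For the variance, using the identity $\sum_{c_i}\qti(c_i\mid p)\lossF_{t,i}(c_i,p)=\ind[\pt=p]\lti/(\qtz(p)+\gamma)$ and $\qtz(p)^2/(\qtz(p)+\gamma)^2\leq 1$, I would show $\mathbb{E}[\langle q_t,\lossF_{t,i}^2\rangle]\leq\O(K^2)$ per round per market. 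The estimator $\lossH_t$ is subtler: its second term depends on $p$ only through $\qtz(p)$ and is applied to all prices uniformly in the exponential-weights update; it plays a dual role of absorbing the bias from the $1/(\qtz(p)+\gamma)$ denominator in its first term, and acting as an exploration bonus that keeps under-sampled prices from being dismissed too aggressively. Under $\gamma=\eta$, the squared exploration-bonus contribution $(\eta K/\gamma)^2=K^2$ matches the first-term variance, giving $\mathbb{E}[\langle\qtz,\lossH_t^2\rangle]=\O(K^2)$ per round. Cross terms are handled by Cauchy--Schwarz.

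\textbf{Combining.} The EXP3 potential bound applied to each update, converted back to the true regret using the bias-variance relations above (in particular multiplying the $\lossH_t$-EXP3 bound by $n$ to undo the $1/n$ factor in its first term), produces a total entropy cost of $\O((n/\eta)\log K)$ and a total variance cost of $\O(\eta nK^2T)$. Combined with the discretization and bias contributions, and setting $\gamma=\eta$,
\begin{align*}
R_T \leq \O\!\left(n\eta K^2T + \frac{n\log K}{\eta} + \frac{nT}{K}\right),
\end{align*}
and balancing $K=\Theta(T^{1/4})$, $\eta=\Theta(T^{-3/4})$ equates the three terms to give $R_T\in\O(nT^{3/4}\log T)$. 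The main obstacle I anticipate is the careful bookkeeping of how the $K$ conditional distributions per market aggregate to only $\log K/\eta$ in the entropy term (rather than $K\log K/\eta$): this requires exploiting the zero-pattern of $\lossF_{t,i}$ on rounds $\pt\neq p$ together with the exploration-bonus design of $\lossH_t$, which are exactly the design choices that allow the algorithm to achieve a regret linear in $n$.
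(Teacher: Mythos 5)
Your overall architecture (discretization cost $\O(nT/K)$ via monotonicity, EXP3 potential bounds for the cost and price distributions, and the final parameter balancing) matches the paper, and your computation of the estimator bias $\expec_t[\lossF_{t,i}(c_i,p)]=\elti(c_i,p)\,\qtz(p)/(\qtz(p)+\gamma)$ is correct. However, there is a genuine gap in your variance analysis. For a \emph{fixed} comparator price $p^\star$, the second-moment term satisfies only $\expec\sbra{\innp{\qti,\lossF^2_{t,i}}_{p^\star}}\leq\expec\sbra{|\I_K|/(\qtz(p^\star)+\gamma)}$ (the paper's Lemma~\ref{lem:alg1-step3-1}); this is \emph{not} $\O(K^2)$, since $\qtz(p^\star)$ can be driven arbitrarily small by the price-level updates, and the only unconditional bound is $|\I_K|/\gamma=K/\eta$, which after multiplication by $\eta$ in the EXP3 bound yields an unacceptable $\Omega(KT)$ contribution. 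The usual EXP3 variance cancellation ($\expec_t[1/\qtz(\pt)]=|\I_K|$) is unavailable here precisely because $p^\star$ is fixed rather than sampled---this is what the paper means by the loss of ``variance cancellation'' in the decomposed problem. The actual role of the second term of $\lossH_t$ in \eqref{eq:alg1-ht} is to cancel this cost-level variance term: when the price-level EXP3 bound is evaluated at $p^\star$, the $-\eta|\I_K|/(\qtz(p^\star)+\gamma)$ contribution exactly offsets the $\eta\,\expec\sbra{|\I_K|/(\qtz(p^\star)+\gamma)}$ arising from the cost-level analysis (compare Lemmas~\ref{lem:alg1-step3-2} and~\ref{lem:alg1-comparator-loss-bound}). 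You describe this term as absorbing the bias of $\lossH_t$'s own first term and as an exploration bonus; that misses the cross-level cancellation that makes the whole argument work.

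Relatedly, your decomposition into ``cost regret at the sampled prices'' plus ``price regret at the comparator costs'' does not mirror what the algorithm controls, and it is the source of the $K\log K/\eta$ entropy blow-up you flag as an anticipated obstacle. The paper avoids this by chaining through the single intermediate quantity $\sum_t\innp{\qtz,\lossH_t}$: the algorithm's loss is upper-bounded by it (Lemma~\ref{lem:alg1-algorithm-loss-bound}), the price-level EXP3 bound relates it to $\sum_t\lossH_t(p^\star)$, and $\lossH_t(p^\star)$ is in turn related to $\elt(c^\star,p^\star)$ by invoking the cost-level EXP3 bound \emph{only at the single price $p^\star$} (Lemma~\ref{lem:alg1-comparator-loss-bound}). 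Since $\lossH_t(p)$ estimates the loss at price $p$ with costs drawn from the current $\qti(\cdot\,|\,p)$, no cost-level regret at prices other than $p^\star$ is ever needed, and only one $\frac{1}{\eta}\log|\I_K|$ entropy term per market appears. Your plan would need to be restructured along these lines before the claimed bound follows.
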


\noindent\textbf{Proof of Theorem \ref{thm:alg1-regret}.}
First, we introduce some notation. For a conditional distribution $q(c|p)$ over the costs in $\I_K$, given a price $p\in\I_K$ and a function $g(c,p)$ over $\I_K^2$, let $\innp{q,g}_p:=\sum_{c\in\I_K}q(c|p)g(c,p)$. Similarly, for a distribution $q(p)$ over prices in $\I_K$, and a function $g(p)$ over $\I_K$, let $\innp{q,g}:=\sum_{p\in\I_K}q(p)g(p)$. Throughout this proof, we set $\gamma:=\eta$ as stated in the theorem, but occasionally do not cancel out factors like $\gamma/\eta$  for convenience of exposition of Section \ref{sec:otherprobs}.

\paragraph{Step 1. [Preparation]}
We begin with a standard regret analysis of the exponential weights (a.k.a Hedge) algorithm~\citep{arora2012multiplicative}. Algorithm 1 runs exponential weights updates over effective losses $\{\lossF_{t,i}\}_{t\in[T]}$ for each $i\in[n]$ and $\{\lossH_t\}_{t\in[T]}$.
The following lemma states regret bounds for $\{\lossF_{t,i}\}_{t\in[T]}$ and $\{\lossH_t\}_{t\in[T]}$ when competing with any $(c^\star_i,p^\star)\in\I_K^2$.
For completeness, We give its proof, which is
similar to the standard exponential weights analysis, in Appendix \ref{proof:lem:alg1-exp-bounds}.

\begin{restatable}{lemma}{LemAlgOneEXPBounds}(Based on~\citet{arora2012multiplicative})
    For each $i\in[n]$, any $(c^\star_i,p^\star)\in\I_K^2$ and $\eta>0$,
    \begin{align*}
        &\sum_{t\in[T]}\innp{\qti\,,\lossF_{t,i}}_{p^\star}-\sum_{t\in[T]}\lossF_{t,i}(c^\star_i,p^\star)\leq \eta\sum_{t\in[T]}\innp{\qti\,,\lossF^2_{t,i}}_{p^\star}+\frac{1}{\eta}\log|\I_K|,\numberthis\label{eq:alg1-exp-bounds-1}\\
        &\sum_{t\in[T]}\innp{\qtz\,,\lossH_t}-\sum_{t\in[T]}\lossH_t(p^\star)\leq \eta\sum_{t\in[T]}\innp{\qtz\,,\lossH^2_t}+\frac{1}{\eta}\log|\I_K|,
        \numberthis\label{eq:alg1-exp-bounds-2}
    \end{align*}
    where $\qti$, $\qtz$ are distributions given in Lines \ref{line:alg1-qti} and \ref{line:alg1-qt0} in Algorithm \ref{alg:1}, respectively.
    \label{lem:alg1-exp-bounds}
\end{restatable}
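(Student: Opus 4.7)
The plan is to use the standard potential-function argument for the exponential weights (Hedge) update, applied separately to the two update rules in Lines~\ref{line:alg1-qti} and~\ref{line:alg1-qt0}. Since the two bounds are structurally identical, I focus on the first; for a fixed $i \in [n]$ and $p^\star \in \I_K$, note that the update in Line~\ref{line:alg1-qti} restricted to the slice $p = p^\star$ is exactly Hedge over $\I_K$ with per-round losses $\lossF_{t,i}(\cdot, p^\star)$. The second bound then follows verbatim with $\qtz$ and $\lossH_t$ replacing $\qti(\cdot\mid p^\star)$ and $\lossF_{t,i}(\cdot, p^\star)$.

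First I would introduce the unnormalized weights $w_t(c) := \exp\!\bigl(-\eta \sum_{s < t} \lossF_{s,i}(c, p^\star)\bigr)$ and the normalizer $W_t := \sum_{c \in \I_K} w_t(c)$, so that $\qti(c \mid p^\star) = w_t(c)/W_t$. Uniform initialization gives $W_1 = |\I_K|$. The per-step ratio is
\[
\frac{W_{t+1}}{W_t} \;=\; \sum_{c \in \I_K} \qti(c \mid p^\star)\, \exp\!\bigl(-\eta\,\lossF_{t,i}(c, p^\star)\bigr).
\]
Since $\lti \in [0,1]$ by the normalization in~\eqref{eq:lti} and every remaining factor in~\eqref{eq:alg1-fti} is non-negative, $\lossF_{t,i}(c, p^\star) \geq 0$. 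I would then apply the elementary inequality $e^{-y} \leq 1 - y + y^2$, which holds for all $y \geq 0$, term by term in the sum to get
\[
\frac{W_{t+1}}{W_t} \;\leq\; 1 - \eta \innp{\qti,\lossF_{t,i}}_{p^\star} + \eta^2 \innp{\qti,\lossF_{t,i}^2}_{p^\star}.
\]
Taking logarithms, using $\log(1+x) \leq x$, and telescoping $\sum_{t=1}^T (\log W_{t+1} - \log W_t)$ yields an upper bound on $\log W_{T+1} - \log W_1 = \log W_{T+1} - \log |\I_K|$. For the lower bound, keep only the $c = c^\star_i$ term in the defining sum of $W_{T+1}$, obtaining $\log W_{T+1} \geq \log w_{T+1}(c^\star_i) = -\eta \sum_t \lossF_{t,i}(c^\star_i, p^\star)$. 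Rearranging and dividing by $\eta$ gives~\eqref{eq:alg1-exp-bounds-1}.

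For~\eqref{eq:alg1-exp-bounds-2}, the same argument goes through once one verifies $\lossH_t(p) \geq 0$ for every $p \in \I_K$. The first term in~\eqref{eq:alg1-ht} is manifestly non-negative, and the second term $\eta |\I_K|\bigl(1/\gamma - 1/(\qtz(p) + \gamma)\bigr)$ is non-negative because $\qtz(p) \geq 0$ implies $\qtz(p) + \gamma \geq \gamma > 0$. This non-negativity check for the exploration-encouraging additive term is the only non-mechanical step; the rest is a textbook Hedge computation. No boundedness of the losses is needed because the inequality $e^{-y} \leq 1 - y + y^2$ is valid for arbitrarily large non-negative $y$, which is precisely the regime our importance-weighted estimators $\lossF_{t,i}$ and $\lossH_t$ occupy.
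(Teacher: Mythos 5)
Your proposal is correct and is essentially the same potential-function (Hedge) argument as the paper's proof: your $\log(W_{t+1}/W_t)$ is exactly the paper's telescoping term $B_{t,i}(p)$, and your per-step bound via $e^{-y}\leq 1-y+y^2$ plays the role of the paper's bound on $A_{t,i}(p)$ via $\log x\leq x-1$ and $e^{-x}-1+x\leq x^2/2$ (the paper gets the slightly sharper constant $\eta/2$, which it then relaxes to $\eta$ in the statement). Your explicit check that $\lossH_t\geq 0$ is a worthwhile addition that the paper leaves implicit here.
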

The bounds in Lemma \ref{lem:alg1-exp-bounds} hold uniformly over all choices of $(c^\star,p^\star)\in\I^{n+1}_K$.
Moreover, $\lossF_{t,i}$ and $\lossH_{t,i}$ are random quantities as they depend on random variables realized up to round $t-1$.
In subsequent steps, we will take expectations of these bounds with respect to all randomness. 

\paragraph{Step 2. [Regret decomposition]}
First, using the definition of losses in~\eqref{eq:lti} and~\eqref{eq:elti}, we will decompose our regret $R_T$~\eqref{eq:regret} as follows,
\begin{align*}
    R_T&:=\sup_{(c^\star,p^\star)\in[0,1]^{n+1}}\sum_{t,i\in[T]\times[n]}\eprofit_{t,i}(c^\star_i,p^\star)-\expec\bigg[\sum_{t,i\in[T]\times[n]}\profit_{t,i} \bigg]\\
    &=2\Big(\sum_{t\in[T]}\expec[\lt]-\inf_{(c^\star,p^\star)\in[0,1]^{n+1}}\sum_{t\in[T]}\elt(c^\star,p^\star)\Big)\\
    &= 2\Big(\sum_{t\in[T]}\expec[\lt]-\min_{(c^\star,p^\star)\in\I_K^{n+1}}\sum_{t\in[T]}\elt(c^\star,p^\star)\Big)\\&\hspace{0.75in}
    +2\Big(\min_{(c^\star,p^\star)\in\I_K^{n+1}}\sum_{t\in[T]}\elt(c^\star,p^\star)-\inf_{(c^\star,p^\star)\in[0,1]^{n+1}}\sum_{t\in[T]}\elt(c^\star,p^\star) \Big)\\
    &:=\max_{(c^\star,p^\star)\in\I_K^{n+1}}R_T(c^\star,p^\star)+\widetilde{R}_T(n,K).
    \numberthis \label{eq:alg1-regretdecomp}
\end{align*}
In the third step, we have added and subtracted $\min_{(c^\star,p^\star)\in\I_K^{n+1}}\sum_{t\in[T]}\elt(c^\star,p^\star)$.
In the fourth step,  we have defined $R_T(c^\star,p^\star):=2\rbra{\sum_{t\in[T]}\expec\sbra{\lt}-\sum_{t\in[T]}\elt(c^\star,p^\star)}$
to be the regret of our algorithm relative to a given set of costs and prices $(c^\star,p^\star)$ in the discretization.
Moreover, 
\[
\widetilde{R}_T(n,K) := 2\Big(\min_{(c^\star,p^\star)\in\I_K^{n+1}}\sum_{t\in[T]}\elt(c^\star,p^\star)-\inf_{(c^\star,p^\star)\in[0,1]^{n+1}}\sum_{t\in[T]}\elt(c^\star,p^\star) \Big)
\]
denotes the residual regret due to discretization.
In the following lemma,
we bound $\widetilde{R}_T(n,K)$ via a simple argument.
Its proof is given in Appendix \ref{proof:lem:alg1-discretization-error}.
\begin{restatable}{lemma}{LemAlgOneDiscretizationError}
    Under Assumption \ref{assump:monotonic}, $\widetilde{R}_T(n,K)\leq \frac{2nT}{K}$.
    \label{lem:alg1-discretization-error}
\end{restatable}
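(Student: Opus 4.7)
The plan is to produce, for every continuous choice $(c,p)\in[0,1]^{n+1}$, a discretized surrogate $(\tilde c,\tilde p)\in\I_K^{n+1}$ whose total loss exceeds that of $(c,p)$ by at most $nT/K$. Since this would give the inequality uniformly, taking the infimum over $(c,p)\in[0,1]^{n+1}$ on the right-hand side and the minimum over $(\tilde c,\tilde p)\in\I_K^{n+1}$ on the left yields the desired $\widetilde R_T(n,K)\le 2nT/K$ after plugging into the definition.

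The surrogate is dictated by the monotonicity in Assumption~\ref{assump:monotonic}. I would round the price \emph{down} to $\tilde p:=\lfloor Kp\rfloor/K$, so $\tilde p\le p\le \tilde p+1/K$, and round each cost \emph{up} to $\tilde c_i:=\lceil Kc_i\rceil/K$, so $c_i\le \tilde c_i\le c_i+1/K$. With this choice, $\ed_{t,i}$ can only increase at $(\tilde c_i,\tilde p)$ compared to $(c_i,p)$, because lowering $p$ is non-decreasing in demand and raising $c_i$ is non-decreasing in demand. Hence $\ed_{t,i}(\tilde c_i,\tilde p)\ge \ed_{t,i}(c_i,p)$.

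Next I would track how much the per-market profit can drop by moving from $(c_i,p)$ to $(\tilde c_i,\tilde p)$. Using $\tilde p\ge p-1/K$, $\ed_{t,i}\in[0,1]$, and the demand inequality above,
\begin{align*}
\tilde p\,\ed_{t,i}(\tilde c_i,\tilde p)\;\ge\;\tilde p\,\ed_{t,i}(c_i,p)\;\ge\;p\,\ed_{t,i}(c_i,p)-\tfrac{1}{K},
\end{align*}
while $-\tilde c_i\ge -c_i-\tfrac{1}{K}$. Adding the two yields $\eprofit_{t,i}(\tilde c_i,\tilde p)\ge \eprofit_{t,i}(c_i,p)-\tfrac{2}{K}$, and hence by the definition of $\elti$ in~\eqref{eq:elti}, $\elti(\tilde c_i,\tilde p)-\elti(c_i,p)\le \tfrac{1}{K}$. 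Summing this bound over $i\in[n]$ and $t\in[T]$ gives $\sum_t \elt(\tilde c,\tilde p)-\sum_t \elt(c,p)\le \tfrac{nT}{K}$.

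I do not expect a significant obstacle. The only subtle point is that the price and the costs must be rounded in \emph{opposite} directions so that both roundings only help demand via monotonicity; once this is set up, the remaining losses $1/K$ from the $\tilde p\cdot(\cdot)$ term (bounded using $\ed\le 1$) and $1/K$ from the cost term add to at most $2/K$ per market, giving exactly the claimed $2nT/K$ after the factor of $2$ in the definition of $\widetilde R_T(n,K)$.
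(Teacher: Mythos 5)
Your proof is correct and follows essentially the same route as the paper's: round each cost up and the price down so that monotonicity guarantees the demand does not decrease, then bound the profit loss by $2/K$ per market per round. Your version is if anything slightly cleaner, since you apply the rounding to an arbitrary $(c,p)$ (avoiding any attainment issue for the supremum) and allow different rounding amounts per coordinate rather than a single common shift $\eps$.
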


Next, we will focus on deriving a uniform upper bound for $R_T(c^\star,p^\star)$ over all $(c^\star,p^\star)\in\I_K^{n+1}$.
In step 3, we will lower bound the loss $\sum_{t\in[T]}\elt(c^\star,p^\star)$ of any fixed $(c^\star, p^\star)$, and in step 4, we will upper bound the algorithm's loss $\sum_{t\in[T]}\expec[\lt]$.

\paragraph{Step 3. [Lower-bounding the comparator loss]} 
In this step, we will prove the following lower bound on $\sum_{t\in[T]}\elt(c^\star,p^\star)$, which we refer to as the comparator loss.

\begin{lemma}[Comparator loss bound]
    For any $(c^\star,p^\star)\in\I^{n+1}_K$, 
    \begin{align*}
        \frac{1}{n}\sum_{t\in[T]}\elt(c^\star,p^\star)\geq \sum_{t\in[T]}\expec\Big[\innp{\qtz\,,\lossH_t}\Big]-\frac{\eta|\I_K|T}{\gamma}-\frac{1}{\eta}\log|\I_K|-4\eta|\I_K|^2T.
    \end{align*}
    \label{lem:alg1-comparator-loss-bound}
    \vspace{-1em}
\end{lemma}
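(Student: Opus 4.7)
My plan is to combine both exponential-weights regret inequalities from Lemma~\ref{lem:alg1-exp-bounds}---one for each cost loss $\lossF_{t,i}$ against the comparator $(c^\star_i,p^\star)$, and one for the price loss $\lossH_t$ against $p^\star$---linked by direct computation of conditional expectations. Setting $L_t(p):=\sum_{i\in[n]}\innp{\qti,\elti(\cdot,p)}_p$, a routine calculation gives $\expec_t[\lossF_{t,i}(c_i,p)] = \tfrac{\qtz(p)}{\qtz(p)+\gamma}\elti(c_i,p)$, and summing over $i$ yields the identity
\[
\expec_t[\lossH_t(p^\star)] = \tfrac{\qtz(p^\star)\,L_t(p^\star)}{n(\qtz(p^\star)+\gamma)} + \tfrac{\eta|\I_K|}{\gamma} - \tfrac{\eta|\I_K|}{\qtz(p^\star)+\gamma}.
\]
On the variance side, using $\lti^2\leq 1$ and that $\lossF_{t,i}(\cdot,p^\star)$ is supported on at most $|\I_K|$ points, one obtains $\expec_t[\innp{\qti,\lossF_{t,i}^2}_{p^\star}] \leq \tfrac{\qtz(p^\star)|\I_K|}{(\qtz(p^\star)+\gamma)^2}\leq \tfrac{|\I_K|}{\qtz(p^\star)+\gamma}$.

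Next, I apply the cost-side inequality~\eqref{eq:alg1-exp-bounds-1} at $(c^\star_i,p^\star)$ for each $i\in[n]$, divide by $n$, sum over $i$, and take expectations. The first-moment calculation turns the LHS into $\tfrac{1}{n}\sum_t\expec\!\bigl[\tfrac{\qtz(p^\star)L_t(p^\star)}{\qtz(p^\star)+\gamma}\bigr]$; the RHS is at most $\tfrac{1}{n}\sum_t\elt(c^\star,p^\star)$ (using $\tfrac{\qtz(p^\star)}{\qtz(p^\star)+\gamma}\leq 1$), plus $\tfrac{1}{\eta}\log|\I_K|$, plus a second-moment contribution bounded by $\eta|\I_K|\sum_t\expec\!\bigl[\tfrac{1}{\qtz(p^\star)+\gamma}\bigr]$. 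Rewriting the LHS via the identity for $\expec_t[\lossH_t(p^\star)]$ introduces the exploration correction $\tfrac{\eta|\I_K|T}{\gamma}-\eta|\I_K|\sum_t\expec\!\bigl[\tfrac{1}{\qtz(p^\star)+\gamma}\bigr]$. The crucial observation is that the second-moment term \emph{exactly cancels} the negative correction engineered into $\lossH_t$, leaving $\sum_t\expec[\lossH_t(p^\star)] \leq \tfrac{1}{n}\sum_t\elt(c^\star,p^\star) + \tfrac{\eta|\I_K|T}{\gamma} + \tfrac{1}{\eta}\log|\I_K|$.

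Finally, I apply the price-side inequality~\eqref{eq:alg1-exp-bounds-2} at $p^\star$ to pass from $\sum_t\expec[\lossH_t(p^\star)]$ to $\sum_t\expec[\innp{\qtz,\lossH_t}]$, and bound the resulting second moment $\innp{\qtz,\lossH_t^2}$ by applying $(a+b)^2\leq 2a^2+2b^2$ to the two additive pieces of $\lossH_t$: the sampling piece contributes at most $1$ per $p$ (using $\lt\leq n$ and $\qtz(p)/(\qtz(p)+\gamma)\leq 1$), while the exploration piece contributes at most $|\I_K|^2$ per $p$ (via $\lvert\tfrac{1}{\gamma}-\tfrac{1}{\qtz(p)+\gamma}\rvert^2\leq \tfrac{1}{\gamma^2}$ and $\gamma=\eta$), giving $\eta\sum_t\expec[\innp{\qtz,\lossH_t^2}]\leq 4\eta|\I_K|^2 T$. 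Combining these ingredients delivers the claimed bound. The main obstacle---and the entire reason for the non-standard exploration term $\eta|\I_K|(\tfrac{1}{\gamma}-\tfrac{1}{\qtz(p)+\gamma})$ in the definition of $\lossH_t$---is this exact variance--correction cancellation: without it, the cost-side variance $|\I_K|/(\qtz(p^\star)+\gamma)$ could blow up whenever $\qtz(p^\star)$ gets small, precluding any sublinear regret.
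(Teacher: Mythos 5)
Your proposal is correct and follows essentially the same route as the paper's proof: you apply the cost-side exponential-weights bound at $(c^\star_i,p^\star)$, average over $i$, use the first- and second-moment bounds on $\lossF_{t,i}$ (the paper's Lemma~\ref{lem:alg1-step3-1}), exploit the exact cancellation of the cost-side variance against the exploration correction built into $\lossH_t$ (the paper's Lemma~\ref{lem:alg1-step3-2} and the rewriting that follows), and finish with the price-side bound plus the $\expec[\innp{\qtz,\lossH_t^2}]\le 4|\I_K|^2$ estimate. The only cosmetic differences are that you track exact conditional-expectation identities where the paper uses one-sided inequalities, and you bound the second moment of $\lossH_t$ via $(a+b)^2\le 2a^2+2b^2$ rather than direct expansion; both yield the same constants.
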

\begin{proof}[Proof of Lemma \ref{lem:alg1-comparator-loss-bound}]
    We will use the results in Lemma~\ref{lem:alg1-exp-bounds} to prove Lemma~\ref{lem:alg1-comparator-loss-bound}.
    The following lemma bounds some quantities in~\eqref{eq:alg1-exp-bounds-1} in expectation.

    \begin{restatable}{lemma}{LemAlgOneStepThreeOne}
        For any $(c^\star,p^\star)\in\I^{n+1}_K$ and $(t,i)\in[T]\times[n]$,
        \begin{align*}
            \expec\!\sbra{\lossF_{t,i}(c^\star_i,p^\star)}\leq\elti(c^\star_i,p^\star),\hspace{0.5in}
            \expec\!\sbra{\innp{\qti\,,\lossF^2_{t,i}}_{p^\star}}\leq \expec\!\sbra{\frac{|\I_{K}|}{\qtz(p^\star)+\gamma}}.
        \end{align*}
        \label{lem:alg1-step3-1}
        \vspace{-1em}
    \end{restatable}
    \noindent As discussed in~\eqref{eq:alg1-fti}, we see that $\lossF_{t,i}$ serves as a proxy for the unobserved $\elti$. The second bound still has the expectation in the RHS as $\qtz(p^\star)$ depends on (random) past history $\{(c_\tau,p_\tau,\ell_\tau)\}^{t-1}_{\tau=1}$.
    
    The second bound in Lemma~\ref{lem:alg1-step3-1}, may be very large due to the $\qtz$ term in the denominator. Fortunately, our design of
    $\lossH_t(p)$, specifically the second term in the RHS of~\eqref{eq:alg1-ht} ensures that this term will be canceld out.
    To see this more explicitly, let us take the expectations of the bound in \eqref{eq:alg1-exp-bounds-1}
    for each $i$, use Lemma \ref{lem:alg1-step3-1}, and sum them over $i\in[n]$. This simple algebraic manipulation leads to the following lemma. A detailed calculation is given in Appendix \ref{proof:lem:alg1-step3-2}.
    \begin{restatable}{lemma}{LemAlgOneStepThreeTwo}
        For any $(c^\star,p^\star)\in\I^{n+1}_K$,
        \begin{align*}
            \expec\Bigg[\sum_{t\in[T]}\rbra{\frac{1}{n}\frac{\lt\,\ind[\pt=p^\star]}{\qtz(\pt)+\gamma}-\frac{\eta|\I_K|}{\qtz(p^\star)+\gamma}}\Bigg]\leq\sum_{t\in[T]}\frac{1}{n}\elt(c^\star_i,p^\star)+\frac{1}{\eta}\log|\I_K|.
        \end{align*}
        \label{lem:alg1-step3-2}
        \vspace{-1em}
    \end{restatable}
    We see that the LHS of the above bound contains the same quantity as the RHS of the second bound of Lemma~\ref{lem:alg1-step3-1}.
    At the same time, the LHS of is \textit{identical} to $\expec\big[\sum_{t\in[T]}\lossH_t(p^\star)\big]-\eta|\I_K|T/\gamma$, (see \eqref{eq:alg1-ht} for the definition of $\lossH_t$). Hence, we can rewrite the above bound as follows:
   \begin{align*}
        \expec\Bigg[\sum_{t\in[T]}\lossH_t(p^\star)\Bigg]-\frac{\eta|\I_K|T}{\gamma}\leq\frac{1}{n}\sum_{t\in[T]}\elt(c^\star,p^\star)+\frac{1}{\eta}\log|\I_K|.
    \end{align*}
    Next, we eliminate $\expec\big[\sum_t\lossH_t(p^\star)\big]$ in the above by adding the above and the expectation of bound \eqref{eq:alg1-exp-bounds-2}, that is, $\expec\!\big[\sum_t\langle \qtz,\lossH_t \rangle\big]-\expec\!\big[\sum_t\lossH_t(p^\star)\big]\leq \eta \expec\!\big[\sum_t\langle \qtz,\lossH^2_t\rangle\big]+\frac{1}{\eta}\log|\I_K|$. Hence, we obtain
    \begin{align*}
        \expec\Bigg[\sum_{t\in[T]}\innp{\qtz\,,\lossH_t}\Bigg]-\frac{\eta|\I_K|T}{\gamma}\leq\frac{1}{n}\sum_{t\in[T]}\elt(c^\star,p^\star)+\frac{1}{\eta}\log|\I_K|+\eta\expec\Bigg[\sum_{t\in[T]}\innp{\qtz\,,\lossH^2_t}\Bigg].
        \numberthis\label{eq:alg1-panpanult-comparator-loss-bound}
    \end{align*}
    The above inequality is almost our target bound except for $\expec\!\big[\sum_t\langle\qtz\,,\lossH^2_t\rangle\big]$ term. We can bound this quantity as follows:
    \begin{align*}
        \innp{\qtz\,,\lossH^2_t}&\leq \sum_{p\in\I_K}\qtz(p)\cdot\rbra{\frac{\ind\!\sbra{\pt=p}}{\qtz(\pt)}+\frac{\eta|\I_K|}{\gamma}}^2 \tag{as $\lt/n\leq 1$}\\
        &=\sum_{p\in\I_K}\qtz(p)\cdot\rbra{\frac{\ind\!\sbra{\pt=p}}{q^2_{t,0}(\pt)}+\frac{2\eta|\I_K|\ind\!\sbra{\pt=p}}{\gamma\,\qtz(\pt)}+\frac{\eta^2|\I_K|^2}{\gamma^2}}\\
        &=\frac{1}{\qtz(\pt)}+\frac{2\eta|\I_K|}{\gamma}+\frac{\eta^2 |\I_K|^2}{\gamma^2}
        \leq\frac{1}{\qtz(\pt)}+3|\I_K|^2 \tag{as $\gamma:=\eta$ and $|\I_K|^2\geq|\I_K|$}.
    \end{align*}
    Hence, we have that, for each $t\in[T]$
    \begin{align*}
        \expec\!\sbra{\innp{\qtz\,,\lossH^2_t}}&\leq\expec\!\sbra{\frac{1}{\qtz(\pt)}}+3|\I_K|^2=|\I_K|+3|\I_K|^2\leq 4|\I_K|^2.
    \end{align*}
    Combining the above with \eqref{eq:alg1-panpanult-comparator-loss-bound}, we obtain the stated bound.
\end{proof}

\paragraph{Step 4. [Bounding the algorithm's loss]}
Next, in the following bound, we upper bound the algorithm's loss $\sum_{t\in[T]}\expec[\lt]$.
Its proof, which is purely algebraic, is given in Appendix \ref{proof:lem:alg1-algorithm-loss-bound}.

\begin{restatable}{lemma}{LemAlgOneAlgorithmLossBound}
    For all $t\in[T]$,
    \begin{align*}
        \frac{1}{n}\sum_{t\in[T]}\expec\!\sbra{\lt}\leq \sum_{t\in[T]}\expec\!\sbra{\innp{\qtz\,,\lossH_t}}-\frac{\eta |\I_K| T}{\gamma}+2\eta|\I_K|^2 T.
    \end{align*}
    \label{lem:alg1-algorithm-loss-bound}
\end{restatable}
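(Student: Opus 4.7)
The plan is to prove the bound on a per-round basis in conditional expectation and then sum over $t \in [T]$. Let $\F_{t-1}$ denote the $\sigma$-algebra generated by the history up to round $t-1$, and let $\expec_t[\cdot] := \expec[\cdot \mid \F_{t-1}]$. Note that, conditional on $\F_{t-1}$, the distributions $\qtz$ and $\qti(\cdot \mid p)$ for $p \in \I_K$, $i \in [n]$ are all deterministic. For brevity, define the conditional expected per-price loss $\bar{\ell}_t(p) := \sum_{c \in \I_K^n} \prod_{i \in [n]} \qti(c_i \mid p)\, \elt(c, p)$, so that $\frac{1}{n}\expec_t[\lt] = \frac{1}{n}\sum_{p \in \I_K} \qtz(p)\, \bar{\ell}_t(p)$ by the assumption that $\expec[\dti \mid \ct, \pt] = \edti(\cti, \pt)$.

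Next, I would plug the definition \eqref{eq:alg1-ht} of $\lossH_t$ into $\innp{\qtz, \lossH_t} = \sum_p \qtz(p)\lossH_t(p)$ and take conditional expectations. The first term contributes $\frac{1}{n}\expec_t\!\left[\frac{\qtz(\pt)\lt}{\qtz(\pt) + \gamma}\right] = \frac{1}{n}\sum_{p \in \I_K} \frac{\qtz(p)^2}{\qtz(p) + \gamma}\, \bar{\ell}_t(p)$, since $\pt \sim \qtz$ and the expectation of $\lt$ conditional on $\pt = p$ is $\bar{\ell}_t(p)$. The second term of $\lossH_t$ does not depend on round-$t$ randomness, so it contributes $\eta|\I_K|\!\left(\frac{1}{\gamma} - \sum_p \frac{\qtz(p)}{\qtz(p) + \gamma}\right)$.

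Assembling these and subtracting $\frac{1}{n}\expec_t[\lt]$ from both sides, the difference $\frac{1}{n}\expec_t[\lt] - \expec_t[\innp{\qtz, \lossH_t}] + \frac{\eta|\I_K|}{\gamma}$ reduces, after cancellation of the $1/\gamma$ term, to
\begin{align*}
\frac{1}{n}\sum_{p \in \I_K} \!\left(\qtz(p) - \frac{\qtz(p)^2}{\qtz(p) + \gamma}\right)\!\bar{\ell}_t(p) + \eta|\I_K|\sum_{p \in \I_K} \frac{\qtz(p)}{\qtz(p) + \gamma} = \frac{1}{n}\sum_{p \in \I_K} \frac{\qtz(p)\gamma}{\qtz(p) + \gamma}\bar{\ell}_t(p) + \eta|\I_K|\sum_{p \in \I_K} \frac{\qtz(p)}{\qtz(p) + \gamma}.
\end{align*}
Both terms are now manifestly nonnegative, so the work is just to bound them. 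Using $\bar{\ell}_t(p) \leq n$ (since $\elti \in [0,1]$) together with the elementary inequality $\frac{xy}{x+y} \leq \min(x,y) \leq y$ applied with $x = \qtz(p)$, $y = \gamma$, the first term is at most $|\I_K|\gamma$; using $\frac{\qtz(p)}{\qtz(p)+\gamma} \leq 1$, the second is at most $\eta|\I_K|^2$. Substituting $\gamma = \eta$ and using $|\I_K| \leq |\I_K|^2$ bounds the sum by $2\eta|\I_K|^2$.

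Finally, taking the tower expectation over $\F_{t-1}$ yields the per-round bound in unconditional expectation, and summing over $t \in [T]$ gives the claim. The bulk of the proof is algebraic; the one step worth flagging is the cancellation in the third display above, where the second term of $\lossH_t$ in \eqref{eq:alg1-ht}, which was introduced precisely to control variance for the comparator bound (Lemma \ref{lem:alg1-comparator-loss-bound}), interacts cleanly with $\frac{1}{\qtz(p) + \gamma}$ to leave only the small residual terms $\frac{\qtz(p)\gamma}{\qtz(p)+\gamma}$ and $\frac{\qtz(p)}{\qtz(p)+\gamma}$. That cancellation—rather than any sophisticated probabilistic argument—is the main thing to get right.
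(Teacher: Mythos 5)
Your proposal is correct and follows essentially the same route as the paper's proof: both hinge on the cancellation of the $\eta|\I_K|/\gamma$ term against the second term of $\lossH_t$ in \eqref{eq:alg1-ht}, and both bound the two residuals by $\gamma|\I_K|$ (using $\lt/n\leq 1$) and $\eta|\I_K|^2$ respectively before setting $\gamma=\eta$. The only cosmetic difference is that you take conditional expectations first and work with $\bar{\ell}_t(p)$, whereas the paper manipulates the realized quantities pointwise and takes the full expectation at the end; the algebra is identical.
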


\paragraph{Step 5. [Wrap up]}
Combining the lower bound of $\sum_{t\in[T]}\elt(c^\star,p^\star)$ (Lemma \ref{lem:alg1-comparator-loss-bound}) and the upper bound of $\sum_{t\in[T]}\expec[\lt]$ (Lemma \ref{lem:alg1-algorithm-loss-bound}),  gives the following bound. For all $(c^\star,p^\star)\in\I^{n+1}_K$, we have
\begin{align*}
    \frac{1}{2}R_T(c^\star,p^\star) = \expec\bigg[\sum_{t\in[T]}\lt\bigg]-\sum_{t\in[T]}\elt(c^\star,p^\star)\leq \frac{n}{\eta}\log|\I_K|+6n\eta|\I_K|^2T.
\end{align*}
Since $|\I_K|=K+1$ and the residual regret $\widetilde{R}_T(n,K)$ is at most $nT/K$ (Lemma \ref{lem:alg1-discretization-error}), we have that $R_T\in \O\rbra{n\eta K^2T+\frac{n}{\eta}\log K+\frac{nT}{K}}$, as claimed. 
\qedsymbol

We mention that our proof does not assume continuity of the demands $\{\edti\}_{t,i}$. However, it is straightforward to extend our analysis to non-monotonic demands that are Lipschitz continuous. 
\subsection{Lower bound for monotonic demands}\label{subsec:lb-monotinic}

We will now state and prove the following lower bound for the targeted marketing problem with monotonic demands. When compared to the upper bound in Theorem~\ref{thm:alg1-regret}, we see that while we differ by a $n^{1/4}$ term, we are tight in $T$ up to $\log$ factors.

\begin{restatable}{theorem}{ThmLowerBoundMonotonic}
    Let $\A$ be an algorithm for the targeted marketing problem. Let $\{D_t\}_t$ be a sequence of mappings from chosen costs and price to a distribution of demands, that satisfy the monotonicity condition in Assumption~\ref{assump:monotonic}. Let $R_T(\A,\{D_t\}_t)$ be the expected regret achieved by $\A$ under $\{D_t\}_t$ up to round $T$. Then, we have the following:
    $\textup{inf}_\A\textup{sup}_{\{D_t\}_t} R_T\big(\A, \{D_t\}_t\big)\in \Omega((nT)^{\nicefrac{3}{4}}).$
    \label{thm:lb-monotonic}
\end{restatable}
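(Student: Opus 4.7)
The plan is to follow the standard reduction-plus-hypothesis-testing template for bandit lower bounds (as in Bubeck and Cesa-Bianchi [2012] or Lattimore and Szepesv\'ari [2020]). First, since the adversarial setting subsumes the stochastic one (take $D_t = D$ for all $t$), it suffices to establish the lower bound against stochastic demand environments satisfying Assumption~\ref{assump:monotonic}. I will construct a family of hard stochastic instances parameterized by hidden parameters $\alpha = (c^\star_1, \ldots, c^\star_n, p^\star)$ that are statistically hard to distinguish from a common base environment yet have significantly larger optimal profit.

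\textbf{Construction of hard instances.} Fix small parameters $\epsilon, \delta > 0$ with $\epsilon \leq c_0 \delta$ for a constant $c_0$ (the monotonicity constraint, specified below), and let $\I_K := \{0,\delta,2\delta,\ldots,1-\delta\}$ with $K := 1/\delta$. Define $\envbase$ with Bernoulli demands of mean $\bar{d}^0_i(c_i,p) := \frac{1}{2} + \gamma(c_i + (1-p))$ for a constant $\gamma$ small enough to keep $\bar{d}^0_i \in [0,1]$; this is strictly monotonic with gradient $\gamma$ in each coordinate. For each $\alpha \in \I_K^{n+1}$, define the alternative $\envcstarpstar$ with mean $\bar{d}^\alpha_i(c_i,p) := \bar{d}^0_i(c_i,p) + \epsilon\,\phi_i(c_i,p;c^\star_i,p^\star)$, where $\phi_i$ is a smooth Lipschitz bump of height $1$ supported on a $\delta\times\delta$ box near $(c^\star_i,p^\star)$. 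Although $\phi_i$ is not itself monotonic, its partial derivatives are bounded by $O(1/\delta)$ and are absorbed by the base gradient $\gamma$ whenever $\epsilon/\delta \leq c_0\gamma$, so $\bar{d}^\alpha_i$ remains non-decreasing in $c_i$ and non-increasing in $p$. Critically, the bumps across markets are \emph{coupled} via the common price $p^\star$: market $i$'s bump is active only when $p \approx p^\star$. The optimum under $\envcstarpstar$ exceeds the base optimum by $\Delta = \Omega(n\epsilon)$, realized only near $(c^\star_1,\ldots,c^\star_n,p^\star)$.

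\textbf{Information-theoretic argument.} Let $N_\alpha := \sum_{t,i} \ind\!\bigl[(c_{t,i},p_t) \in \mathrm{supp}(\phi_i^\alpha)\bigr]$; then $R_T(\alpha) \geq \epsilon\bigl(nT - \eeAcstarpstar[N_\alpha]\bigr)$ because each market missing its bump forfeits $\epsilon$ profit. The per-round KL between $\envbase$ and $\envcstarpstar$ conditioned on $(c_t,p_t)$ is $O(\epsilon^2 \sum_i \phi_i^2)$, and a play at $(c_{t,i},p_t)$ intersects market $i$'s bump for at most $K^{n-1} = M/K^2$ alternatives (with $M:=K^{n+1}$); hence
\begin{align*}
\sum_\alpha \mathrm{KL}(\ppAbase\,\|\,\ppAcstarpstar) = O\!\bigl(T n M \epsilon^2 \delta^2\bigr).
\end{align*}
Pinsker's inequality combined with Cauchy-Schwarz yields $\sum_\alpha \mathrm{TV}(\ppAbase,\ppAcstarpstar) = O\bigl(M \epsilon \delta \sqrt{nT}\bigr)$. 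A change-of-measure argument $\eeAcstarpstar[N_\alpha] \leq \eeAbase[N_\alpha] + nT\cdot\mathrm{TV}$ together with the pigeonhole bound $\sum_\alpha \eeAbase[N_\alpha] \leq T n M \delta^2$ gives $\frac{1}{M}\sum_\alpha \eeAcstarpstar[N_\alpha] = O(nT\delta^2 + nT\epsilon\delta\sqrt{nT})$. Choosing $\epsilon = \delta = \Theta\bigl((nT)^{-1/4}\bigr)$ makes both terms at most $nT/2$, so $\frac{1}{M}\sum_\alpha R_T(\alpha) \geq \epsilon \cdot nT/2 = \Omega\bigl((nT)^{3/4}\bigr)$, and hence some $\alpha$ must attain this regret.

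\textbf{Main obstacle.} The principal difficulty is designing bumps $\phi_i$ that are simultaneously (i) \emph{localized}, so distinct alternatives are statistically hard to distinguish; (ii) compatible with monotonicity (Assumption~\ref{assump:monotonic}); and (iii) \emph{coupled} across markets via the common price $p^\star$. These goals are in tension: pure step-function bumps satisfy monotonicity automatically but have support of volume $\Theta(1)$, causing the KL to accumulate too quickly and spoiling the lower bound; conversely, localized smooth bumps of support $\delta\times\delta$ are intrinsically non-monotonic and require a base environment with strict gradient $\gamma = \Omega(\epsilon/\delta)$ to restore monotonicity, which in turn pins the relationship $\epsilon = O(\delta)$. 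The coupling through $p^\star$ is what makes the problem genuinely multi-market instead of $n$ independent copies, and is essential for obtaining the correct $n$-dependence. A final subtlety is the use of Cauchy-Schwarz (rather than a direct Fano step) in the total-variation sum, which is what produces the joint $(nT)^{3/4}$ scaling in a clean way.
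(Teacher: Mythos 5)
Your overall template (reduce to a stationary stochastic environment, build a baseline plus a family of localized alternatives, bound the KL by the number of plays on the bump, and tune $\eps=\delta=\Theta((nT)^{-1/4})$) is the same as the paper's, and your information-theoretic bookkeeping is essentially sound. The genuine gap is in the construction of the hard instances, which is precisely the step the paper identifies as the main difficulty. With your base demand $\ed^0_i(c_i,p)=\tfrac12+\gamma(c_i+(1-p))$, the base profit $p\,\ed^0_i(c_i,p)-c_i$ has derivative $\gamma p-1<0$ in $c_i$ (you must take $\gamma\le 1/4$ to keep the demand in $[0,1]$), so the base optimum sits at $c_i=0$ and moving to a generic grid point $c^\star_i$ costs $(1-\gamma p)c^\star_i=\Theta(c^\star_i)$ in profit. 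An additive bump of height $\eps$ in the demand raises the profit at $(c^\star_i,p^\star)$ by only $p^\star\eps\le\eps$, which is nowhere near enough to make $(c^\star_i,p^\star)$ the global optimum unless $c^\star_i=O(\eps)$ — leaving only $O(1)$ admissible bump locations per market and collapsing the family of alternatives. Consequently your claim that the optimum of $\envcstarpstar$ exceeds the base optimum by $\Omega(n\eps)$ ``realized only near the bump,'' and hence the key inequality $R_T(\alpha)\ge\eps(nT-\mathbb{E}[N_\alpha])$, is false for this construction.

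What the argument needs is a baseline whose profit landscape is \emph{flat at its maximum} over the entire grid of candidate bump locations, so that an $\eps$-perturbation anywhere creates a genuine $\Theta(\eps)$ optimality gap. That forces $\ed(c,p)\approx c/p$ on the grid (profit identically zero), and the paper realizes this with the product form $d=b\,\ind[v\ge p]$, $\expec[b]=2c$, $\mathbb{P}(v\ge p)=(2p)^{-1}$, perturbing \emph{both} factors in a coordinated way inside the box ($2c\mapsto 2c+\eps$ and $(2p)^{-1}\mapsto(2p+\eps)^{-1}$) so that the perturbed demand stays non-decreasing in $c$ and non-increasing in $p$ without any auxiliary gradient parameter; monotonicity is then verified by direct comparison of adjacent grid points rather than by absorbing a Lipschitz bump into a slope. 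Your ``gradient absorbs the bump'' trick cannot coexist with a flat profit landscape: flatness requires $\partial_c(p\,\ed)-1=0$, which pins the $c$-gradient of the demand at exactly $1/p$ and leaves no slack to absorb a non-monotone perturbation. Separately, your family of $K^{n+1}$ alternatives with independent per-market bump locations is an unnecessary complication — since the markets are i.i.d. in the baseline, the paper gets the same $(nT)^{3/4}$ rate from only $|\S|=\Theta(K^2)$ alternatives with a common $(c^\star,p^\star)$, replacing your Pinsker--Cauchy--Schwarz averaging with a single pigeonhole plus the Bretagnolle--Huber inequality — but that part of your plan is a stylistic difference, not an error.
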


\paragraph{Proof strategy.}
We will follow a standard recipe for proving lower bounds for adversarial bandit settings (e.g.,~\citet{bubeck2012regret}).
First, note that it is
sufficient to prove a regret lower bound of $\Omega((nT)^{\nicefrac{3}{4}})$ for a stationary stochastic environment where $D_t$ does not depend on $t$. This is because any algorithm that guarantees a regret over all non-stationary demands also guarantees the same regret over stationary demands.
Second, to prove a lower bound for stochastic environments, we use a standard hypothesis testing argument, where we construct a set of statistically indistinguishable environments, but with large differences in the optimal profit.

The main challenge in following this recipe is in the construction of the alternatives.
To do so, we construct a baseline environment where the optimal profit is $0$,
and a set of alternative environments, where the optimal profit is sufficiently larger than $0$.
The demand curves in 
each alternative is identical to the baseline except near a single cost-price tuple $(c^\star,p^\star)$. By repeatedly playing $(c^\star,p^\star)$, which is optimal for this alternative environment, one can generate a large profit.
By varying $(c^\star,p^\star)$ we generate several alternative environments.

\paragraph{Setting and notations.}
We first construct these environments for the $n=1$ case, where there is only a single demand curve whose mean function does not depend on the round, and then generalize these environments for $n\geq 1$. We use $\E$ annotated by a subscript to denote a stationary environment. When $n=1$, a stationary environment $\E$ is a function from $(c,p)\in[0,1]^2$ to a probability distribution over demands $d\in[0,1]$. Let $\ed(c,p)$ be the expectation of $d$ given $(c,p)$. For any randomized algorithm $\A$, let $R_T(\A,\E)$ be the expected regret of $\A$ under $\E$ up to round $T$.

To define our environments, we introduce discretized intervals $\C$ and $\P$ for cost and price, respectively. Let $K>0$ be a positive integer and $\eps:=K^{-1}$. Define $\C, \P\subset[0,1]$ as follows:
\begin{align*}
    \C&=\cbra{c_0,\dots,c_K}:=\cbra{0,\frac{1}{2K},\frac{1}{K},\dots,\frac{K-1}{2K},\frac{1}{2}}=\cbra{0,\frac{\eps}{2},\dots,\frac{1}{2}-\frac{\eps}{2},\frac{1}{2}}, \\
    \P&=\cbra{p_0,\dots,p_K}:=\cbra{\frac{1}{2},\frac{1}{2}+\frac{1}{2K},\dots,\frac{1}{2}+\frac{K-1}{2K},1}=\cbra{\frac{1}{2},\frac{1}{2}+\frac{\eps}{2},\dots,1}.
\end{align*}
We construct environments such that any $(c,p)\notin\C\times\P$ achieves non-negative profit, while any $(c,p)\notin\C\times\P$ incurs negative profit.
Hence, it is sufficient to consider algorithms that only choose from these discrete options.
For what follows, we define $g(c)$ for $c\in[0,1]$ as follows:
\begin{align*}
    g(c):=\min\!\big(1,\lfloor 2cK \rfloor /K\big),
\end{align*}
Note that $g(c)=2c$ for $\forall c\in\C$ and $g(c)< 2c$ otherwise.

\paragraph{Baseline environment.}
The baseline environment $\envbase$ is defined as follows.
Given that the firm chooses $(c,p)$, the random variable $d$ denoting the realized demand is drawn via 
\begin{align*}
    d:=b\,\ind\!\sbra{v\geq p}\in\{0,1\}.
    \numberthis\label{eq:base-environment}
\end{align*}
Here $b$ is a Bernoulli random variable with mean $g(c)$ and $v\in\P$ is a discrete random variable, drawn independent from $b$, whose distribution is $\mathbb{P}(v\geq p)=(2p)^{-1}$ where $p\in\P$ (see Figure \ref{fig:2}). By construction, the optimal expected profit under the baseline is zero.
This can be seen via:
$\max_{c,p} \eprofit(c, p) =\max_{c,p} p\,g(c) \frac{1}{2p} - c =  0$, with the maximum attained by any $(c,p)\in\C\times\P$.

\begin{figure}[t]
    \centering
    \begin{subfigure}[t]{0.44\textwidth}
        \centering
        \includegraphics[width=\textwidth]{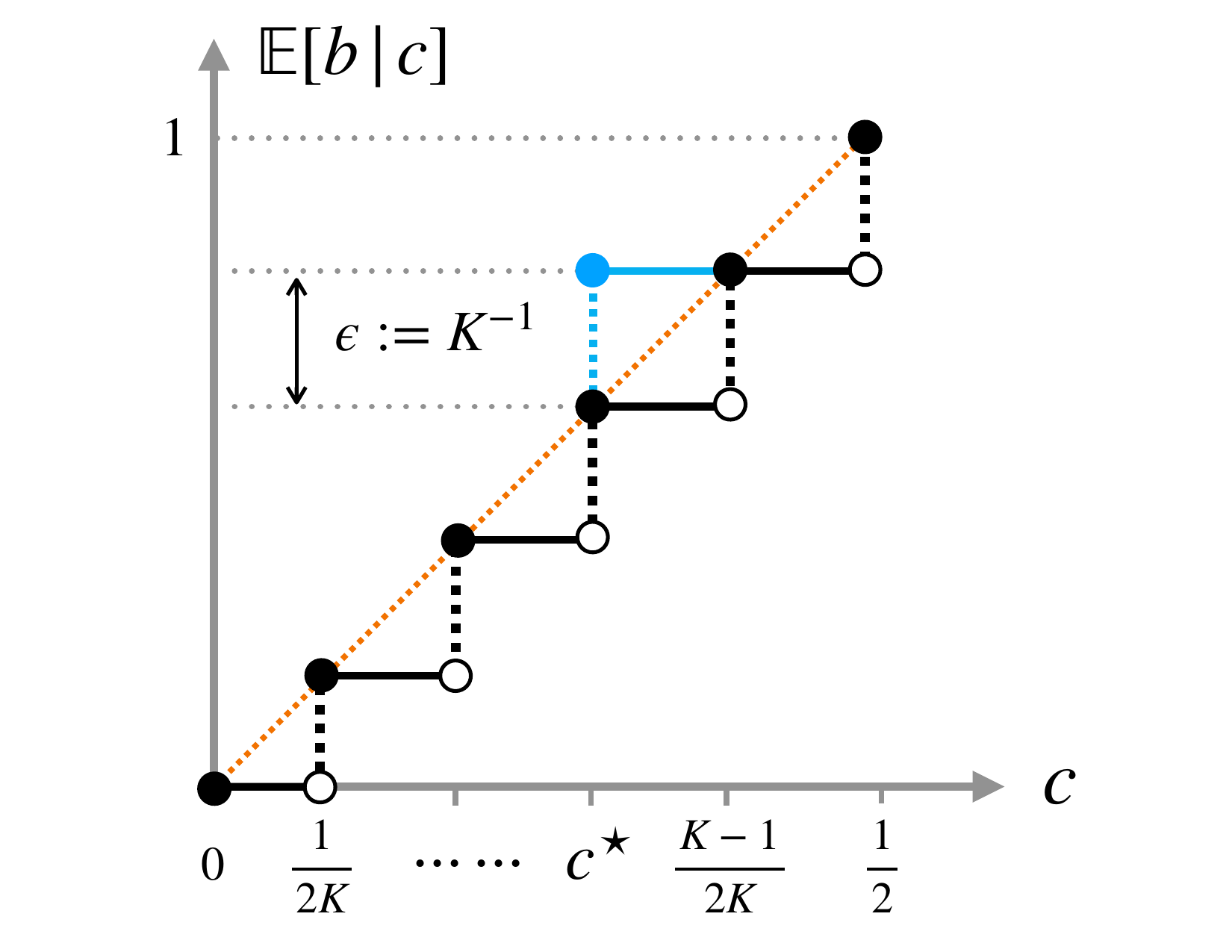}
        \caption{Blue indicates the bump in $\expec[b|c]$ for an alternative environment with optimal cost $c^\star$.}
        \label{fig:2a}
    \end{subfigure}
    \hspace{0.35in}
    \begin{subfigure}[t]{0.44\textwidth}
        \centering
        \includegraphics[width=\textwidth]{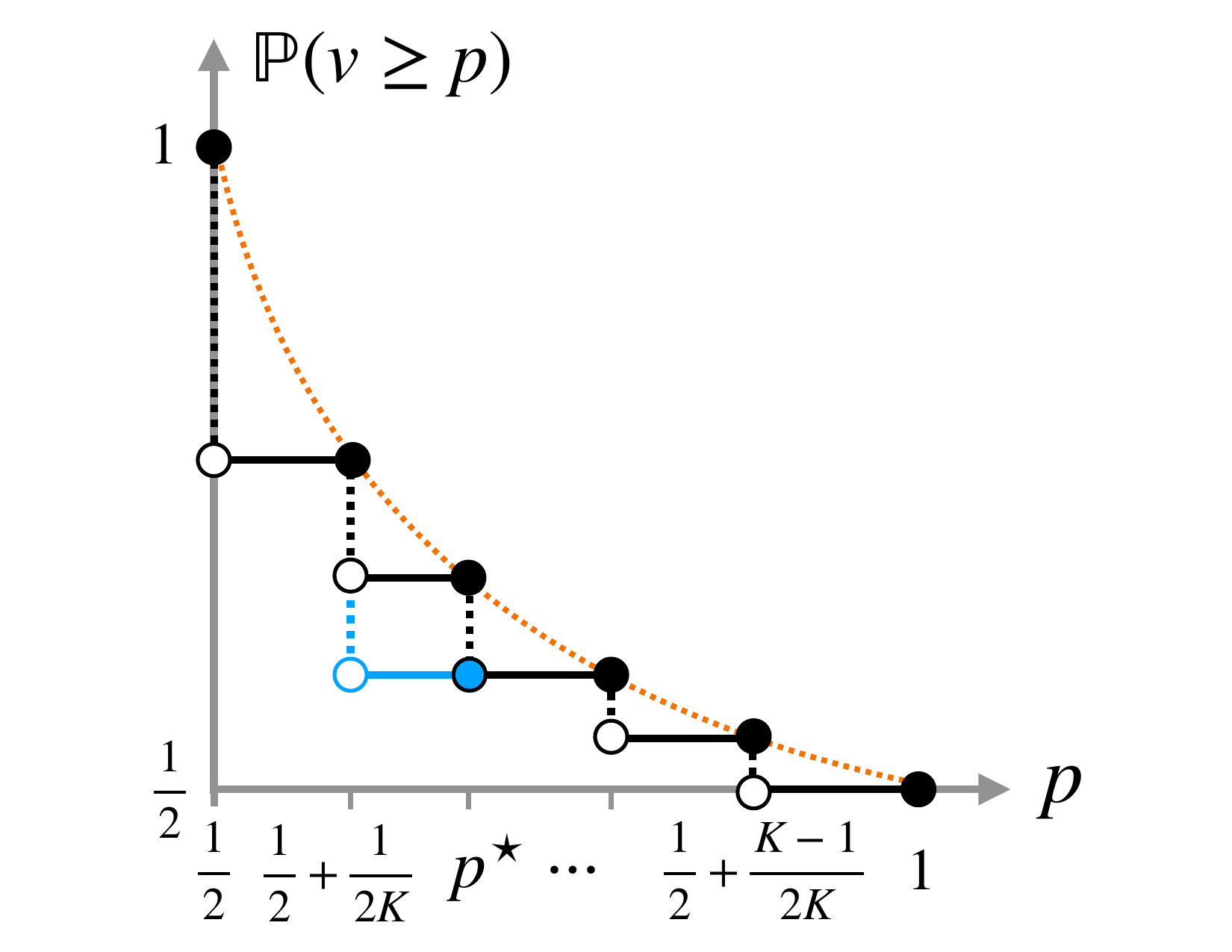}
        \caption{Blue indicates the dip in $\mathbb{P}(v\geq p)$ for an alternative environment with optimal price $p^\star$.}
        \label{fig:2b}
    \end{subfigure}
    \vspace{-0.1in}
    \caption{Illustrations of our baseline and alternative environments. (a) The black line with dots depicts $\expec[b]$ for the baseline defined in \eqref{eq:base-environment}, and the red line represents the function $2c$. (b) The black line with dots depicts $\mathbb{P}(v\geq p)=\expec\!\sbra{\ind[v\geq p]}$ for the baseline given in \eqref{eq:base-environment}, and the red line follows $(2p)^{-1}$.}
    \label{fig:2}
    \vspace{-1em}
\end{figure}
\paragraph{Alternative environments.}
Now, we define a class of alternative environments that are statistically close to the baseline environment but have a sufficiently large positive optimal profit. 
Our design is such that the demand variables have the same distribution as the baseline, except near a single cost-price tuple $(c^\star,p^\star)\in\C\times\P$.
As it will turn out, $(c^\star,p^\star)$ will be the optimal arm for the alternative environment. Let $\S\subseteq \C\times\P$ be the set of feasible optimal arms defined as follows:
\begin{align*}
    \S:=\cbra{(c,p)\in\C\times\P \ \middle|\ \frac{2}{5}\leq c\leq \frac{9}{20},\,\frac{3}{5}\leq p\leq \frac{4}{5}}.
    \numberthis\label{eq:lb-hard-set}
\end{align*}
Note that $|\S|\in\Omega(K^2)$ as the area of square defined by $\frac{2}{5}\leq c\leq \frac{9}{20},\,\frac{3}{5}\leq p\leq \frac{4}{5}$ is constant. 

For each $(c^\star,p^\star)\in\S$, let $\envcstarpstar$ be the problem environment with $d:=b\,\ind\!\sbra{v\geq p}$ where $b$ and $v$ are independent random variables sampled as follows given $(c,p)\in[0,1]^2$.
\begin{align*}
    b\sim\begin{cases}
        \text{Ber}(g(c)+\eps),\quad\text{if }(c,p)\in [c^\star,c^\star+\eps/2)\times (p^\star-\eps/2,p^\star].\\
        \text{Ber}(g(c)),\quad\text{otherwise},
    \end{cases}
    \numberthis\label{eq:hard-environment-1d-1}
\end{align*}
and
\begin{align*}
    v\in\P,\ v\sim\begin{cases}
        \mathbb{P}(v\geq p_i)=(2p_i+\eps)^{-1},\quad\text{if }(c,p)\in [c^\star,c^\star+\eps/2)\times (p^\star-\eps/2,p^\star].\\
        \mathbb{P}(v\geq p_i)=(2p_i)^{-1},\quad\text{otherwise}.
    \end{cases}
    \numberthis\label{eq:hard-environment-1d-2}
\end{align*}
Figure \ref{fig:2a} and \ref{fig:2b} illustrate $\expec[b|c]$ and $\mathbb{P}(v\geq p)$ respectively for the baseline environment. We have also shown how an alternative environment deviates around $(c^\star, p^\star)$ in blue.

The lemma below states that environment $\envcstarpstar$ satisfies Assumption \ref{assump:monotonic} and $(c^\star,p^\star)$ is the optimal arm of $\envcstarpstar$ which gives a positive profit, and that the profit is negative whenever $(c,p)\in[0,1]^2\setminus\C\times\P$. We give a proof of Lemma \ref{lem:lb-hard-environment-properties} in Appendix \ref{proof:lem:lb-hard-environment-properties}.

\begin{restatable}{lemma}{LemLowerBoundHardenvironmentProperties}
    For any $(c^\star,p^\star)\in\S$, let $\ed(c,p)$ be the conditional expectation of $d$. Then, $\ed(c,p)$ satisfies the following in environment $\envcstarpstar$.
    \begin{enumerate}
        \item For any $p\in[0,1]$, $\ed(c,p)$ is non-decreasing with respect to $c\in[0,1]$.
        \item For any $c\in[0,1]$, $\ed(c,p)$ is non-increasing with respect to $p\in[0,1]$.
        \item $\eprofit(c^\star,p^\star)\geq \frac{\eps}{20}>0$.
        \item $\eprofit(c,p)= 0$ for any $(c,p)\in\C\!\times\!\P\setminus\cbra{(c^\star,p^\star)}$ and  $\eprofit(c,p)<0$ for any $(c,p)\in [0,1]^2\setminus \C\times\P$.
    \end{enumerate}
    \label{lem:lb-hard-environment-properties}
\end{restatable}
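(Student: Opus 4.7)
My plan is to expand $\ed(c,p) = \expec[b \mid c,p] \cdot \mathbb{P}(v \geq p \mid c,p)$ using the independence of $b$ and $v$, and then verify each of the four claims by a case analysis based on whether $(c,p)$ lies in the ``bump rectangle'' $R := [c^\star,c^\star+\eps/2)\times(p^\star-\eps/2,p^\star]$. Outside $R$ we have $\ed(c,p) = g(c)\cdot\mathbb{P}_{\mathrm{base}}(v\geq p)$, while inside $R$ we have $\ed(c,p) = (g(c)+\eps)\cdot\mathbb{P}_{\mathrm{alt}}(v\geq p)$ with $\mathbb{P}_{\mathrm{alt}}(v\geq p_i) = 1/(2p_i+\eps)$. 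Two useful structural facts I would record at the start are that $g$ is a non-decreasing step function satisfying $g(c)=2c$ exactly on $\C$, and that $p \mapsto \mathbb{P}_{\mathrm{base}}(v\geq p)$ is a non-increasing step function that equals $1/(2p)$ exactly on $\P$.

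For parts 1 and 2, monotonicity is immediate in the interior of $R$ and in the interior of its complement; the substance is at the boundary of $R$. To check non-decreasingness in $c$ at $c = c^\star$ (for a fixed $p \in (p^\star-\eps/2,p^\star]$), I would verify that $(2c^\star - \eps)/(2p^\star) \leq (2c^\star+\eps)/(2p^\star+\eps)$, which cross-multiplies to $2c^\star - 4p^\star \leq \eps$; this follows from $c^\star \leq 9/20$ and $p^\star \geq 3/5$. Symmetrically, checking non-increasingness in $p$ at $p = p^\star - \eps/2$ (for fixed $c \in [c^\star,c^\star+\eps/2)$) reduces to $2p^\star - 4c^\star \leq \eps$, which uses $c^\star \geq 2/5$ and $p^\star \leq 4/5$ (tight at $c^\star=2/5,\,p^\star=4/5$). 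The jumps at the remaining two edges of $R$ go in the favorable direction and need no assumption.

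For part 3, at $(c^\star,p^\star)$ we have $g(c^\star)=2c^\star$ and $\mathbb{P}_{\mathrm{alt}}(v\geq p^\star)=1/(2p^\star+\eps)$, so a direct simplification gives
\[
\eprofit(c^\star,p^\star) \;=\; \frac{p^\star(2c^\star+\eps)}{2p^\star+\eps} - c^\star \;=\; \frac{\eps\,(p^\star-c^\star)}{2p^\star+\eps}.
\]
Using $p^\star - c^\star \geq 3/5 - 9/20 = 3/20$ and $2p^\star+\eps \leq 2$ yields $\eprofit(c^\star,p^\star) \geq 3\eps/40 > \eps/20$.

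For part 4, the grid spacings of both $\C$ and $\P$ are $\eps/2$, matching the dimensions of $R$, so $R \cap (\C\times\P) = \{(c^\star,p^\star)\}$. Hence any other $(c,p)\in\C\times\P$ lies in the baseline regime with $g(c)=2c$ and $\mathbb{P}(v\geq p)=1/(2p)$, giving $\eprofit = p\cdot(2c)/(2p) - c = 0$. For $(c,p)\in[0,1]^2\setminus \C\times\P$ (and outside $R$), at least one of $g(c)<2c$ (when $c\notin\C$) or $\mathbb{P}_{\mathrm{base}}(v\geq p)<1/(2p)$ (when $p\notin\P$, $p \geq 1/2$) is strict, yielding $\eprofit(c,p)<0$ away from degeneracies where $g(c)=0$; in those degenerate cases $\ed=0$, so $\eprofit\leq 0$ anyway, which is enough to ensure $(c^\star,p^\star)$ is the unique maximizer over $[0,1]^2$. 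The main obstacle is the boundary analysis in the second paragraph: it is exactly the constants $2/5, 9/20, 3/5, 4/5$ in the definition of $\S$ in~\eqref{eq:lb-hard-set} that make the boundary inequalities compatible with monotonicity, and any weakening of them would fail one of the two jump conditions.
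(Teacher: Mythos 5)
For items 1--3 your argument is essentially identical to the paper's: the same reduction of monotonicity to the two boundary jumps of the bump rectangle $R$, the same two inequalities $2c^\star-4p^\star\leq\eps$ and $2p^\star-4c^\star\leq\eps$ justified by the same constants defining $\S$, and the same closed form $\eprofit(c^\star,p^\star)=\eps(p^\star-c^\star)/(2p^\star+\eps)$ for item 3 (your constant $3\eps/40$ even slightly improves on the claimed $\eps/20$). The first half of item 4 --- zero profit at grid points other than $(c^\star,p^\star)$, via $R\cap(\C\times\P)=\{(c^\star,p^\star)\}$ --- also matches the paper, and your handling of the $g(c)=0$ degeneracy is \emph{more} careful than the paper's, whose strict inequality fails at $c=0$, $p\notin\P$.

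The gap is in the second half of item 4, and it is hidden in your parenthetical ``(and outside $R$)''. Non-grid points \emph{inside} $R$ exist --- e.g.\ $(c^\star,\,p^\star-\eps/4)\notin\C\times\P$ --- and there the baseline formulas you invoke do not apply: the expected demand equals the bumped constant $(2c^\star+\eps)/(2p^\star+\eps)$ throughout $R$, so $\eprofit(c^\star,p^\star-\eps/4)=\eps\bigl(p^\star-\tfrac{3}{2}c^\star-\tfrac{\eps}{4}\bigr)/(2p^\star+\eps)$, which is strictly positive for, say, $c^\star=2/5$, $p^\star=4/5$ and small $\eps$. So the claim $\eprofit(c,p)<0$ on $[0,1]^2\setminus\C\times\P$ is actually false on $R\setminus\{(c^\star,p^\star)\}$, and no argument can close that case. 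To be fair, the paper's own proof has the same blind spot: it applies $g(c)<2c$ and $\mathbb{P}(v\geq p)<(2p)^{-1}$ everywhere off the grid, silently ignoring the bump region. The damage is contained --- every point of $R$ earns at most $\eprofit(c^\star,p^\star)$ and induces the same observation distribution as $(c^\star,p^\star)$, so the downstream lower bound survives if plays in $R$ are counted as plays of the optimal arm --- but as a proof of item 4 as stated, your write-up is incomplete in exactly the region where the statement itself needs to be amended.
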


\paragraph{Environments with multiple ($n\geq 1$) markets.}
From the $n=1$ environments defined above, we define a $n$-dimensional baseline environment by repeatedly sampling $n$ independent demand variables.
Precisely, for each $t\in[T]$, let $\{\dti\}_{i\in[n]}$ be independent realizations of the demands $d$ in the baseline environment $\envbase$ \eqref{eq:base-environment}. With a slight abuse of notation, we call this $n$-dimensional baseline $\envbase$. The $n$-dimensional alternatives are similarly constructed by repeatedly sampling $n$ independent demand variables.
Precisely, for any $(c^\star,p^\star)\in\S$, $\{\dti\}_{i\in[n]}$ are independent realizations of $d$ in $\envcstarpstar$ \eqref{eq:hard-environment-1d-1}, \eqref{eq:hard-environment-1d-2}. Hence, the optimal arm is $\cti=c^\star$ for all $i\in[n]$ and $\pt=p^\star$. As before, we will call this $n$-dimensional alternative environment $\envcstarpstar$. 

For what follows, we recall the Bregtagnolle-Huber inequality~\citep{bretagnolle1979estimation} which states that for any two distributions $\mathbb{P}_1$, $\mathbb{P}_2$ and event $A$, we have
\begin{align*}
    \mathbb{P}_1(A)+\mathbb{P}_2(A^c)\geq\frac{1}{2}\text{KL}(\mathbb{P}_1||\,\mathbb{P}_2).
    \numberthis\label{eq:bregtagnolle}
\end{align*}
We are now ready to present the proof of Theorem \ref{thm:lb-monotonic}.

\begin{proof}[Proof of Theorem \ref{thm:lb-monotonic}]
For any bandit algorithm $\A$, let $\ppAbase$ and $\ppAcstarpstar$ be the distributions of $\{\dti,\cti,\pt\}_{t\in[T],\,i\in[n]}$ when $\A$ is instantiated with $\envbase$ and $\envcstarpstar$ respectively. Let $\eeAbase$, $\eeAcstarpstar$ be the expectations with respect to $\ppAbase$, $\ppAcstarpstar$ respectively. 

A key ingredient in proving lower bounds is a uniform bound on statistical indistinguishability between $\ppAbase$ and $\ppAcstarpstar$ for any algorithm $\A$.
The following lemma proves that the KL divergence between $\ppAbase$ and $\ppAcstarpstar$ is bounded above by the expected number of times that $\A$ chooses a (near) optimal arm. Its proof uses properties of our construction and is available in Appendix \ref{proof:lem:lb-kl-decomposition}.
\begin{restatable}{lemma}{LemLowerBoundKLDecomposition}
    For $t\in[T]$, $i\in[n]$ and $(c,p)\in\C\times\P$, define the following indicator random variable:
    \begin{align*}
        \chi_{t,i}(c,p):=\ind\!\sbra{(\cti,\pt)=(c,p)}.
    \end{align*}
    Then, for any bandit algorithm $\A$ that chooses from $\C^n\times\P$,
    \begin{align*}
        \emph{KL}\left(\ppAbase\,\Big|\Big|\,\ppAcstarpstar\right)\leq \sum_{t,i\in[T]\times[n]}54\eps^2\cdot\eeAbase\left[\chi_{t,i}(c^\star,p^\star)\right].
    \end{align*}
    \label{lem:lb-kl-decomposition}
    \vspace{-1.25em}
\end{restatable}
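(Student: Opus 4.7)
The plan is to apply the chain rule for KL divergence on trajectories, exploit the fact that the two environments share the same algorithm and the same demand distributions outside a single ``bump'' region, and finally estimate a Bernoulli KL at that bump point.

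First, let $\H_t := \{(c_\tau,p_\tau,d_\tau)\}_{\tau=1}^{t-1}$ denote the history at the start of round $t$. Since $\A$ is the same randomized mapping from $\H_t$ to a distribution over $(c_t,p_t)$ in both environments, the chain rule factorization of $\ppAbase$ and $\ppAcstarpstar$ over the sequence $(c_1,p_1,d_1,\ldots,c_T,p_T,d_T)$ makes the action-selection terms cancel, leaving
\[
\textup{KL}\!\left(\ppAbase\,\Big|\Big|\,\ppAcstarpstar\right) = \sum_{t\in[T]} \eeAbase\!\left[\textup{KL}\!\left(D^{\textup{base}}_t(c_t,p_t)\,\Big|\Big|\,D^{(c^\star,p^\star)}_t(c_t,p_t)\right)\right],
\]
where $D^{\textup{base}}_t(c_t,p_t)$ and $D^{(c^\star,p^\star)}_t(c_t,p_t)$ denote the conditional laws of $d_t=(d_{t,1},\ldots,d_{t,n})$ under the two environments given the chosen $(c_t,p_t)$.

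Next, by the $n$-dimensional construction, $\{d_{t,i}\}_{i\in[n]}$ are independent conditional on $(c_t,p_t)$ under either environment and the law of each $d_{t,i}$ depends only on $(c_{t,i},p_t)$. Consequently, KL of the product distributions decomposes as
\[
\textup{KL}\!\left(D^{\textup{base}}_t(c_t,p_t)\,\Big|\Big|\,D^{(c^\star,p^\star)}_t(c_t,p_t)\right) = \sum_{i\in[n]} \textup{KL}\!\left(d^{\textup{base}}_{t,i}(c_{t,i},p_t)\,\Big|\Big|\,d^{(c^\star,p^\star)}_{t,i}(c_{t,i},p_t)\right).
\]
By \eqref{eq:hard-environment-1d-1}--\eqref{eq:hard-environment-1d-2}, the baseline and alternative one-dimensional laws agree outside the rectangle $[c^\star,c^\star+\eps/2)\times(p^\star-\eps/2,p^\star]$. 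Since $\A$ is assumed to play from $\C^n\times\P$ and the only element of $\C\times\P$ inside that rectangle is $(c^\star,p^\star)$, each per-market summand vanishes unless $(c_{t,i},p_t)=(c^\star,p^\star)$, i.e.\ unless $\chi_{t,i}(c^\star,p^\star)=1$.

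Finally, when $(c_{t,i},p_t)=(c^\star,p^\star)$, the observed demand $d_{t,i}$ is Bernoulli with mean $\alpha := g(c^\star)\cdot(2p^\star)^{-1} = c^\star/p^\star$ in the baseline and mean $\beta := (g(c^\star)+\eps)(2p^\star+\eps)^{-1} = (2c^\star+\eps)/(2p^\star+\eps)$ in the alternative, using $g(c^\star)=2c^\star$ and independence of $b$ and $v$. The restriction $(c^\star,p^\star)\in\S$ from \eqref{eq:lb-hard-set} keeps $\alpha$ inside $[1/2,3/4]$ (so that $\alpha$, and hence $\beta$ for all small enough $\eps$, stays bounded away from $\{0,1\}$), while an elementary computation gives $|\alpha-\beta|=2\eps(p^\star-c^\star)/((2p^\star+\eps)(2p^\star))=O(\eps)$. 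Plugging into the standard inequality $\textup{KL}(\textup{Ber}(\alpha)\,\|\,\textup{Ber}(\beta))\leq (\alpha-\beta)^2/(\beta(1-\beta))$ bounds the per-market KL by $54\eps^2$, after which summing over $t,i$ together with the indicator identification completes the proof.

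The main obstacle is the last step: verifying the explicit constant $54$. This requires using the precise numerical bounds $2/5\leq c^\star\leq 9/20$ and $3/5\leq p^\star\leq 4/5$ from the definition of $\S$ to control both $\beta(1-\beta)$ from below and $(\alpha-\beta)^2/\eps^2$ from above simultaneously; everything else is a routine application of the KL chain rule and product-KL decomposition.
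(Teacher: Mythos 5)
Your proposal is correct and follows the same architecture as the paper's proof: the chain rule cancels the shared policy terms, conditional independence across markets factorizes the per-round KL into a sum over $i\in[n]$, and each summand vanishes unless $(c_{t,i},p_t)=(c^\star,p^\star)$, since the two environments agree outside the bump rectangle and $(c^\star,p^\star)$ is its only grid point (both $\C$ and $\P$ have spacing $\eps/2$). The one place you genuinely diverge is the terminal Bernoulli estimate. The paper bounds the KL of the \emph{pair} $(b,\ind[v\geq p])$ as a product of two Bernoullis, i.e.\ $\textup{kl}(2c^\star,2c^\star+\eps)+\textup{kl}\big((2p^\star)^{-1},(2p^\star+\eps)^{-1}\big)\leq 50\eps^2+4\eps^2=54\eps^2$ (Lemma~\ref{lem:klbound}), which controls the KL of the observed $d=b\,\ind[v\geq p]$ only through an implicit data-processing step. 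You instead work directly with the law of the observed demand, a single Bernoulli with means $\alpha=c^\star/p^\star$ (baseline) and $\beta=(2c^\star+\eps)/(2p^\star+\eps)$ (alternative), and apply $\textup{KL}(\textup{Ber}(\alpha)\,\|\,\textup{Ber}(\beta))\leq(\alpha-\beta)^2/(\beta(1-\beta))$. This is self-contained (no data-processing appeal) and the ``obstacle'' you flag at the end is routine: from $\S$ one gets $|\alpha-\beta|\leq 2\eps(p^\star-c^\star)/(2p^\star)^2\leq \tfrac{5}{9}\eps$ and, for $\eps\leq\tfrac{1}{20}$, $\beta\in[\tfrac{16}{33},\tfrac{19}{25}]$ so $\beta(1-\beta)\geq\tfrac{1}{8}$, giving a per-term bound of at most $\tfrac{200}{81}\eps^2<3\eps^2\leq 54\eps^2$. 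So your route not only closes but yields a much smaller constant than the paper's; the only thing to add for a complete write-up is the explicit verification of these two numerical bounds.
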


For any algorithm $\A$ and stationary environment $\E$, let $R_T(\A,\E)$ be the expected regret of $\A$ up to round $T$ under $\E$. Then, for any algorithm $\A$, we have that $R_T(\A,\E)\leq \sup_{\{D_t\}_t}R_T(\A,\{D_t\}_t)$ by H{\"o}lder's inequality. Hence, the theorem follows if $\inf_{\A}\sup_{\E}R_T(\A,\E)\in\Omega((nT)^{\nicefrac{3}{4}})$. 

To this end, let $\S\subseteq\C\times\P$ be \eqref{eq:lb-hard-set}.
As an algorithm can choose only one set of costs and a price on each round, we have
\begin{align*}
    \sum_{(c,p)\in\S}\chi_{t,i}(c,p)\leq 1, 
\hspace{0.2in} \text{for all }(t,i)\in [T]\times[n].
\end{align*}
Therefore,
$\sum_{(c,p)\in\S}\sum_{t,i\in[T]\times[n]}\eeAbase\,\chi_{t,i}(c,p)\leq nT$. This implies for some $(c^\star,p^\star)\in\S$, we have $\sum_{t,i\in[T]\times[n]}\eeAbase \,\chi_{t,i}(c^\star,p^\star)\leq nT/|\S|$. 
We will now consider the environment $\envcstarpstar$ for this $(c^\star,p^\star)$,
and apply the Bregtagnolle-Huber inequality~\eqref{eq:bregtagnolle} over $\ppAcstarpstar$ and $\ppAbase$. We have,
\begin{align}
\ppAcstarpstar\Bigg(\sum_{t,i\in[T]\times[n]}\chi_{t,i}(c^\star,p^\star)\leq\frac{nT}{2}\Bigg)+\ppAbase\Bigg(\sum_{t,i\in[T]\times[n]}\chi_{t,i}(c^\star,p^\star)>\frac{nT}{2}\Bigg)\label{eq:markov}\\
    \geq\frac{1}{2}\exp\!\rbra{-\text{KL}\!\rbra{\ppAbase  \,\middle|\middle|\,\ppAcstarpstar}}\nonumber.
\end{align}
By Markov's inequality, and the observation above about the environment $\envcstarpstar$, the second term in the LHS of~\eqref{eq:markov} can be upper bounded as follows:
\begin{align*}
    \ppAbase\Bigg(\sum_{t,i\in[T]\times[n]}\chi_{t,i}(c^\star,p^\star)>\frac{nT}{2}\Bigg)\leq \frac{2}{nT}\cdot\sum_{t,i\in[T]\times[n]}\eeAbase\,\chi_{t,i}(c^\star,p^\star)\leq\frac{2}{|S|},
\end{align*}
Moreover, by Lemma \ref{lem:lb-kl-decomposition}, we have
\begin{align*}
    \exp\rbra{-\text{KL}\!\rbra{\ppAbase  \,\middle|\middle|\,\ppAcstarpstar}}\geq \exp\Bigg(-54\eps^2\cdot\sum_{t,i\in[T]\times[n]}\eeAbase\sbra{\chi_{t,i}(c^\star,p^\star)}\Bigg)\geq\exp\rbra{-\frac{54\eps^2 nT}{|S|}}.
\end{align*}
Therefore, we have the following lower bound
\begin{align*}
    \ppAcstarpstar\Bigg(\sum_{t,i\in[T]\times[n]}\chi_{t,i}(c^\star,p^\star)\leq\frac{nT}{2}\Bigg)\geq\frac{1}{2}\exp\left(-\frac{54\eps^2nT}{|\S|}\right)-\frac{2}{|\S|}.
\end{align*}
Finally, by item 3 of Lemma \ref{lem:lb-hard-environment-properties}, we can lower bound the expected regret of $\A$ under $\envcstarpstar$ as:
\begin{align*}
    R_T\!\left(\A,\,\envcstarpstar\right)&\geq \frac{\eps nT}{40}\cdot\ppAcstarpstar\Bigg(\sum_{t,i\in[T]\times[n]}\chi_{t,i}(c^\star,p^\star)\leq\frac{nT}{2}\Bigg)\\
    &\geq \frac{\eps nT}{80}\left(\exp\left(-\frac{54\eps^2 nT}{|\S|}\right)-\frac{4}{|\S|} \right)\\
    &\geq a_1\cdot\frac{nT}{K}\left(\exp\left(-a_2\cdot\frac{nT}{K^4}\right)-\frac{a_3}{K^2} \right)\tag{as $\eps:=K^{-1}$ and $|S|\in\Omega(K^2)$},
\end{align*}
for some $a_1, a_2, a_3>0$. Finally, we obtain the stated bound by choosing $K=\lceil (nT)^{\nicefrac{1}{4}} \rceil$.
\end{proof}
\section{Targeted marketing with cost-concave demands}\label{sec:cost-concave}

In this section, we study the targeted marketing problem under the assumption that the expected demands $\{\dti\}_{t,i}$ are concave in $\cti$ (Assumption~\ref{assump:costconcave}).
This captures settings where there are diminishing returns to increasing spending on marketing.
we first present our algorithm
in Section~\ref{subsec:alg-concave}. In Section \ref{subsec:ub-concave}, 
we upper bound its regret, and in Section \ref{subsec:lb-concave}, we provide a matching lower bound.

\subsection{Algorithm for cost-concave demands}\label{subsec:alg-concave}


Under Assumption~\ref{assump:costconcave},  the normalized loss functions $\elti(c_i,p)$, defined in \eqref{eq:lti}, are convex with respect to $c_i\in[0,1]$.
Hence, while we will use the same decomposed structure as in Algorithm~\ref{alg:1}, we will 
borrow techniques from the literature on bandit convex optimization~\citep{flaxman2005online,bubeck2017kernel} to perform updates for the cost distributions.
In particular, our updates are based on a simplified version of the kernelized exponential weights algorithm~\citep{bubeck2017kernel}.

In each round, our algorithm for this setting, described in Algorithm~\ref{alg:2}, samples a price $\pt$ from a (discrete) distribution $\qtz(\cdot)$; then, it samples $\cti$ from $\qti(\cdot|\pt)$ for each $i\in[n]$.
This is similar to Algorithm \ref{alg:1}. The key difference is that $\qti(\cdot|p)$, which is no longer a discrete distribution, is derived by convolving another distribution $u_{t,i}(\cdot|p)$ with a smoothing kernel. To elaborate further, we first begin by describing kernels.

\paragraph{Kernel.}
For $\del>0$, let $\I_\del:=[\del,1-\del]$. 
A kernel is a bivariate function $K(\cdot,\cdot):\I_\del\times\I_\del\rightarrow\mathbb{R}_{\geq 0}$ such that $\int_{\I_\del}K(x,y)dx=1$ for all $y\in\I_\del$.
This kernel induces a linear operator
$K:\Delta(\I_\del)\rightarrow\Delta(\I_\del)$,
which is a map from the space of distributions over $\I_\del$ to itself.
With a slight abuse of notation, we denote both by $K$.
For any input distribution $q\in\Delta(\I_\del)$, this operator outputs,
\begin{align*}
    Kq(\,\cdot\,):=\int_{\I_\del} K(\,\cdot\,,y)q(y)dy.
    \numberthis \label{eq:kerneloperator}
\end{align*}
We can now define the linear operator $K_\eps[q]$ used in
lines \ref{line:kernel} and \ref{line:alg2-qti} of 
Algorithm \ref{alg:2}.
Consider any
$\eps>0$ and distribution $q\in\Delta(\I_\del)$.
We first define the following kernel
 $K_\eps[q](\cdot,\cdot):\I_\del\times\I_\del\rightarrow\mathbb{R}_{\geq 0}$.
Denote $\mu:=\expec_{X\sim q}[X]$. We have,
\begin{align*}
    K_\eps[q](x,y):=\begin{cases}
    \frac{1}{|y-\mu|}\cdot\ind\left[x\in [\min(y,\mu),\max(y,\mu)] \right],\ \ \text{if }|y-\mu|\geq\eps.\\
    \\
    \frac{1}{\eps}\cdot\ind\left[x\in[\mu-\eps, \mu]\right],\ \ \text{if }|y-\mu|<\eps,\  \mu\geq\eps+\del.  
    \numberthis\\
    \\
    \frac{1}{\eps}\cdot\ind\left[\in[\mu, \mu+\eps]\right],\ \ \text{if }|y-\mu|<\eps,\  \mu<\eps+\del.
    \end{cases}\label{eq:kernel}
\end{align*}
In words, if $|y-\mu|\geq\eps$, $K_\eps[q](\,\cdot\,,y)$ is equivalent to the uniform pdf between $y$ and $\mu$, 
while otherwise it is equivalent to the uniform pdf 
on either $[\mu-\eps,\mu]$ or $[\mu,\mu+\eps]$, whichever is contained in $\I_\del$. 
The linear operator $K_\eps[q]:\Delta(\I_\del)\rightarrow\Delta(\I_\del)$
used in
Algorithm \ref{alg:2}, is induced by the above kernel as shown in~\eqref{eq:kerneloperator}.
A similar kernel was used by~\citet{bubeck2017kernel}.

\begin{algorithm}[t]
    \caption{Bandit targeted marketing algorithm for cost-concave demands}\label{alg:2}
    \textbf{Inputs:} learning rate $\eta>0$, bias  parameter $\gamma$, kernel parameters $\eps>0, \,\del\in(0,1)$, discretization $K\in\nat$.\\
    Let $\I_\del:=[\del,1-\del]$, all distributions over $c_i$ are supported on $\I_\del$.\\
    Let $\I_K:=\{0,K^{-1},2K^{-1},\dots,1\}$ be discrete price levels.\\
    $q_{1,0}(p)\leftarrow\uniform(\I_K)$, $u_{1,i}(c_i|\,p)\leftarrow\uniform(\I_\del)$ for each $i\in[n],\,p\in\I_K$.\\
    $q_{1,i}(c_i|\,p)=K_\eps[u_{1,i}(\cdot\,|\,p)]u_{1,i}(c_i|\,p)$ with $K_\eps[\,\cdot\,]$ defined in \eqref{eq:kernel}.\label{line:kernel}\\
    \For{$t=1,\dots,T$}{
        Sample $\pt\sim \qtz$, $\cti\sim \qti(\,\cdot\,|\,\pt)$ for each $i\in[n]$.\label{line:alg2-sample}\\
        From observations $\{\dti\}_{i\in[n]}$ compute $\{\lti\}_{i\in[n]}$ according to \eqref{eq:lti}\\
        For each $i\in[n]$, $u_{t+1,i}(c_i|\,p)\propto u_{t,i}(c_i|\,p)\cdot \exp\rbra{-\eta\,\lossF_{t,i}(c_i,p)}$, with $\lossF_{t,i}(c_i,p)$ defined  in \eqref{eq:alg2-fti}.\label{line:alg2-uti}\\
        For each $i\in[n]$, $q_{t+1,i}(c_i|\,p)=K_\eps[u_{t+1,i}(\cdot\,|\,p)]u_{t+1,i}(c_i|\,p)$, with $K_\eps[\,\cdot\,]$ defined in \eqref{eq:kernel}\label{line:alg2-qti}\\
        Update price distributions,  $q_{t+1,0}(p)\propto \qtz(p)\cdot\exp\rbra{-\eta\,\lossH_t(p)}$, with $\lossH_t(p)$ defined in \eqref{eq:alg2-ht}. \label{line:alg2-qt0}
    }
\end{algorithm}

\paragraph{Loss functions.}
Given some $\gamma>0$, we define the
following estimator $\lossF_{t,i}(\cdot,\cdot)$ for the cost-convex loss $\elti(\cdot,\cdot)$, defined in~\eqref{eq:elti}, as follows.
\begin{align*}
    \lossF_{t,i}(c_i,p):=\frac{\lti\,\ind[\pt=p]}{\qti(\cti|\pt)\rbra{\qtz(\pt)+\gamma}}\,K_\eps\left[u_{t,i}(\,\cdot\,|\,p)\right](\cti,c_i),\numberthis\label{eq:alg2-fti}
\end{align*}
where $\cti\in\I_\del$, $\pt\in\I_K$ are the random cost and price sampled at round $t$ in Line \ref{line:alg2-sample}. Compared to $\lossF_{t,i}$ \eqref{eq:alg1-fti} defined for Algorithm \ref{alg:1}, the above uses a kernel function $K_\eps\left[u_{t,i}(\,\cdot\,|\,p)\right](\cti,c_i)$ instead of the indicator $\ind[\cti=c_i]$. Roughly speaking, as a function of $c_i$, $K_\eps\left[u_{t,i}(\,\cdot\,|\,p)\right](\cti,c_i)$ is zero in the interval bounded by $\cti$ and $\expec_{c\sim u_{t,i}(\cdot|p)}[c]$.
Hence, so is $\lossF_{t,i}(c_i,p)$.
Therefore, this choice of $\lossF_{t,i}$ encourages the algorithm to explore this interval 
in future rounds. Intuitively, as convexity is a global property, searching over a wider region is potentially rewarding.

As in Algorithm \ref{alg:1}, we define a function $\lossH_t(\cdot)$ used to update $\qtz$ as follows.
This design follows a similar intuition to the one in~\eqref{eq:alg1-ht}.
Letting $\lt:=\sum_{i\in[n]}\lti$, we have,
\begin{align*}
    \lossH_t(p):=\frac{1}{n}\left(\frac{\lt\ind\!\sbra{\pt=p}}{\qtz(\pt)+\gamma}\right)+\rbra{3\eta\log\frac{e C}{\eps}}\rbra{\frac{1}{\gamma}-\frac{1}{\qtz(p)+\gamma}} \numberthis\label{eq:alg2-ht}.
\end{align*}

\subsection{Regret upper bound for cost-concave demands}\label{subsec:ub-concave}
In this section, we prove the following upper bound on the regret.

\begin{restatable}{theorem}{AlgTwoRegret}
    Assume that the expected demands $\{\dti\}_{i,t}$ satisfy Assumption~\ref{assump:costconcave}.
    Let $\eta>0$, $K\in\nat$.
    When $\eps:=T^{-2}$, $\gamma:=\eta\log (e/\eps)$, and $\del:=T^{-1}$, 
    the regret~\eqref{eq:regret} of Algorithm \ref{alg:2} satisfies
    \begin{align*}
       R_T\in \O\rbra{n\eta KT\log T + \frac{n}{\eta}\log KT+\frac{nT}{K}},
    \end{align*}
    Especially, by choosing $K\in\Theta(T^{\nicefrac{1}{3}})$, $\eta\in\Theta(T^{-\nicefrac{2}{3}})$, Algorithm \ref{alg:2} guarantees $R_T\in\O\rbra{nT^{\nicefrac{2}{3}}\log T}$.
    \label{thm:alg2-regret}
\end{restatable}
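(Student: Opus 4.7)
The plan is to mirror the five-step proof of Theorem~\ref{thm:alg1-regret}, adapting each step to the continuous cost domain $\I_\del$ and the kernelized exponential-weights update. The update to the price distribution $\qtz$ is unchanged from Algorithm~\ref{alg:1}, so its exp-weights regret bound is identical to \eqref{eq:alg1-exp-bounds-2} with $|\I_K|=K+1$. For the cost iterates, standard Hedge analysis applied to the continuous $\{u_{t,i}\}_t$ (with $u_{1,i}$ uniform on $\I_\del$) yields, for every $c^\star_i \in \I_\del$ and $p^\star \in \I_K$,
\[
\sum_{t\in[T]}\innp{u_{t,i}\,,\lossF_{t,i}}_{p^\star}-\sum_{t\in[T]}\lossF_{t,i}(c^\star_i,p^\star)\leq \eta\sum_{t\in[T]}\innp{u_{t,i}\,,\lossF^2_{t,i}}_{p^\star}+\frac{1}{\eta}\log\frac{1}{\del}.
\]

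Next, I would decompose $R_T \leq \max_{(c^\star,p^\star) \in \I_\del^n \times \I_K} R_T(c^\star,p^\star) + \widetilde{R}_T$ exactly as in \eqref{eq:alg1-regretdecomp}. The residual $\widetilde{R}_T$ now has two sources: price discretization contributes $\O(nT/K)$ via the monotonicity-in-$p$ argument of Lemma~\ref{lem:alg1-discretization-error} (Assumption~\ref{assump:costconcave} still gives non-increasingness in $p$), and restricting the cost domain to $\I_\del=[\del,1-\del]$ contributes $\O(nT\del)$. The latter is a standard set-shrinkage step in bandit convex optimization: concavity of $\edti(\cdot,p)$ combined with boundedness of the profit in $[-1,1]$ lets one round a boundary optimum to the nearest endpoint of $\I_\del$ at a per-round per-market cost of $\O(\del)$. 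With $\del=T^{-1}$, this residual is $\O(n)$.

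The heart of the argument is Step~3, the analog of Lemma~\ref{lem:alg1-comparator-loss-bound}. Two properties of the kernel $K_\eps$ in \eqref{eq:kernel} are needed. First, using $\int K_\eps[u_{t,i}(\cdot|p^\star)](\cti,c)\,u_{t,i}(c|p^\star)\,dc = \qti(\cti|p^\star)$, a direct calculation gives $\expec_t[\innp{u_{t,i},\lossF_{t,i}}_{p^\star}] = \expec_t[\lti\,\ind[\pt=p^\star]/(\qtz(\pt)+\gamma)]$, exactly matching the RHS of the analogous identity for Algorithm~\ref{alg:1}. Second, and more importantly, the core-kernel second-moment bound $\int u(c)\,K_\eps[u](x,c)^2\,dc \leq C\log(1/\eps)\cdot (K_\eps[u]\,u)(x)$ for a universal $C$ yields
\[
\expec_t\!\sbra{\innp{u_{t,i}\,,\lossF^2_{t,i}}_{p^\star}} \leq C\log(1/\eps) \cdot \expec_t\!\sbra{\frac{1}{\qtz(p^\star)+\gamma}}.
\]
This is the critical improvement over Lemma~\ref{lem:alg1-step3-1}: the multiplicative $|\I_K|$ becomes $C\log(1/\eps)$, because convexity of $\elti(\cdot,p)$ lets the kernel smooth the estimator across an interval rather than concentrating on a single grid cell. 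The bias bound $\expec_t[\lossF_{t,i}(c^\star_i,p^\star)]\leq \elti(c^\star_i,p^\star)$ follows from applying Jensen's inequality to the uniform conditional averages built into $K_\eps$ together with convexity of $\elti(\cdot,p^\star)$. Summing over $i\in[n]$ and canceling against the $3\eta\log(eC/\eps)$ coefficient in \eqref{eq:alg2-ht}---chosen precisely to absorb the new second-moment factor---reproduces the structure of Lemma~\ref{lem:alg1-step3-2}.

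Steps~4 and~5 are then bookkeeping. The algorithm-loss bound (analog of Lemma~\ref{lem:alg1-algorithm-loss-bound}) gives $\expec[\innp{\qtz, \lossH^2_t}] = \O(|\I_K|) = \O(K)$ per round---one factor of $K$, not $K^2$---since the coefficient on the second term of $\lossH_t$ is the bounded quantity $3\eta\log(eC/\eps)$ rather than $\eta|\I_K|$. Assembling all the bounds under $\eps=T^{-2}$, $\gamma=\eta\log(e/\eps)$, and $\del=T^{-1}$ yields $R_T \in \O(n\eta KT\log T + (n/\eta)\log(KT) + nT/K)$, and tuning $K=\Theta(T^{1/3})$, $\eta=\Theta(T^{-2/3})$ delivers $\widetilde{\O}(nT^{2/3})$. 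The main obstacle is Step~3: verifying the two kernel identities for the piecewise-uniform construction in \eqref{eq:kernel}, with particular care at the boundary cases $|y-\mu|<\eps$ where the support shifts to $[\mu-\eps,\mu]$ or $[\mu,\mu+\eps]$ to keep $K_\eps[u]\,u$ supported in $\I_\del$. One must check that these cases neither violate the identity $K_\eps[u]u$ being a valid density nor degrade the $\log(1/\eps)$ second-moment bound near the endpoints of $\I_\del$.
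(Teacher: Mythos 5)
Your overall architecture matches the paper's proof---the five-step mirror of Theorem~\ref{thm:alg1-regret}, the first-moment identity $\expec_t[\innp{u_{t,i},\lossF_{t,i}}_{p^\star}]=\expec_t[\lti\,\ind[\pt=p^\star]/(\qtz(\pt)+\gamma)]$, the $\O(\log(1/\eps))$ second-moment bound replacing the factor $|\I_K|$, and the $\O(nT/K+\del T)$ residual are all exactly what the paper does. However, there is a genuine gap at the center of your Step~3: the claimed bias bound $\expec_t[\lossF_{t,i}(c^\star_i,p^\star)]\leq\elti(c^\star_i,p^\star)$ is false. The kernelized estimator satisfies $\expec_t[\lossF_{t,i}(c,p^\star)]\approx\int K_\eps[u_{t,i}(\cdot\,|\,p^\star)](c',c)\,\elti(c',p^\star)\,dc'$, and for $|c-\mu|\geq\eps$ this is the average of $\elti$ over the uniform distribution on the segment between $c$ and the current mean $\mu$ of $u_{t,i}(\cdot\,|\,p^\star)$. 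Convexity (Jensen) gives only $\expec_U[\elti(Uc+(1-U)\mu,p^\star)]\leq \tfrac12\elti(c,p^\star)+\tfrac12\elti(\mu,p^\star)$, and the term $\elti(\mu,p^\star)$---the loss at the algorithm's own current mean---cannot be dropped: it can exceed $\elti(c^\star_i,p^\star)$ by a constant, so your inequality would hold only if the iterate were already as good as the comparator, which is what one is trying to prove.

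The paper's Lemma~\ref{lem:alg2-comparator-loss-2} is precisely the correct replacement: it establishes the self-bounding inequality
\[
2\,\expec\!\sbra{\innp{q,\lossF_{t,i}}_{p^\star}}\leq \innp{q,\elti}_{p^\star}+\expec\!\sbra{\innp{u_{t,i},\lossF_{t,i}}_{p^\star}}+\expec\!\sbra{\frac{\eta\log(e/\eps)}{\qtz(p^\star)+\gamma}}+7\eps\,\del^{-1},
\]
in which the algorithm's own expected estimated loss appears on the right with coefficient $\tfrac12$ and is recombined with the telescoping in Steps~4--5; this is where the factor $2/\eta$ in front of the KL term in the paper's \eqref{eq:proof-comparator-loss-bound-1} comes from. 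Relatedly, your continuous Hedge bound is stated against a point comparator $c^\star_i$ with penalty $\frac{1}{\eta}\log\frac{1}{\del}$; since the KL divergence of a point mass from $\uniform(\I_\del)$ is infinite, one must compare against a smoothed comparator (the paper uses $q_i=\uniform((1-s)c^\star_i+s\I_\del)$ with $s=T^{-2}$, paying $\frac{1}{\eta}\log(1/s)$ plus a $2s\del^{-1}$ Lipschitz correction from Lemma~\ref{lem:alg2-convex-loss-properties}). Neither repair changes the final rates, but without the self-bounding form of the comparator inequality the proof as you have written it does not close.
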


When compared to the bound in Theorem~\ref{thm:alg1-regret},
the dependence on $T$ is improved from $T^{3/4}$ to $T^{2/3}$. 
Our proof uses similar intuitions to Section~\ref{subsec:ub-monotnic}, along with some techniques adapted from~\citet{bubeck2017kernel}.
Hence, we only state the main steps and defer most details to the appendix.

\paragraph{Proof of Theorem \ref{thm:alg2-regret}.}
We first introduce some notation.
For a conditional distribution $q(c|p)$ over $\I_\del$ and function $g(c,p)$ let $\innp{q,p}_p:=\int_{\I_\del} q(c|p)g(c,p)dc$. For a distribution $q(p)$ and function $g(p)$ over $\I_K$, let $\innp{q,g}:=\sum_{p\in\I_K}q(p)g(p)$ as before. Throughout the proof, we set parameters $\gamma:=\eta\log (e/\eps)$, $\eps:=T^{-2}$ and $\del:=T^{-1}$ as stated in the theorem.

\paragraph{Step 1. [Preparation]}
We start by presenting some basic properties of the expected loss $\elti(\cdot,\cdot)$ \eqref{eq:elti} under the cost-concave demand assumption (its proof is given in Appendix \ref{proof:lem:alg2-convex-loss-properties}).
\begin{restatable}{lemma}{LemAlgTwoConvexLossProperties}
    Under Assumption \ref{assump:costconcave}, the following holds for any $\del\leq 1/2$, $p^\star\in\I_K$, $t\in[T]$ and $i\in[n]$:
    \begin{enumerate}
        \item $\elti(\,\cdot\,, p^\star)$ is $\del^{-1}$-Lipschitz over $\I_\del:=[\del,1-\del]$.
        \item For any $c^\star_i<\del$, $\elti(\del,p^\star)\leq \elti(c^\star_i,p^\star)+2\del$.
        \item For any $c^\star_i> 1-\del$, $\elti(1-\del,p^\star)\leq \elti(c^\star_i,p^\star)+2\del$.
    \end{enumerate}
    \label{lem:alg2-convex-loss-properties}
\end{restatable}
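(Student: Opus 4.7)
The plan is to first observe that under Assumption~\ref{assump:costconcave}, the loss
\[
\elti(c_i, p^\star) = \tfrac12\bigl(1 + c_i - p^\star\,\edti(c_i, p^\star)\bigr)
\]
is, as a function of $c_i$ with $p^\star \in [0,1]$ fixed, the sum of a linear term and $-\tfrac{p^\star}{2}\edti(\cdot, p^\star)$, which is convex because $\edti(\cdot, p^\star)$ is concave and $p^\star \geq 0$. Moreover $\elti$ maps $[0,1]$ into $[0,1]$, since $\eprofit_{t,i} \in [-1,1]$. All three claims then reduce to standard facts about one-dimensional convex functions bounded in $[0,1]$, with the interval $[\delta, 1-\delta]$ playing the role of a ``safe'' subinterval away from the boundary.

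For item~1 (Lipschitzness), I would fix $c_1 < c_2$ in $\I_\del$ and use the classical monotonicity of chord slopes for convex functions: the slope $(\elti(c_2, p^\star) - \elti(c_1, p^\star))/(c_2 - c_1)$ is sandwiched between $(\elti(c_1, p^\star) - \elti(0, p^\star))/c_1$ and $(\elti(1, p^\star) - \elti(c_2, p^\star))/(1 - c_2)$. Since $|\elti| \leq 1$ and both $c_1, 1-c_2 \geq \delta$, each of these endpoints is bounded in absolute value by $\del^{-1}$, giving the desired Lipschitz constant.

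For item~2, assume $c^\star_i < \del$ and write $\del$ as the convex combination $\del = \lambda c^\star_i + (1-\lambda)\cdot 1$ with $\lambda = (1-\del)/(1-c^\star_i)$, which lies in $(0,1]$ because $c^\star_i \leq \del$. Convexity of $\elti(\cdot, p^\star)$ yields
\[
\elti(\del, p^\star) \;\leq\; \lambda\,\elti(c^\star_i, p^\star) + (1-\lambda)\,\elti(1, p^\star) \;\leq\; \lambda\,\elti(c^\star_i, p^\star) + (1-\lambda),
\]
so $\elti(\del, p^\star) - \elti(c^\star_i, p^\star) \leq (1-\lambda)(1 - \elti(c^\star_i, p^\star)) \leq 1-\lambda = (\del - c^\star_i)/(1-c^\star_i)$. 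Using $\del \leq 1/2$, we have $1 - c^\star_i \geq 1 - \del \geq 1/2$, so this is at most $2(\del - c^\star_i) \leq 2\del$. Item~3 is entirely symmetric: I write $1-\del = \lambda c^\star_i + (1-\lambda)\cdot 0$ with $\lambda = (1-\del)/c^\star_i$, apply convexity, and use $c^\star_i \geq 1 - \del \geq 1/2$ to conclude $1 - \lambda = (c^\star_i - (1-\del))/c^\star_i \leq 2\del$.

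None of the steps are truly difficult; the only point of minor care is keeping track of the hypothesis $\del \leq 1/2$ that makes the multiplicative factor $1/(1-c^\star_i)$ (respectively $1/c^\star_i$) bounded by $2$ when deriving the $2\del$ bounds in items~2 and~3. The rest is just the convexity and $[0,1]$-boundedness of $\elti(\cdot, p^\star)$, both of which follow immediately from Assumption~\ref{assump:costconcave} and the definition of $\elti$ in~\eqref{eq:elti}.
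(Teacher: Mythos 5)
Your proposal is correct and follows essentially the same route as the paper: both establish convexity and $[0,1]$-boundedness of $\elti(\cdot,p^\star)$, prove item~1 by sandwiching chord slopes between the boundary slopes over $[0,\del]$ and $[1-\del,1]$, and derive items~2 and~3 from convexity across the endpoints $0$ and $1$ together with $\del\leq 1/2$ (the paper phrases the latter via the three-point slope inequality rather than your explicit convex combination, but the two are equivalent).
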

\noindent
Result (1) of the above lemma states that the losses are Lipschitz continuous in the interval $\I_\del$, while  (2) and (3) state that we do not lose much by choosing costs only in $\I_\del$ instead of $[0,1]$. The subsequent steps are analogous to that of the proof of Theorem \ref{thm:alg1-regret}: First, we decompose the regret. Then, we lower and upper bound the comparator's loss and algorithm's loss, respectively.

\paragraph{Step 2. [Regret decomposition]}
Using essentially the same argument as~\eqref{eq:alg1-regretdecomp}, we can show that the regret can be decomposed as follows:
\begin{align*}  
    R_T=\sup_{(c^\star,p^\star)\in\I_\del^n\times\I_K}R_T(c^\star,p^\star)+\widetilde{R}_T(n,\del,K).
    \numberthis\label{eq:alg2-regretdecomp}
\end{align*}
Here $R_T(c^\star,p^\star)$,  is the regret of our algorithm relative to a given set of costs and prices $(c^\star, p^\star)\in\I_\del^n\times \I_K$.
Recall that Algorithm~\ref{alg:2}  chooses costs $\cti\in\I_\del$ and prices $\pt\in\I_K$.
We have,
\begin{align*}
    R_T(c^\star,p^\star):=2\rbra{\sum_{t\in[T]}\expec\sbra{\lt}-\sum_{t\in[T]}\elt(c^\star,p^\star)}.
\end{align*}
Moreover, $\widetilde{R}_T(n,\del,K)$ is the residual
regret due to only focusing on $\I_\del^n\times \I_K$. We have,
\begin{align*}
    \widetilde{R}_T(n,\del,K) = 2\Big(\inf_{(c^\star,p^\star)\in\I_\del^n\times\I_K}\sum_{t\in[T]}\elt(c^\star,p^\star)-\inf_{(c^\star,p^\star)\in[0,1]^{n+1}}\sum_{t\in[T]}\elt(c^\star,p^\star) \Big).
\end{align*}
Following the same argument as Lemma \ref{lem:alg1-discretization-error} and using Lemma \ref{lem:alg2-convex-loss-properties}, we immediately see that $\widetilde{R}_T(n,\del,K)\in\O(nT/K+\del T)=\O(nT/K)$, as we set $\del:=T^{-1}$ throughout this proof.

\paragraph{Step 3. [Bounding the comparator loss]}
Next, we lower bound the comparator loss $\sum_{t}\elt(c^\star,p^\star)$ where $(c^\star,p^\star)\in\I_\del^n\times\I_K$. The main result of this step is the following lemma.
\begin{restatable}{lemma}{LemAlgTwoComparatorLossBound}
    For any $(c^\star,p^\star)\in \I^n_\del\times\I_K$, we have
    \begin{align*}
        \expec\Bigg[\sum_{t\in[T]}\lossH_t(p^\star)\Bigg]-\frac{3 \eta T}{\gamma}\log\frac{e}{\eps}\leq\frac{1}{n}\sum_{t\in[T]}\elt(c^\star,p^\star)+\O\rbra{\eps\,T + \eta|\I_K|T + \frac{1}{\eta}\log T}.
    \end{align*}
    \label{lem:alg2-comparator-loss-bound}
    \vspace{-1em}
\end{restatable}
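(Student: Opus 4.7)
The proof will mirror Step 3 of the proof of Theorem~\ref{thm:alg1-regret} (specifically the derivation of Lemma~\ref{lem:alg1-comparator-loss-bound}), with two substantive changes stemming from (i) the continuous cost domain $\I_\del$, which forces me to use a \emph{smoothed} comparator inside the exponential-weights regret bound for the updates of $u_{t,i}$, and (ii) the kernel-based loss estimator $\lossF_{t,i}$, whose conditional mean is a kernel-smoothed version of $\elti$ rather than $\elti$ itself, so I must exploit the convexity of $\elti(\cdot,p^\star)$ provided by Assumption~\ref{assump:costconcave}. As a preliminary reduction, I would first establish the identity
\[
\expec\!\Big[\sum_{t}\lossH_t(p^\star)\Big]-\tfrac{3\eta T}{\gamma}\log\tfrac{e}{\eps}=\tfrac{1}{n}\sum_{i}\expec\!\Big[\sum_{t}\langle u_{t,i},\lossF_{t,i}\rangle_{p^\star}\Big]-3\eta\log\tfrac{e}{\eps}\cdot\expec\!\Big[\sum_{t}\tfrac{1}{\qtz(p^\star)+\gamma}\Big],
\]
which follows from the definition~\eqref{eq:alg2-ht} together with the kernel-induced equality $\expec[\langle u_{t,i},\lossF_{t,i}\rangle_{p^\star}]=\expec\!\big[\tfrac{\qtz(p^\star)}{\qtz(p^\star)+\gamma}\langle \qti,\elti\rangle_{p^\star}\big]$ obtained by swapping the order of integration and using $\int u_{t,i}(c|p^\star)K_\eps[u_{t,i}(\cdot|p^\star)](\cti,c)\,dc=\qti(\cti|p^\star)$.

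Next, I would apply the continuous-domain exponential-weights regret bound to the $u_{t,i}$ updates of line~\ref{line:alg2-uti}, comparing against the smoothed comparator $r^\star_i:=\uniform(B(c^\star_i,\eps)\cap\I_\del)$. Since $u_{1,i}$ is uniform on $\I_\del$ of length $\Theta(1)$ and the ball has measure $\Theta(\eps)=\Theta(T^{-2})$, the KL divergence $\text{KL}(r^\star_i\,\|\,u_{1,i})$ is $O(\log T)$, giving, for each $i\in[n]$,
\[
\expec\!\Big[\sum_{t}\langle u_{t,i}-r^\star_i,\lossF_{t,i}\rangle_{p^\star}\Big]\leq\eta\,\expec\!\Big[\sum_{t}\langle u_{t,i},\lossF_{t,i}^2\rangle_{p^\star}\Big]+O\!\big(\tfrac{1}{\eta}\log T\big).
\]

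The crux is to control the two $\lossF_{t,i}$-moments. For the \emph{first moment}, I would expand $\expec\!\big[\langle r^\star_i,\lossF_{t,i}(\cdot,p^\star)\rangle_{p^\star}\big]$ via the kernel definition: letting $\mu_{t,i}:=\expec_{u_{t,i}(\cdot|p^\star)}[c]$, the inner integral $\int\elti(x,p^\star)K_\eps[u_{t,i}(\cdot|p^\star)](x,c)\,dx$ is (outside a negligible edge case) the average of $\elti(\cdot,p^\star)$ over $[\min(c,\mu_{t,i}),\max(c,\mu_{t,i})]$, which Jensen's inequality and the convexity of $\elti(\cdot,p^\star)$ bound by $\tfrac{\elti(c,p^\star)+\elti(\mu_{t,i},p^\star)}{2}$. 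Averaging $c$ over $B(c^\star_i,\eps)$ and absorbing the Lipschitz slack via Lemma~\ref{lem:alg2-convex-loss-properties}(1), this yields $\expec\!\big[\langle r^\star_i,\lossF_{t,i}\rangle_{p^\star}\big]\leq\expec\!\big[\tfrac{\qtz(p^\star)}{\qtz(p^\star)+\gamma}\cdot\tfrac{\elti(c^\star_i,p^\star)+\elti(\mu_{t,i},p^\star)}{2}\big]+O(\eps/\del)$. For the \emph{second moment}, using $\|K_\eps[u]\|_\infty\leq 1/\eps$ together with $\qti=K_\eps[u_{t,i}]u_{t,i}$ and an explicit integral estimate (splitting the kernel's support across $|y-\mu|\gtrless\eps$), I would show $\expec[\langle u_{t,i},\lossF_{t,i}^2\rangle_{p^\star}]\leq 3\log(e/\eps)\,\expec[(\qtz(p^\star)+\gamma)^{-1}]$; the coefficient $3\eta\log(e/\eps)$ in $\lossH_t$~\eqref{eq:alg2-ht} is chosen precisely so that $\eta$ times this bound cancels the matching term.

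To finish, I would substitute both moment bounds into the exp-weights inequality, sum over $i$, and handle the spurious $\tfrac{1}{2}\elti(\mu_{t,i},p^\star)$ term using the key observation that $\expec_{c\sim\qti(\cdot|p^\star)}[c]=\mu_{t,i}$ (verified by a short calculation across the three cases in~\eqref{eq:kernel}). Convexity then gives $\elti(\mu_{t,i},p^\star)\leq\langle\qti,\elti\rangle_{p^\star}$, so this term can be moved to the LHS where it absorbs half of $\langle\qti,\elti\rangle_{p^\star}$; multiplying through by $2$, dropping the factor $\tfrac{\qtz(p^\star)}{\qtz(p^\star)+\gamma}\leq 1$ on the right, and substituting back into the preliminary identity yields the stated bound (noting $\eps/\del=T^{-1}$ and $\eps T=T^{-1}$ under the parameter choices). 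The main obstacle is the first-moment calculation: reducing the kernel integral to a Jensen-style convex bound and then exploiting the mean-preservation $\expec_{\qti}[c]=\mu_{t,i}$ to cancel the extra $\elti(\mu_{t,i})/2$ is the place where the argument leans most heavily on the specific kernel design of~\citet{bubeck2017kernel} and on Assumption~\ref{assump:costconcave}; additionally, carefully verifying that the constant in the second-moment bound matches the coefficient in $\lossH_t$ requires a nontrivial integration against the kernel.
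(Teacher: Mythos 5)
Your overall architecture is the same as the paper's: the identity $\expec[\innp{u_{t,i},\lossF_{t,i}}_{p^\star}]=\expec\big[\tfrac{\qtz(p^\star)}{\qtz(p^\star)+\gamma}\innp{\qti,\elti}_{p^\star}\big]$ obtained by integrating against the kernel, a continuous exponential-weights bound against a comparator smoothed to measure $\Theta(T^{-2})$ (so the KL is $O(\log T)$), a Hermite--Hadamard/convexity step reducing $\innp{K_\eps[u_{t,i}]q,\elti}_{p^\star}$ to $\tfrac12(\elti(c^\star_i,p^\star)+\elti(\mu_{t,i},p^\star))$ plus $O(\eps/\del)$, Jensen to convert $\elti(\mu_{t,i},p^\star)$ into $\innp{\qti,\elti}_{p^\star}$ and thence back into $\innp{u_{t,i},\lossF_{t,i}}_{p^\star}$ so it can be moved to the left and the inequality doubled, and a second-moment bound of order $\log(e/\eps)/(\qtz(p^\star)+\gamma)$ that must cancel against the $3\eta\log(e/\eps)$ compensator built into $\lossH_t$. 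These are exactly the contents of the paper's Lemmas \ref{lem:alg2-continuous-EXP-bound}, \ref{lem:alg2-comparator-loss-1} and \ref{lem:alg2-comparator-loss-2}. Your one genuine simplification is to keep the factor $\tfrac{\qtz(p^\star)}{\qtz(p^\star)+\gamma}$ and use the exact identity rather than the paper's one-sided bound (item 2 of Lemma \ref{lem:alg2-comparator-loss-1}), which saves one $\gamma/(\qtz(p^\star)+\gamma)$ slack term.

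The concrete problem is the constant bookkeeping at the step you yourself identify as the crux. As you note, the $\tfrac{1}{\qtz(p^\star)+\gamma}$ terms cannot be bounded except by $1/\gamma$, so any residual coefficient $c\,\eta\log(e/\eps)$ beyond the $3\eta\log(e/\eps)$ available in $\lossH_t$ leaves a term of order $c\,\eta\log(e/\eps)\,T/\gamma=c\,T$, which destroys the bound. With the coefficients you state --- $\eta$ (rather than $\eta/2$) on the second moment in the exp-weights inequality, and a second-moment bound of $3\log(e/\eps)$ (rather than the sharp $2\log(e/\eps)$) --- you accumulate $3\eta\log(e/\eps)/(\qtz(p^\star)+\gamma)$ \emph{before} the move-to-the-left-and-double step, hence $6\eta\log(e/\eps)$ after it, exceeding the available $3\eta\log(e/\eps)$ and leaving a $\Theta(T)$ residual. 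The fix is to use the standard $\tfrac{\eta}{2}\innp{u_{t,i},\lossF_{t,i}^2}_{p^\star}$ form of the exponential-weights bound (valid since $\lossF_{t,i}\geq 0$) together with the sharp second-moment constant $2\log(e/\eps)$ from the kernel integral $\int q^{(2)}_{t,i}/\qti\leq 2\log(e/\eps)$; then the doubled coefficient is $2\eta\log(e/\eps)\leq 3\eta\log(e/\eps)$ and the residual is nonpositive. A second, minor, inaccuracy: the kernel does \emph{not} exactly preserve the mean, i.e.\ $\expec_{c\sim\qti(\cdot|p^\star)}[c]\neq\mu_{t,i}$ in general; the two means differ by $O(\eps)$ (the paper uses $|\mu-\tilde\mu|\leq 2\eps$), which is harmless since the $\del^{-1}$-Lipschitz bound converts it into an $O(\eps/\del)=O(T^{-1})$ per-round error, but the Jensen step should be run with $\tilde\mu:=\expec_{\qti(\cdot|p^\star)}[c]$ rather than with $\mu_{t,i}$ directly.
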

\noindent Our proof of the above lemma uses the same intuitions as Lemma~\ref{lem:alg1-comparator-loss-bound},
along with some techniques adapted from \citet{bubeck2017kernel}. We give its detailed proof in Appendix \ref{subsec:proof-of-comparator-loss-bound}.

\paragraph{Step 4. [Bounding the algorithm's loss]}
We now upper bound the algorithm's loss via the following lemma. Its proof, given in Appendix \ref{proof:lem:alg2-algorithm-loss-bound}, uses a straightforward calculation.
\begin{restatable}{lemma}{LemAlgTwoAlgorithmLossBound}
The following bound holds for the cumulative losses,
    \begin{align*}
        \frac{1}{n}\sum_{t\in[T]}\expec\!\sbra{\lt}\leq\sum_{t\in[T]}\expec\!\sbra{\innp{\qtz\,,\lossH_t}}-\frac{3\eta T}{\gamma}\log\frac{e}{\eps}+4\eta|\I_K|T\log(e/\eps).
    \end{align*}
    \label{lem:alg2-algorithm-loss-bound}
    \vspace{-1em}
\end{restatable}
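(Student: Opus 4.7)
The plan is to prove the inequality round-by-round. I would condition on the history through round $t-1$, which fixes $\qtz$ and the conditional distributions, and take the round-$t$ expectation $\expec_t[\cdot]$ over the random draw $\pt\sim\qtz$ and the realized demands $\{\dti\}_i$. Summing over $t$ and applying the tower property then gives the stated bound, so the problem reduces to a pointwise lower bound on $\expec_t[\innp{\qtz,\lossH_t}]$ in terms of $\expec_t[\lt]/n$.

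First I would expand $\innp{\qtz,\lossH_t}=\sum_{p\in\I_K}\qtz(p)\lossH_t(p)$ using the definition~\eqref{eq:alg2-ht}. The indicator $\ind[\pt=p]$ collapses the first term to $\qtz(\pt)\,\lt/(n(\qtz(\pt)+\gamma))$, and the algebraic identity $\frac{1}{\gamma}-\frac{1}{q+\gamma}=\frac{q}{\gamma(q+\gamma)}$ turns the exploration part into $\frac{L}{\gamma}\sum_p \frac{\qtz(p)^2}{\qtz(p)+\gamma}$, where $L:=3\eta\log(e/\eps)$. Using $\qtz(p)^2=\qtz(p)(\qtz(p)+\gamma)-\gamma\,\qtz(p)$ and $\sum_p \qtz(p)=1$, this last sum equals
\[
\frac{L}{\gamma}-L\sum_{p\in\I_K}\frac{\qtz(p)}{\qtz(p)+\gamma}.
\]

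Next I would take $\expec_t[\cdot]$ of the bandit piece. Conditioning on $\pt$ and using the identity $\frac{q}{q+\gamma}=1-\frac{\gamma}{q+\gamma}$ together with the fact that $\lti\in[0,1]$ implies $\expec_t[\lt\mid\pt=p]\leq n$, I would obtain
\[
\expec_t\!\sbra{\frac{\qtz(\pt)\,\lt}{n(\qtz(\pt)+\gamma)}}\;\geq\;\frac{1}{n}\expec_t[\lt]\;-\;\gamma\sum_{p\in\I_K}\frac{\qtz(p)}{\qtz(p)+\gamma}.
\]
Combining the two pieces gives
\[
\expec_t\sbra{\innp{\qtz,\lossH_t}}\;\geq\;\frac{1}{n}\expec_t[\lt]\;+\;\frac{L}{\gamma}\;-\;(\gamma+L)\sum_{p\in\I_K}\frac{\qtz(p)}{\qtz(p)+\gamma}.
\]
Since $\qtz(p)/(\qtz(p)+\gamma)\leq 1$ the residual sum is at most $|\I_K|$, and the choice $\gamma=\eta\log(e/\eps)$ yields $\gamma+L=4\eta\log(e/\eps)$. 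Rearranging and summing over $t\in[T]$ (with the tower property replacing $\expec_t$ by $\expec$) gives exactly the claimed bound.

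The step I would flag as the main (minor) obstacle is bookkeeping: making sure the $\gamma/(q+\gamma)$ correction carries the correct sign in both the bandit piece and the exploration piece, and that the inequality $\expec_t[\lt\mid\pt=p]\leq n$ is applied in the direction that yields a \emph{lower} bound on $\expec_t[\innp{\qtz,\lossH_t}]$ (equivalently, an upper bound on $\expec_t[\lt]/n$). The constant $3\eta\log(e/\eps)$ in $\lossH_t$ was chosen precisely so that the resulting $L/\gamma$ term matches the analogous negative constant in Lemma~\ref{lem:alg2-comparator-loss-bound}, ensuring the two bounds cancel cleanly in the wrap-up combining the comparator loss and algorithm loss.
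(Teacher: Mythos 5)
Your proposal is correct and follows essentially the same route as the paper's proof: expand $\innp{\qtz,\lossH_t}$ into the bandit and exploration pieces, use $\frac{q}{q+\gamma}=1-\frac{\gamma}{q+\gamma}$ together with $\lt\le n$ to peel off $\frac{1}{n}\expec[\lt]$, and bound the residual $\sum_{p}\frac{\qtz(p)}{\qtz(p)+\gamma}\le|\I_K|$ so that the $\gamma$- and $L$-corrections combine into $4\eta|\I_K|\log(e/\eps)$. The only differences are cosmetic (you work round-by-round with $\expec_t$ and with $\lt$ directly, while the paper takes the full expectation and sums over $i$ at the end), so no further changes are needed.
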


\paragraph{Step 5. [Wrap up]}
Finally, by combining Lemma \ref{lem:alg2-comparator-loss-bound}, \ref{lem:alg2-algorithm-loss-bound} and $\eps:=T^{-2}$, we obtain
\begin{align*}
    R_T(c^\star,p^\star):=\expec\Bigg[\sum_{t\in[T]}\lt\Bigg]-\sum_{t\in[T]}\elt(c^\star,p^\star)\leq\O\rbra{\eta|\I_K|T\log T+\frac{1}{\eta}\log|\I_K|T},
\end{align*}
over all $(c^\star,p^\star)\in\I_\del^n\times\I_K$. Using the regret decomposition in~\eqref{eq:alg2-regretdecomp}, the fact that $\widetilde{R}_T(n,\del,K)\in\O(nT/K)$ (see step 2), and noting that $|\I_K|=K+1$, we obtain the claimed regret bound.
\qedsymbol
\subsection{Lower bound for cost-concave demands}\label{subsec:lb-concave}
In this section, we present Theorem \ref{thm:lb-concave}, a regret lower bound of $\Omega((nT)^{\nicefrac{2}{3}})$, for the targeted marketing problem with cost-concave demands.
This shows that Algorithm~\ref{alg:2} is minimax optimal upto $\log$ factors.
Its proof, given in Appendix \ref{app:proof:thm:lb-concave} for completeness, is a straightforward adaptation of the $\Omega(T^{\nicefrac{2}{3}})$ lower bound for the online pricing problem in~\citet{kleinberg2003value}.

\begin{restatable}{theorem}{ThmLowerBoundConcave}
    For any algorithm $\A$ for the targeted marketing problem and a demand sequence $\{D_t\}_t$, let $R_T(\A,\{D_t\}_t)$ be the expected regret achieved by $\A$ under $\{D_t\}_t$ up to round $T$. Then,
    $\textup{inf}_\A\textup{sup}_{\{D_t\}_t} R_T\big(\A, \{D_t\}_t\big)\in \Omega(nT^{\nicefrac{2}{3}}),$
    where the supremum is taken over all $n$ market demand sequences $\{D_t\}_t$ that satisfies the cost-concave demands assumption (Assumption \ref{assump:costconcave}).
    \label{thm:lb-concave}
\end{restatable}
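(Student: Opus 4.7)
The plan is to adapt \citet{kleinberg2003value}'s classical $\Omega(T^{\nicefrac{2}{3}})$ lower bound for noisy single-item pricing to the $n$-market targeted marketing setting. The argument will have the same three-part structure as Theorem~\ref{thm:lb-monotonic}: reduce to a stochastic setting, construct a family of statistically indistinguishable alternatives, and bound the regret via hypothesis testing.

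First, I would reduce the adversarial problem to a stochastic one. Since any algorithm guaranteeing regret against all adversarial sequences $\{D_t\}_t$ satisfying Assumption~\ref{assump:costconcave} also guarantees the same regret on stationary stochastic instances that satisfy the assumption, it suffices to construct such a stationary family and show $\Omega(nT^{\nicefrac{2}{3}})$ regret.

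Second, I would construct the environments by embedding \citet{kleinberg2003value}'s classical ``small bump'' construction into each market. In the single-market version, the baseline is a Bernoulli demand $\bar F_0$ whose revenue curve $p\bar F_0(p)$ has a known smooth maximum, and alternatives $\{\bar F^{\star}_j\}_{j\in [K]}$ introduce a bump of width and height $\Theta(T^{-\nicefrac{1}{3}})$ at one of $K=\Theta(T^{\nicefrac{1}{3}})$ candidate prices. I would lift this to $n$ markets by taking $\bar{d}_i(c,p) = \bar F_i(p)$, independent of $c$ (hence trivially concave) and non-increasing in $p$ since $\bar F_i$ is a survival function. The optimal cost in every market is $c_i^\star = 0$, so each market's profit reduces to the single-market pricing profit $p\bar F_i(p)$ and the total regret decomposes as a sum of per-market pricing regrets.

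Third, I would apply a Bretagnolle--Huber hypothesis-testing argument analogous to Lemma~\ref{lem:lb-kl-decomposition} independently within each market. Letting the $n$ markets have independent alternatives, and placing the bumps so that any single $p_t$ lies in the ``ownership zone'' of at most one market's candidate locations, ensures that each round's observation is informative about at most one market's hypothesis. Under this decoupling, each market is an independent copy of the single-market hard instance, so each inherits \citet{kleinberg2003value}'s $\Omega(T^{\nicefrac{2}{3}})$ lower bound, and summing over $i\in[n]$ yields the claimed $\Omega(nT^{\nicefrac{2}{3}})$ total regret.

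The main obstacle is precisely this decoupling step: with a naive identical-markets embedding, the $n$ parallel Bernoulli observations per round let the learner drive regret down to $\Omega((nT)^{\nicefrac{2}{3}})$, which is weaker than our target. Ensuring that a single choice of $p_t$ yields information about at most one market's hypothesis is the key design decision. Once that is in place, the KL-divergence bound decomposes additively across markets and the regret computation is a direct adaptation of \citet{kleinberg2003value} scaled by a factor of $n$.
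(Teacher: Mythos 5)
Your construction has a genuine gap, and it is located exactly at the step you identify as the key design decision. Placing the bumps of the $n$ markets at \emph{disjoint} price locations (so that each round's observation is informative about at most one market) defeats the purpose of the construction, because the benchmark $\opt_T$ is the best \emph{common} price in hindsight. If the baseline revenue curves share a flat maximum and each market's alternative adds a bump of height $\Theta(T^{-\nicefrac{1}{3}})$ at its own location $p_i^\star$, then no single price captures more than one bump, so $\opt_T$ exceeds the trivial plateau-playing strategy by only $O(T\cdot T^{-\nicefrac{1}{3}})=O(T^{\nicefrac{2}{3}})$. Consequently the whole family certifies at most an $\Omega(T^{\nicefrac{2}{3}})$ lower bound, with no factor of $n$: the per-market regrets against each market's \emph{own} optimal price do not sum to the regret against the common-price benchmark, so the final "summing over $i\in[n]$" step is invalid. (A secondary issue is that even granting per-market benchmarks, a common price can only explore one market's ownership zone per round, so each market effectively receives a $1/n$ fraction of the exploration budget rather than behaving as an independent $T$-round instance.)

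The paper's proof goes in the opposite direction from your decoupling, and resolves precisely the obstacle you raise about identical markets. It makes all $n$ markets identical \emph{and perfectly correlated}: given $p_t$, the realized demands satisfy $d_{t,1}=\cdots=d_{t,n}=d_t(p_t)$ for a single hard pricing sequence $\{d_t\}_t$ from \citet{kleinberg2003value} (with $\edti(c,p)=d_t(p)$ constant in $c$, hence cost-concave). The problem setting only constrains the marginal expectations $\edti$, so the joint law $D_t$ may be fully correlated; the $n$ observations per round are then $n$ copies of one random variable and carry no more information than a single observation, which is exactly why your worry that "$n$ parallel Bernoulli observations let the learner drive regret down to $\Omega((nT)^{\nicefrac{2}{3}})$" does not apply. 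Meanwhile the common-price benchmark is simultaneously optimal in every market, so both $\opt_T$ and the algorithm's profit scale by exactly $n$, giving the clean black-box reduction: an $n$-market algorithm with regret $R_T$ yields a single-market pricing algorithm with regret $R_T/n$, and the $\Omega(T^{\nicefrac{2}{3}})$ bound of \citet{kleinberg2003value} forces $R_T\in\Omega(nT^{\nicefrac{2}{3}})$. To repair your write-up, replace the independent, disjointly-bumped markets with identical, perfectly correlated ones; no fresh hypothesis-testing argument is then needed.
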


\section{Variations of the targeted marketing problem}\label{sec:otherprobs}

We now present variations of the targeted marketing problem, where our framework is applicable.

\vspace{0.1in}
\noindent \textbf{1. Subscription service.}\ \
Consider a firm that runs a subscription service, where new customers join on each round, while some old customers leave. Let $\dti\in[0,1]$ be the number of new users of type $i\in[n]$ who joined the service during round $t$. Let $\beta_i\in[0,1)$ be the fraction of users who cancel the service in each round in market $i\in[n]$. Hence, in each round $t$, there are $\sum^t_{s=1}\beta_i^{t-s}\dti\in\big[0,\frac{1}{1-\beta_i}\big]$ active users. For $\cti\in[0,1]$, let $\frac{\cti}{1-\beta_i}$ be the marketing expenditure\footnote{As the number of active users at any time is at most $\frac{1}{1-\beta_i}$ and the price (per round) is at most $1$, 
 there is no reason to spend more than $\frac{1}{1-\beta_i}$ on marketing. Hence we write the marketing expenditure as $\frac{\cti}{1-\beta_i}$ to ensure $\cti\in [0,1]$.} spent in round $t$ to attract new users of type $i$. Then, the total profit in round $t$ is
\begin{align*}
    \profit_t:=\sum_{i\in[n]}\profit_{t,i}:=\sum_{i\in[n]}\bigg(\Big(\sum^t_{s=1}\beta_i^{t-s}\pt\dti\Big)-\frac{\cti}{1-\beta_i}\bigg).
\end{align*}
This is similar to our problem, but there is a memory effect since past customers can contribute to future profits.
Despite this added complexity,
in Appendix \ref{app:subscription}, we show that our algorithms can be adapted to obtain the same regret bounds.

\vspace{0.1in}
\noindent \textbf{2. Promotional credit.}\ \
Next, we consider the promotional credit problem, where the firm segments the population into $n$ types indexed by $i\in[n]$. On each round $t$,  $\rti\in[0,1]$ people of each type try the firm's service (where $\rti$ is chosen exogenously). The firm offers promotional credits $\cti\in[0,1]$ to each type $i$ and price $\pt\in[0,1]$. After using promotional credits, a  $\dti\in[0,1]$ fraction of the $\rti$ customers decide to purchase (or continue) the service. The total profit at round $t$ is
\begin{align*}
    \profit_t:=\sum_{i\in[n]}\profit_{t,i}:=\sum_{i\in[n]}\rti\left(\pt\dti-\cti\right).
\end{align*}
In Appendix \ref{app:promotion}, we show that our algorithms can be applied to this problem without modification.

\vspace{0.1in}
\noindent \textbf{3. Profit-maximizing A/B tests.}\ \
Suppose a firm performs a sequence of experiments where it chooses $M$ marketing alternatives for $n$ population segments to increase demand for a product. The firm wishes to perform these experiments while maintaining a common price. Assume that each alternative $m\in[M]$ costs $c_t(m)\in[0,1]$ to implement in round $t$. For example, when $M=2$, the first option ($m=1$) could be to show the product on a non-interactive webpage, while the second option ($m=2$) could be to present the product on an AI-assisted interactive webpage. In this scenario, the second option costs more as it requires more computing resources. 

For each $i\in[n]$ and $t\in[T]$, let $\dti\in[0,1]$ be a random variable representing the normalized demands made by population segment $i$ during round $t$, given a choice of alternative $m_{t,i}\in[M]$ and price $\pt\in[0,1]$.  The firm's total profit at round $t$ is 
\begin{align*}
    \profit_t:=\sum_{i\in[n]}\profit_{t,i}:=\sum_{i\in[n]}\pt\dti-c_t(m_{t,i}).
\end{align*}
In Appendix \ref{app:profitmaxAB}, we show a slight modification of our Algorithm \ref{alg:1} yields $\widetilde{\O}\big(n\sqrt{M}T^{\nicefrac{2}{3}}\big)$ regret.
\section{Conclusion}\label{sec:conclusion}

We studied profit maximization when there are multiple markets whose demands respond differently to the price and marketing expenditure. The demand characteristics are unknown, and the goal is to design algorithms that are able to learn the optimal price and marketing costs via repeated interactions.
Our algorithms, designed to exploit the decomposable structure of this optimization problem, use carefully designed loss functions to manage the exploration-exploitation trade-off.
When compared to a naive application of an adversarial bandit algorithm, which has regret with an exponential dependence on the number of markets, our approach only has linear dependence. We complement our upper bounds with nearly matching lower bounds.
Closing the $n^{1/4}$ gap between the upper and lower bounds for monotonic demands is an interesting avenue for future research.

\bibliographystyle{ACM-Reference-Format}
\bibliography{ref}

\appendix
\section{Proofs omitted from Section \ref{sec:monotonic}}\label{app:deferred-proofs-sec3}
\subsection{Lemmas for regret upper bound proof}\label{app:deferred-proofs-sec3-ub}
\LemAlgOneEXPBounds*
\begin{proof}\label{proof:lem:alg1-exp-bounds}
    The second bound follows from essentially the same procedure that derives the first bound. Hence, we prove the first bound only. Recall that for conditional distribution $q(c\,|\,p)$ over $\I_K$ and function $g(c,p)$, $\innp{q,p}_p:=\sum_{c\in\I_K}q(c|p)g(c,p)$. Fix any $p\in\I_K$. First, rewrite $\innp{\qti,\lossF_{t,i}}_p$ as follows:
    \begin{align*}
        \innp{\qti,\lossF_{t,i}}_p=\bunderbrace{\innp{\qti,\lossF_{t,i}}_p+\frac{1}{\eta}\log\innp{\qti,\exp\rbra{-\eta\lossF_{t,i}}}_p}{A_{t,i}(p)}\ \ \bunderbrace{-\frac{1}{\eta}\log\innp{\qti,\exp\rbra{-\eta\lossF_{t,i}}}_p}{B_{t,i}(p)}.
    \end{align*}
    Then, we upper bound $A_{t,i}(p)$ as follows:
    \begin{align*}
        A_{t,i}(p)&:=\innp{\qti,\lossF_{t,i}}_p+\frac{1}{\eta}\log\innp{\qti,\exp\rbra{-\eta\lossF_{t,i}}}_p\\
        &\leq\innp{\qti,\lossF_{t,i}}_p+\frac{1}{\eta}\rbra{\innp{\qti,\exp\rbra{-\eta\lossF_{t,i}}}_p-1} \tag{as $\log x\leq x-1$ for $x\geq 0$}\\
        &=\frac{1}{\eta}\innp{\qti,\,\exp\rbra{-\eta\lossF_{t,i}}-1+\eta \lossF_{t,i}}_p \tag{as $\innp{\qti,1}_p=1$}\\
        &\leq \frac{\eta}{2}\innp{\qti,\lossF^2_{t,i}}_p \tag{as $e^{-x}-1+x\leq\frac{x^2}{2}$ for $x\geq 0$}
    \end{align*}
    Next, we rewrite $B_{t,i}(p)$ as follows:
    \begin{align*}
        B_{t,i}(p)&:=-\frac{1}{\eta}\log\innp{\qti,\exp\rbra{-\eta\lossF_{t,i}}}_p\\
        &=-\frac{1}{\eta}\log\frac{\sum_{c_i\in\I_K}\exp\rbra{-\eta\sum^t_{\tau=1}\lossF_{\tau,i}(c_i,p)}}{\sum_{c_i\in\I_K}\exp\rbra{-\eta\sum^{t-1}_{\tau=1}\lossF_{\tau,i}(c_i,p)}}\\
        &=\frac{1}{\eta}\rbra{\Phi_{t-1,i}(p)-\Phi_{t,i}(p)},
    \end{align*}
    where $\Phi_{t,i}(p):=\log\sum_{c_i\in\I_K}\exp\rbra{-\eta\sum^t_{\tau=1}\lossF_{\tau,i}(c_i,p)}$ and $\Phi_{0,i}(p):=\log \sum_{c_i\in\I_K }1=\log |\I_K|$. Using the above bounds for $A_{t,i}(p)$ and $B_{t,i}(p)$, we obtain
    \begin{align*}
        \innp{\qti,\lossF_{t,i}}_p\leq\frac{\eta}{2}\innp{\qti,\lossF^2_{t,i}}_p+\frac{1}{\eta}\rbra{\Phi_{t-1,i}(p)-\Phi_{t,i}(p)}.
    \end{align*}
    Summing the above inequality over $t\in[T]$ yields
    \begin{align*}
        \sum_{t\in[T]}\innp{\qti,\lossF_{t,i}}_p&\leq\frac{\eta}{2}\sum_{t\in[T]}\innp{\qti,\lossF^2_{t,i}}_p+\frac{1}{\eta}\rbra{\Phi_{0,i}(p)-\Phi_{T,i}(p)}\\
        &=\frac{\eta}{2}\sum_{t\in[T]}\innp{\qti,\lossF^2_{t,i}}_p+\frac{1}{\eta}\log|\I_K|-\frac{1}{\eta}\Phi_{T,i}(p).
    \end{align*}
    Finally, we upper bound $-\frac{1}{\eta}\Phi_{T,i}(p)$ as follows: For any $c^\star_i\in\I_K$,
    \begin{align*}
        \frac{1}{\eta}\Phi_{T,i}(p)=\frac{1}{\eta}\log\sum_{c_i\in\I_K}\exp\rbra{-\eta\sum_{t\in[T]}\lossF_{t,i}(c_i,p)}\geq \frac{1}{\eta}\log\exp\rbra{-\eta\sum_{t\in[T]}\lossF_{t,i}(c^\star_i,p)}=-\sum_{t\in[T]}\lossF_{t,i}(c^\star_i,p).
    \end{align*}
    By combining the above two inequalities and setting $p=p^\star$, we obtain the first bound \eqref{eq:alg1-exp-bounds-1}.
\end{proof}
\vspace{1em}
\LemAlgOneDiscretizationError*
\begin{proof}\label{proof:lem:alg1-discretization-error}
    First, recall the definition of $\widetilde{R}_T(n,K)$ in below.
    \begin{align*}
        \widetilde{R}_T(n,K)&:=2\Big(\min_{(c^\star,p^\star)\in\I_K^{n+1}}\sum_{t\in[T]}\elt(c^\star,p^\star)-\inf_{(c^\star,p^\star)\in[0,1]^{n+1}}\sum_{t\in[T]}\elt(c^\star,p^\star) \Big)\\
        &=\sup_{(c^\star,p^\star)\in[0,1]^{n+1}}\sum_{t,i\in[T]\times[n]}\eprofit_{t,i}(c^\star_i,p^\star)-\max_{(c^\star,p^\star)\in\I_K^{n+1}}\sum_{t,i\in[T]\times[n]}\eprofit_{t,i}(c^\star_i,p^\star),
    \end{align*}
    where the second line follows from the definition of $\lt$ in \eqref{eq:elti}. Let $(\hat{c},\hat{p})\in[0,1]^{n+1}$ be a maximizer of $\sum_{t,i\in[T]\times[n]}\eprofit_{t,i}(c,p)$, that is,
    \begin{align*}
        (\hat{c},\hat{p})\in\argmax_{(c,p)\in[0,1]^{n+1}}\sum_{t,i\in[T]\times[n]}\eprofit_{t,i}(c_i,p).
    \end{align*}
    Then, by the monotonicity of $\edti(\cdot,\cdot)$ (Assumption \ref{assump:monotonic}), for any $\eps\geq 0$,
    \begin{align*}
        \edti(\hat{c}_i,\hat{p})\leq \edti(\hat{c}_i+\eps,\hat{p}-\eps),\ \ \forall i\in[n].
    \end{align*}
    Hence,
    \begin{align*}
        \eprofit_{t,i}(\hat{c}_i+\eps,\hat{p}-\eps)&=(\hat{p}-\eps)\cdot\edti(\hat{c}_i+\eps,\hat{p}-\eps)-(\hat{c}_i+\eps)\\
        &\geq (\hat{p}-\eps)\cdot\edti(\hat{c}_i,\hat{p})-(\hat{c}_i+\eps) \\
        &\geq \eprofit_{t,i}(\hat{c}_i,\hat{p})-2\eps.
    \end{align*}
    Since there exists $(c^\star,p^\star)\in\I_K^{n+1}$ such that $0\leq c^\star_i-\hat{c}_i\leq K^{-1}$, $\forall i\in[n]$ and $0\leq p^\star-\hat{p}\leq K^{-1}$, the above implies
    \begin{align*}
        \eprofit_{t,i}(c^\star_i,p^\star)&\geq \eprofit_{t,i}(\hat{c}_i,\hat{p})-2K^{-1}.
    \end{align*}
    Finally, summing the above over $(t,i)\in[T]\times[n]$, we have that
    \begin{align*}
        \sup_{(c^\star,p^\star)\in[0,1]^{n+1}}\sum_{t,i\in[T]\times[n]}\eprofit_{t,i}(c^\star_i,p^\star)\leq \sum_{t,i\in[T]\times[n]}\eprofit_{t,i}(c^\star,p^\star) +\frac{2nT}{K}.
    \end{align*}
    Therefore, $\widetilde{R}_T(n,K)\leq\frac{2nT}{K}$
\end{proof}
\vspace{1em}
\LemAlgOneStepThreeOne*
\begin{proof}\label{proof:lem:alg1-step3-1}
    For the sake of convenience, we restate the definition of $\elti(c^\star,p^\star)$.
    \begin{align*}
        \elti(c_i^\star,p^\star):=\frac{1}{2}\left(1-\eprofit_{t,i}(c^\star_i,p^\star)\right),\quad \eprofit_{t,i}(c^\star,p^\star):=p^\star\,\edti(c^\star_i,p^\star)-c^\star_i.
    \end{align*}
    Then, the first bound follows from the following calculations:
    \begin{align*}
        \expec\!\sbra{\lossF_{t,i}(c^\star_i,p^\star)}&=    \expec\!\sbra{\frac{\lti\,\ind\!\sbra{\cti=c^\star_i,\pt=p^\star}}{\qti(\cti|\,\pt)\rbra{\qtz(\pt)+\gamma}}}\\
        &=\expec\!\sbra{\sum_{c'_i\,,\,p'}\qti(c'_i\,,p')\cdot\frac{\elti(c'_i,p')\,\ind\!\sbra{c'_i=c^\star_i,p'=p^\star}}{\qti(c'_i|\,p')\rbra{\qtz(p')+\gamma}}}\\
        &\leq\expec\!\sbra{\sum_{c'_i\,,\,p'}\qti(c'_i\,,p')\cdot\frac{\elti(c'_i,p')}{\qti(c'_i\,,p')}\cdot\ind\!\sbra{c'_i=c^\star_i,p'=p^\star}} \\
        &=\sum_{c'_i\,,\,p'}\elti(c'_i,p')\,\ind\!\sbra{c'_i=c^\star_i,p'=p^\star}\\
        &=\elti(c^\star_i,p^\star).
    \end{align*}
    And the second bound follows as
    \begin{align*}
        \expec\!\sbra{\innp{\qti\,,\lossF^2_{t,i}}_{p^\star}}
        &\leq\expec\!\sbra{\sum_{c'_i\,,\,p'}\sum_{c_i} \qti(c'_i\,,p')\,\qti(c_i|\,p^\star) \cdot\frac{\ind\!\sbra{c'_i=c_i,p'=p^\star }}{q^2_{t,i}(c'_i|\,p')\rbra{\qtz(p')+\gamma}^2   }} \tag{as $\lti\leq 1$}\\
        &=\expec\!\sbra{\sum_{c_i} \frac{\qti(c_i\,,p^\star)\, \qti(c_i|\,p^\star)}{q^2_{t,i}(c_i|\,p^\star)\rbra{\qtz(p^\star)+\gamma}^2}}\\
        &=\expec\!\sbra{\sum_{c_i} \frac{\qti(c_i\,,p^\star)}{\qti(c_i|\,p^\star)\rbra{\qtz(p^\star)+\gamma}^2}}\\
        &\leq\expec\!\sbra{\sum_{c_i} \frac{\qti(c_i\,,p^\star)}{\qti(c_i|\,p^\star)\cdot\qtz(p^\star)\cdot\!\rbra{\qtz(p^\star)+\gamma}}}\\
        &=\expec\!\sbra{\sum_{c_i} \frac{\qti(c_i\,,p^\star)}{\qti(c_i\,,p^\star)\rbra{\qtz(p^\star)+\gamma}}}\\
        &=\expec\!\sbra{\frac{\sum_{c_i\in\I_K}1}{\qtz(p^\star)+\gamma}}\\
        &=\expec\!\sbra{\frac{|\I_K|}{\qtz(p^\star)+\gamma}}.
    \end{align*}
\end{proof}
\vspace{1em}
\LemAlgOneStepThreeTwo*
\begin{proof}\label{proof:lem:alg1-step3-2}
    First, we state the expectation of \eqref{eq:alg1-exp-bounds-1} in Lemma \ref{lem:alg1-exp-bounds} below.
    \begin{align*}
        \sum_{t\in[T]}\expec\!\sbra{\innp{\qti\,,\lossF_{t,i}}_{p^\star}}-\sum_{t\in[T]}\expec\!\sbra{\lossF_{t,i}(c^\star_i,p^\star)}\leq \eta\sum_{t\in[T]}\expec\!\sbra{\innp{\qti\,,\lossF^2_{t,i}}_{p^\star}}+\frac{1}{\eta}\log|\I_K|,
    \end{align*}
    where we used the linearity of expectation. Then, by applying Lemma \ref{lem:alg1-step3-1}, we obtain the following bound.
    \begin{align*}
        \sum_{t\in[T]}\expec\!\sbra{\innp{\qti\,,\lossF_{t,i}}_{p^\star}}-\sum_{t\in[T]}\elti(c^\star_i,p^\star)\leq\sum_{t\in[T]}\expec\!\sbra{\frac{\eta|\I_{K}|}{\qtz(p^\star)+\gamma}}+\frac{1}{\eta}\log|\I_K|.
    \end{align*}
    We can further rewrite the above by unfolding $\innp{\qti\,,\lossF_{t,i}}_{p^\star}$:
    \begin{align*}
        \innp{\qti\,,\lossF_{t,i}}_{p^\star}&=\sum_{c_i}\qti(c_i|p^\star)\cdot\frac{\lti\,\ind[\cti=c_i,\pt=p^\star]}{\qti(\cti|\pt)\rbra{\qtz(\pt)+\gamma}}\\
        &=\qti(\cti|p^\star)\cdot \frac{\lti\,\ind[\pt=p^\star]}{\qti(\cti|\pt)\rbra{\qtz(\pt)+\gamma}}\\
        &=\qti(\cti|\pt)\cdot \frac{\lti\,\ind[\pt=p^\star]}{\qti(\cti|\pt)\rbra{\qtz(\pt)+\gamma}}\\
        &=\frac{\lti\ind[\pt=p^\star]}{\qtz(\pt)+\gamma},
    \end{align*}
    which gives
    \begin{align*}
        \expec\Bigg[\sum_{t\in[T]}\rbra{\frac{\lti\,\ind[\pt=p^\star]}{\qtz(\pt)+\gamma}-\frac{\eta|\I_K|}{\qtz(p^\star)+\gamma}}\Bigg]&\leq\sum_{t\in[T]}\elti(c^\star_i,p^\star)+\frac{1}{\eta}\log|\I_K|.
        \numberthis\label{eq:proof:lem:alg1-step2-2-1}
    \end{align*}
    By summing \eqref{eq:proof:lem:alg1-step2-2-1} over $i\in[n]$ and dividing it by $n$, we obtain the stated bound.
\end{proof}
\vspace{1em}
\LemAlgOneAlgorithmLossBound*
\begin{proof}\label{proof:lem:alg1-algorithm-loss-bound}
    The lemma follows from straightforward calculations. First, for each $t\in[T]$,
    \begin{align*}
        \innp{\qtz\,,\lossH_t}-\frac{\eta|\I_K|}{\gamma}&=\frac{1}{n}\sum_{p\in\I_K}\rbra{\frac{\lt\,\ind\!\sbra{\pt=p}}{\qtz(\pt)+\gamma}\cdot \qtz(p)}-\eta|\I_K|\sum_{p\in\I_K}\frac{\qtz(p)}{\qtz(p)+\gamma}\\
        &=\frac{1}{n}\,\frac{\lt\,\qtz(\pt)}{\qtz(\pt)+\gamma}-\eta|\I_K|\sum_{p\in\I_K}\frac{\qtz(p)}{\qtz(p)+\gamma}\\
        &\geq\frac{1}{n}\,\frac{\lt\,\qtz(\pt)}{\qtz(\pt)+\gamma}-\eta|\I_K|^2\\
        &=\frac{1}{n}\!\rbra{\lt-\frac{\gamma\,\lt}{\qtz(\pt)+\gamma}}-\eta|\I_K|^2.
    \end{align*}
    Next, we take the expectation to the above:
    \begin{align*}
        \expec\bigg[\innp{\qtz\,,\lossH_t}\bigg]-\frac{\eta|\I_K|}{\gamma}&\geq \frac{1}{n}\,\expec\big[\lt\big]-\gamma\,\expec\bigg[\frac{\lt/n}{\qtz(\pt)+\gamma}\bigg]-\eta|\I_K|^2\\
        &\geq \frac{1}{n}\,\expec\big[\lt\big]-\gamma|\I_K|-\eta|\I_K|^2\\
        &\geq \frac{1}{n}\,\expec\big[\lt\big]-2\eta|\I_K|^2 \tag{as $\gamma:=\eta$ and $|\I_K|^2\geq|\I_K|$}.
    \end{align*}
    This implies the stated bound.
\end{proof}
\subsection{Lemmas for regret lower bound proof}\label{app:deferred-proofs-sec3-lb}
\LemLowerBoundHardenvironmentProperties*
\begin{proof}\label{proof:lem:lb-hard-environment-properties}
    \begin{enumerate}
        \item By construction, $d(c,p)$ is non-decreasing w.r.t. $c\in[0,1]$ when $p\notin (p^\star-\eps/2,p^\star]$. Moreover, $d(\cdot, p)=d(\cdot, p')$ for any $p, p'\in(p^\star-\eps/2,p^\star]$. Hence, to show the first part, it suffices to show the following:
        \begin{align*}
            d(c^\star-\eps/2,p^\star)=\frac{2c^\star-\eps}{2p^\star}\leq d(c^\star,p^\star)=\frac{2c^\star+\eps}{2p^\star+\eps} \leq d(c^\star+\eps/2,p^\star)=\frac{2c^\star+\eps}{2p^\star}.
        \end{align*}
        Since the latter inequality is true, we only need to check the former inequality:
        \begin{align*}
            \frac{2c^\star-\eps}{2p^\star}\leq \frac{2c^\star+\eps}{2p^\star+\eps}\iff\eps\geq 2c^\star-4p^\star,
        \end{align*}
        which is satisfied as $\eps>0$, $c^\star\leq \frac{1}{2}$ and $p^\star\geq \frac{1}{2}$.
    
        \item By construction, $d(c,p)$ is non-increasing w.r.t. $p\in[0,1]$ when $c\notin [c^\star,c^\star+\eps/2)$. Moreover, $d(c, \cdot)=d(c', \cdot)$ for any $c, c'\in[c^\star,c^\star+\eps/2)$. Hence, to show the first part, it suffices to show the following:
        \begin{align*}
            d(c^\star,p^\star+\eps/2)=\frac{2c^\star}{2p^\star+\eps}\leq d(c^\star,p^\star)=\frac{2c^\star+\eps}{2p^\star+\eps}\leq d(c^\star,p^\star-\eps/2)=\frac{2c^\star}{2p^\star-\eps},
        \end{align*}
        which leads to the following inequalities:
        \begin{align*}
            \frac{2c^\star+\eps}{2p^\star+\eps}\leq\frac{2c^\star}{2p^\star-\eps}\impliedby\eps\geq 2p^\star-4c^\star,\, p^\star>\eps/2,
        \end{align*}
        where the latter is satisfied for $(c^\star,p^\star)\in\S$ as $c^\star\geq \frac{2}{5},\, \frac{3}{5}\leq p^\star \leq \frac{4}{5}$ and $\eps>0$.
        \item A straightforward computation yields
        \begin{align*}
            \eprofit(c^\star,p^\star)&:=p^\star\cdot d(c^\star,p^\star)-c^\star=p^\star\cdot\frac{c^\star+\eps/2}{p^\star+\eps/2}-c^\star\\
            &=c^\star\rbra{\frac{p^\star}{p^\star+\eps/2}-1}+\frac{\eps p^\star}{2p^\star+\eps}\\
            &=\frac{\eps}{2}\rbra{\frac{p^\star-c^\star}{p^\star+\eps/2}}\\
            &\geq\frac{\eps}{2}\rbra{p^\star-c^\star}\geq\frac{\eps}{20},
        \end{align*}
        where the last inequality follows from $p^\star\geq\frac{3}{5}$ and $c^\star\leq\frac{1}{2}$ for any $(c^\star,p^\star)\in\S$.
        \item For any $(c,p)\in\C\times\P$ and $(c,p)\neq(c^\star,p^\star)$,
        \begin{align*}
            \eprofit(c,p)=p\cdot \frac{1}{2p}\cdot 2c - c =0.
        \end{align*}
        For any $(c,p)\in [0,1]^2\setminus \C\times\P$,
        \begin{align*}
            d(c,p)=g(c)\cdot \mathbb{P}[\mathbf{v}\geq p]<\frac{c}{p},
        \end{align*}
        as $g(c):=\min\!\rbra{1,\lfloor 2cK \rfloor / K} < 2c$ and $\mathbb{P}[\mathbf{v}\geq p]<(2p)^{-1}$ for $(c,p)\in [0,1]^2\setminus \C\times\P$. Hence $\eprofit(c,p)< p \cdot \frac{c}{p} - c = 0$ in this case.
    \end{enumerate}
\end{proof}
\vspace{1em}
\LemLowerBoundKLDecomposition*
    \begin{proof}\label{proof:lem:lb-kl-decomposition}
    For each $t\in[T]$, let $o_{t}:=\{n_{\tau,i},c_{\tau,i},p_\tau\}_{\tau\in[t],i\in[n]}$. Let $\pi^\A_t(c_{t,1},\dots,c_{t,d},p_t\,|\,o_{t-1})$ be $\A$'s policy function at given round $t$, that is, the probability of $\A$ choosing $(c_{t,1},\dots,c_{t,d},p_t)\in\C^n\times\P$ given the past history $o_{t-1}$. For each $i\in[n]$, let $u(\edti|c_{t,i},p_t)$ (resp. $u'(\edti|c_{t,i},p_t)$) be the probability that $\dti=\edti\in\{0,1\}$ given $c_{t,i}$ and $p_t$ under $\E_\textup{base}$ (resp. $\E_{(c^\star,p^\star)}$).
    
    Then, 
    \begin{align*}
        \log\frac{\textup{d}\mathbb{P}^\A_{\textup{base}}}{\textup{d}\ppAcstarpstar}(o_T)&=\log\frac{\prod^T_{t=1}\pi^\A_t(c_{t,1},\dots,c_{t,d},p_t\,|\,o_{t-1})\cdot\prod^n_{i=1}u(\edti\,|\,c_{t,i},p_t)}{\prod^T_{t=1}\pi^\A_t(c_{t,1},\dots,c_{t,d},p_t\,|\,o_{t-1})\cdot\prod^n_{i=1} u'(\edti\,|\,c_{t,i},p_t)}\\
        &=\sum_{t,i\in[T]\times[n]}\log\frac{u(\edti|c_{t,i},p_t)}{u'(\edti|c_{t,i},p_t)}
    \end{align*}
    Hence,
    \begin{align*}
        \textup{KL}\Big(\ppAbase  \,\Big|\Big|\,\ppAcstarpstar\Big)&=\eeAbase\Bigg[\log\frac{\textup{d}\ppAbase}{\textup{d}\ppAcstarpstar}(\mathbf{o}_T) \Bigg]\\
        &=\sum_{t,i\in[T]\times[n]}\eeAbase\Bigg[\log\frac{u(\edti\,|\,\mathbf{c}_{t,i},\pt)}{u'(\edti\,|\,\mathbf{c}_{t,i},\pt)} \Bigg]\\
        &=\sum_{t,i\in[T]\times[n]}\eeAbase\Bigg[\sum_{c,p\in\C\times\P}\bm{\chi}_{t,i}(c,p)\cdot\log\frac{u(\edti\,|\,\mathbf{c}_{t,i},\pt)}{u'(\edti\,|\,\mathbf{c}_{t,i},\pt)} \Bigg] \\
        &\leq\sum_{t,i\in[T]\times[n]}54\eps^2\cdot\eeAbase\big[\bm{\chi}_{t,i}(c^\star,p^\star) \big], \tag{by Lemma \ref{lem:klbound}}
    \end{align*}
    where the third line is due to $\sum_{c,p\in\C\times\P}\bm{\chi}_{t,i}(c,p)=1$ for all $i\in[n]$, and the last line we used the fact that $u(\,\cdot\,|\cti,\pt)=u'(\,\cdot\,|\cti,\pt)$ whenever $\cti\neq c^\star$ or $\pt\neq p^\star$.
\end{proof}
\vspace{1em}
\begin{lemma}
For any $(c^\star,p^\star)\in\S$ and $0<\eps\leq\frac{1}{20}$,
\begin{align*}
    \textup{KL}\!\rbra{\textup{Ber}(c^\star)\otimes\textup{Ber}\rbra{\frac{1}{2p^\star}} \,\middle|\middle|\, \textup{Ber}(c^\star+\eps)\otimes\textup{Ber}\rbra{\frac{1}{2p^\star+\eps}}}\leq 54\eps^2.
\end{align*}
\label{lem:klbound}
\end{lemma}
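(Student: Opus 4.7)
The plan is to invoke the chain rule for KL divergence on product measures, which gives $\textup{KL}(P_1\otimes P_2\,\|\,Q_1\otimes Q_2)=\textup{KL}(P_1\,\|\,Q_1)+\textup{KL}(P_2\,\|\,Q_2)$, so that the problem reduces to bounding two one-dimensional Bernoulli KL divergences. For each of these I would invoke the standard inequality $\textup{KL}(\textup{Ber}(p)\,\|\,\textup{Ber}(q))\leq(p-q)^2/(q(1-q))$, which follows by writing $p/q=1+(p-q)/q$ and $(1-p)/(1-q)=1+(q-p)/(1-q)$ and applying $\log(1+x)\leq x$ to each of the two logarithms in the definition of $\textup{KL}(\textup{Ber}(p)\,\|\,\textup{Ber}(q))$; the cross-terms collapse algebraically to $(p-q)^2/(q(1-q))$.

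For the first Bernoulli KL, $(p-q)^2=\eps^2$ exactly. The constraint $(c^\star,p^\star)\in\S$ forces $c^\star\in[\tfrac{2}{5},\tfrac{9}{20}]$, and together with $\eps\leq\tfrac{1}{20}$ this keeps $q=c^\star+\eps$ within $[\tfrac{2}{5},\tfrac{1}{2}]$, yielding $q(1-q)\geq\tfrac{1}{5}$. So this term contributes at most $5\eps^2$.

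For the second Bernoulli KL, I would first compute $p-q=\tfrac{1}{2p^\star}-\tfrac{1}{2p^\star+\eps}=\tfrac{\eps}{2p^\star(2p^\star+\eps)}$ in closed form. The constraint $p^\star\in[\tfrac{3}{5},\tfrac{4}{5}]$ from the definition of $\S$ then bounds $2p^\star(2p^\star+\eps)$ below by an explicit constant (roughly $36/25$), so $(p-q)^2\leq c_1\eps^2$ for an explicit $c_1$. Similarly $q=1/(2p^\star+\eps)$ lies in a closed sub-interval of $(1/2,1)$ (using $\eps\leq 1/20$), so $q(1-q)$ is bounded below by an explicit constant. Combining gives an $O(\eps^2)$ bound on this term as well, with constant of order at most a handful.

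The main obstacle is purely arithmetic: checking that the sum of the two explicit constants does not exceed $54$. Given the substantial slack in the target $54\eps^2$ (my rough counting suggests the sum is under $10$), this is routine bookkeeping rather than a genuine mathematical difficulty; the constant $54$ appears to have been chosen loosely to absorb any slack and to make the downstream calculation in Lemma~\ref{lem:lb-kl-decomposition} transparent.
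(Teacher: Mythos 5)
Your proposal is correct and is essentially the paper's own argument: the paper likewise splits the product KL into two Bernoulli terms and, via the inequalities $\frac{x}{1+x}\leq\log(1+x)\leq x$, derives for each term exactly the $\chi^2$-type bound $(p-q)^2/(q(1-q))$ that you invoke as a general fact (its intermediate expressions $\frac{\eps^2}{(1-2c^\star-\eps)(2c^\star+\eps)}$ and $\frac{\eps^2}{4(p^\star)^2(2p^\star-1+\eps)}$ are precisely this quantity), before plugging in the ranges from $\S$. One caveat on your arithmetic: the lemma as stated writes $\textup{Ber}(c^\star)$ versus $\textup{Ber}(c^\star+\eps)$, but the construction (where $b\sim\textup{Ber}(g(c))$ with $g(c)=2c$) and the paper's own proof actually require $\textup{Ber}(2c^\star)$ versus $\textup{Ber}(2c^\star+\eps)$; with $2c^\star+\eps\in[0.8,0.95]$ the denominator $q(1-q)$ drops to about $0.0475$, so the first term contributes roughly $22\eps^2$ rather than your $5\eps^2$ --- still comfortably below $54\eps^2$, so your argument survives, but your claim that the total is under $10$ is only true for the (mistyped) literal statement.
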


\begin{proof}\label{proof:klbound}
    Let $\textup{kl}(p,q):=p\log \frac{p}{q}+(1-p)\log\frac{1-p}{1-q}$. Then,
    \begin{align*}
        \textup{KL}\!\rbra{\textup{Ber}(2c^\star)\otimes\textup{Ber}\rbra{\frac{1}{2p^\star}}\,\middle|\middle|\,\textup{Ber}(2c^\star+\eps)\otimes\textup{Ber}\rbra{\frac{1}{2p^\star+\eps}}}=\textup{kl}\rbra{2c^\star,2c^\star+\eps}+\textup{kl}\!\rbra{\frac{1}{2p^\star},\frac{1}{2p^\star+\eps}}.
    \end{align*}
    Next, we bound two terms in the LHS using $\frac{x}{1+x}\leq\log(1+x) \leq x$ for all $x>-1$.
    \begin{align*}
        \textup{kl}\!\rbra{2c^\star,2c^\star+\eps}&=2c^\star\log\frac{2c^\star}{2c^\star+\eps}+(1-2c^\star)\log\rbra{\frac{1-2c^\star}{1-2c^\star-\eps}}\\
        &=-2c^\star\log\rbra{1+\frac{\eps}{2c^\star}}+(1-2c^\star)\log\rbra{1+\frac{\eps}{1-2c^\star-\eps}}\\
        &\leq -2c^\star\rbra{\frac{\eps/2c^\star}{1+\eps/2c^\star}}+\frac{\eps(1-2c^\star)}{1-2c^\star-\eps}\\
        &=-\frac{2\eps c^\star}{2c^\star+\eps}+\frac{(1-2c^\star)\eps}{1-2c^\star-\eps}\\
        &=\frac{\eps^2}{(1-2c^\star-\eps)(2c^\star+\eps)}\\
        &\leq\frac{\eps^2}{(0.1-\eps)(0.4+\eps)}\tag{as $\frac{2}{5}\leq c^\star \leq \frac{9}{20}$}\\
        &\leq 50\eps^2, \tag{as $\eps\leq\frac{1}{20}$}
    \end{align*}
    
    \begin{align*}
        \textup{kl}\rbra{\frac{1}{2p^\star},\frac{1}{2p^\star+\eps}}&=\frac{1}{2p^\star}\log\frac{2p^\star+\eps}{2p^\star}+\rbra{1-\frac{1}{2p^\star}}\log\frac{1-\frac{1}{2p^\star}}{1-\frac{1}{2p^\star+\eps}}\\
        &=\frac{1}{2p^\star}\log\rbra{1+\frac{\eps}{2p^\star}}+\rbra{1-\frac{1}{2p^\star}}\log\rbra{1-\frac{\eps}{2p^\star(2p^\star+\eps-1)}}\\
        &\leq\frac{\eps}{4(p^\star)^2}-\rbra{1-\frac{1}{2p^\star}}\cdot\frac{\eps}{2p^\star(2p^\star+\eps-1)}\\
        &=\frac{\eps}{4(p^\star)^2}-\frac{\eps}{4(p^\star)^2}\cdot\frac{2p^\star-1}{2p^\star-1+\eps}\\
        &=\frac{\eps}{4(p^\star)^2}\rbra{1-\frac{2p^\star-1}{2p^\star-1+\eps}}\\
        &=\frac{\eps^2}{4(p^\star)^2(2p^\star-1+\eps)}\leq\frac{\eps^2}{4\rbra{\frac{3}{5}}^2\cdot\rbra{\frac{1}{5}}}\leq 4\eps^2. \tag{as $\frac{3}{5}\leq p^\star$}
    \end{align*}
\end{proof}
\section{Proofs omitted from Section \ref{sec:cost-concave}}\label{app:deferred-proofs-sec4}
We start this appendix with a regret bound for $\{\lossF_{t,i}\}_{t\in[T]}$ \eqref{eq:alg2-fti}, which is a generalization of Lemma \ref{lem:alg1-exp-bounds} to continuous settings. While this regret bound follows from a standard analysis~\citep{hazan2016introduction,bubeck2012regret} of the exponential weights algorithm, we give its proof below for completeness.
\begin{lemma}
    Let $q':=\uniform(\I_\del)$. Then, for any $q\in\Delta(\I_\del)$ with density $q(\cdot)$ and any $p^\star\in\I_K$,
    \begin{align*}
        \sum_{t\in[T]}\innp{u_{t,i},\lossF_{t,i}}_{p^\star}-\sum_{t\in[T]}\innp{q,\lossF_{t,i}}_{p^\star}\leq \frac{1}{\eta}\,\emph{KL}(q||q')+\frac{\eta}{2}\sum_{t\in[T]}\innp{u_{t,i},\lossF^2_{t,i}}_{p^\star},
        \numberthis\label{eq:alg2-continuous-EXP-bound}
    \end{align*}
    where $u_{t,i}(c_i|\,p)$ is the distribution given in Line \ref{line:alg2-uti} of Algorithm \ref{alg:2}.
    \label{lem:alg2-continuous-EXP-bound}
\end{lemma}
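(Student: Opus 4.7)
\textbf{Proof plan for Lemma \ref{lem:alg2-continuous-EXP-bound}.}
My plan is to mimic the three-line decomposition from the proof of Lemma \ref{lem:alg1-exp-bounds}, replacing the sums over $\I_K$ by integrals over $\I_\del$, and replacing the ``$\log|\I_K|$'' bound at the end by a Donsker--Varadhan / Gibbs variational bound that produces the $\text{KL}(q\,\|\,q')$ term. Concretely, fix $i\in[n]$ and the target price $p^\star\in\I_K$, and write
\[
\innp{u_{t,i},\lossF_{t,i}}_{p^\star}
=\underbrace{\innp{u_{t,i},\lossF_{t,i}}_{p^\star}+\tfrac{1}{\eta}\log\innp{u_{t,i},\exp(-\eta\lossF_{t,i})}_{p^\star}}_{A_{t,i}(p^\star)}\;\underbrace{-\tfrac{1}{\eta}\log\innp{u_{t,i},\exp(-\eta\lossF_{t,i})}_{p^\star}}_{B_{t,i}(p^\star)}.
\]

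For the $A_{t,i}$ term, I would use $\log(1+x)\le x$ followed by $e^{-x}-1+x\le x^2/2$, both valid since $\eta\lossF_{t,i}\ge 0$ (the kernel $K_\eps$ and the normalized loss $\lti$ are both nonnegative). Exactly as in the proof of Lemma \ref{lem:alg1-exp-bounds}, this gives $A_{t,i}(p^\star)\le \tfrac{\eta}{2}\innp{u_{t,i},\lossF^2_{t,i}}_{p^\star}$, and the argument is identical because it only uses that $\innp{u_{t,i},1}_{p^\star}=1$.

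For the $B_{t,i}$ term I would set up a potential. The update in Line \ref{line:alg2-uti} together with $u_{1,i}(\cdot\,|\,p^\star)=q'$ gives $u_{t,i}(c\,|\,p^\star)\propto q'(c)\exp\!\bigl(-\eta\sum_{\tau<t}\lossF_{\tau,i}(c,p^\star)\bigr)$, so with $Z_t(p^\star):=\int_{\I_\del}q'(c)\exp\!\bigl(-\eta\sum_{\tau=1}^t\lossF_{\tau,i}(c,p^\star)\bigr)\,dc$ we obtain $\innp{u_{t,i},\exp(-\eta\lossF_{t,i})}_{p^\star}=Z_t(p^\star)/Z_{t-1}(p^\star)$. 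Thus $\sum_{t\in[T]} B_{t,i}(p^\star)$ telescopes to $-\tfrac{1}{\eta}\log Z_T(p^\star)$, since $Z_0(p^\star)=1$. Now comes the one genuinely new ingredient relative to Lemma \ref{lem:alg1-exp-bounds}: rather than lower-bounding $Z_T$ by a single term as in the discrete case, I invoke the Gibbs variational inequality, which for any $q\in\Delta(\I_\del)$ absolutely continuous w.r.t.\ $q'$ gives
\[
-\log Z_T(p^\star)=-\log\int_{\I_\del} q'(c)\,e^{-\eta L_T(c)}\,dc\;\le\;\eta\int_{\I_\del}q(c)L_T(c)\,dc+\text{KL}(q\,\|\,q'),
\]
where $L_T(c):=\sum_{t\in[T]}\lossF_{t,i}(c,p^\star)$. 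Dividing by $\eta$ yields $\sum_{t}B_{t,i}(p^\star)\le \sum_{t}\innp{q,\lossF_{t,i}}_{p^\star}+\tfrac{1}{\eta}\text{KL}(q\,\|\,q')$.

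Summing the $A$ and $B$ bounds over $t\in[T]$ and rearranging immediately yields the claim. The only step I expect to require any thought is the invocation of the Gibbs inequality in place of the finite-support lower bound $\tfrac{1}{\eta}\Phi_{T,i}(p)\ge -\sum_t \lossF_{t,i}(c_i^\star,p)$ used in the discrete case; everything else is a line-by-line translation of the discrete proof to integrals over $\I_\del$, and in particular does not require continuity, boundedness, or any specific structure of $\lossF_{t,i}$ beyond nonnegativity.
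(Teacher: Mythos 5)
Your proposal is correct and follows essentially the same route as the paper's proof: the same $A_{t,i}/B_{t,i}$ decomposition with the bounds $\log x\leq x-1$ and $e^{-x}-1+x\leq x^2/2$ for the quadratic term, the same telescoping of the log-partition function, and the same final step of replacing the discrete single-point lower bound by the Donsker--Varadhan (Gibbs variational) inequality to produce the $\textup{KL}(q\,\|\,q')$ term. No gaps.
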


\begin{proof}
    Recall the definition of $u_{t,i}(c_i|\,p)$ given in line \ref{line:alg2-uti} of Algorithm \ref{alg:2} which can be rewritten as follows:
    \begin{align*}
        u_{t,i}(c_i|\,p):=\frac{\exp\rbra{-\eta\sum^{t-1}_{\tau=1}\lossF_{\tau,i}(c_i,p)}}{\int_{\I_\del}\exp\rbra{-\eta\sum^{t-1}_{\tau=1}\lossF_{\tau,i}(c_i,p)}dc_i}.
    \end{align*}
    Then, rewrite $\innp{u_{t,i},\lossF_{t,i}}_p:=\int_{\I_\del} u_{t,i}(c_i|p)\lossF(c_i,p)dc_i$ as follows:
    \begin{align*}
        \innp{u_{t,i},\lossF_{t,i}}_p=\bunderbrace{\innp{u_{t,i},\lossF_{t,i}}_p+\frac{1}{\eta}\log\innp{u_{t,i},\exp\rbra{-\eta\lossF_{t,i}}}_p}{A_{t,i}(p)}\ \ \bunderbrace{-\frac{1}{\eta}\log\innp{u_{t,i},\exp\rbra{-\eta\lossF_{t,i}}}_p}{B_{t,i}(p)}.
    \end{align*}
    An upper bound for $A_{t,i}(p)$ is given as follows:
    \begin{align*}
        A_{t,i}(p)&:=\innp{u_{t,i},\lossF_{t,i}}_p+\frac{1}{\eta}\log\innp{u_{t,i},\exp\rbra{-\eta\lossF_{t,i}}}_p\\
        &\leq\innp{u_{t,i},\lossF_{t,i}}_p+\frac{1}{\eta}\rbra{\innp{u_{t,i},\exp\rbra{-\eta\lossF_{t,i}}}_p-1} \tag{as $\log x\leq x-1$ for $x\geq 0$}\\
        &=\frac{1}{\eta}\innp{u_{t,i},\,\exp\rbra{-\eta\lossF_{t,i}}-1+\eta \lossF_{t,i}}_p \tag{as $\innp{u_{t,i},1}_p=1$}\\
        &\leq \frac{\eta}{2}\innp{u_{t,i},\lossF^2_{t,i}}_p \tag{as $e^{-x}-1+x\leq\frac{x^2}{2}$ for $x\geq 0$}
    \end{align*}
    Next, we rewrite $B_{t,i}(p)$ as follows:
    \begin{align*}
        B_{t,i}(p)&:=-\frac{1}{\eta}\log\innp{u_{t,i},\exp\rbra{-\eta\lossF_{t,i}}}_p\\
        &=-\frac{1}{\eta}\log\frac{\int_{\I_\del}\exp\rbra{-\eta\sum^t_{\tau=1}\lossF_{\tau,i}(c_i,p)}dc_i}{\int_{\I_\del}\exp\rbra{-\eta\sum^{t-1}_{\tau=1}\lossF_{\tau,i}(c_i,p)}dc_i}\\
        &=\frac{1}{\eta}\rbra{\Phi_{t-1,i}(p)-\Phi_{t,i}(p)},
    \end{align*}
    where $\Phi_{t,i}(p):=\log\int_{\I_\del}\exp\rbra{-\eta\sum^t_{\tau=1}\lossF_{\tau,i}(c_i,p)}dc_i$ and $\Phi_{0,i}(p):=\log \int_{\I_\del}dc_i=\log |\I_\del|$.
    
    Therefore,
    \begin{align*}
        \sum_{t\in[T]} \innp{u_{t,i},\lossF_{t,i}}_p &= \sum_{t\in[T]} A_{t,i}(p)-B_{t,i}(p)\\
        &\leq\frac{\eta}{2}\sum_{t\in[T]}\innp{u_{t,i},\lossF^2_{t,i}}_p+\frac{1}{\eta}\sum_{t\in[T]}\rbra{\Phi_{t-1,i}(p)-\Phi_{t,i}(p)}\\
        &\leq\frac{\eta}{2}\sum_{t\in[T]}\innp{u_{t,i},\lossF^2_{t,i}}_p-\frac{1}{\eta}\rbra{\Phi_{T,i}(p)-\Phi_{0,i}(p)}.
    \end{align*}
    
    For any $q\in\Delta(\I_\del)$ with density $q(\cdot)$, the following holds from Donesker-Varadhan inequality ~\citep{cover1999elements}:
    \begin{align*}
        \frac{1}{\eta}\rbra{\Phi_{T,i}(p)-\Phi_{0,i}(p)}&=\frac{1}{\eta}\log\frac{1}{|\I_\del|}\int_{\I_\del}\exp\rbra{-\eta\sum_{t\in[T]}\lossF_{t,i}(c_i,p)}dc_i\\
        &\geq -\sum_{t\in[T]}\innp{q,\lossF_{t,i}(c_i,p)}_p-\frac{1}{\eta}\,\text{KL}\rbra{q || q'},
    \end{align*}
    where $q':=\uniform(\I_\del)$. Therefore, we obtain the stated bound by setting $p=p^\star$.
\end{proof}
\vspace{1em}
\LemAlgTwoConvexLossProperties*
\begin{proof}\label{proof:lem:alg2-convex-loss-properties}
    Let $x<y<z \in\I_\del$ and $f:\I_\del\rightarrow\mathbb{R}$ be a convex function. Since $y=\frac{z-y}{z-x}\cdot x+\frac{y-x}{z-x}\cdot z$, we have $f(y)\leq\frac{z-y}{z-x} f(x) + \frac{y-x}{z-x} f(z)$ which is equivalent to 
    \begin{align*}
        \frac{f(y)-f(x)}{y-x}\leq\frac{f(z)-f(y)}{z-y}.
        \numberthis\label{eq:convex-slope-monotonicity}
    \end{align*}
    For any $p^\star\in\I_K$, $\del>0$, $t\in[T]$ and $i\in[n]$, $\elti(\,\cdot\,, p^\star)$ is convex over $[0,1]\supseteq \I_\del$, hence it satisfies the above inequality. Therefore, for any $\del < x\leq y \leq 1-\del$, 
    \begin{align*}
        \frac{\elti(\del, p^\star)-\elti(0, p^\star)}{\del-0}\leq\frac{\elti(y, p^\star)-\elti(x, p^\star)}{y-x}\leq\frac{\elti(1, p^\star)-\elti(1-\del, p^\star)}{1-(1-\del)}.
    \end{align*}
    Since $\elti(\,\cdot\,, p^\star)\leq 1$, the above implies
    \begin{align*}
        -\frac{1}{\del}\leq \frac{\elti(y, p^\star)-\elti(x, p^\star)}{y-x} \leq \frac{1}{\del}.
    \end{align*}
    Hence, $\left|\elti(x, p^\star)-\elti(y, p^\star)\right|\leq \del^{-1}|x-y|$ which proves the first claim.
    
    The second claim follows from \eqref{eq:convex-slope-monotonicity} with $x=c^\star_i<y=\del<z=1$ and $f(\cdot)=\elti(\cdot,p^\star)$:
    \begin{align*}
        \frac{\elti(\del,p^\star)-\elti(c^\star_i,p^\star)}{\del-c^\star}&\leq\frac{\elti(1,p^\star)-\elti(\del,p^\star)}{1-\del} \implies \\
        \frac{\elti(\del,p^\star)-\elti(c^\star_i,p^\star)}{\del}&\leq 2,
    \end{align*}
    as $\del\leq 1/2$ and $\elti(\,\cdot\,, p^\star)\leq 1$. The final claim follows similarly with $x=0<y=1-\del<z=c^\star$.
\end{proof}
\vspace{1em}
\LemAlgTwoAlgorithmLossBound*
\begin{proof}\label{proof:lem:alg2-algorithm-loss-bound}
    The stated inequality follows from:
    \begin{align*}
        \expec\!\sbra{\innp{\qtz\,,\lossH_t}}-\frac{3\eta}{\gamma}\log\frac{e}{\eps}&=\frac{1}{d}\sum_{i\in[n]}\expec\!\sbra{\sum_{p\in\I_K}\qtz(p)\cdot\frac{\lti\,\ind\!\sbra{\pt=p}}{\qtz(\pt)+\gamma}}
        -\expec\!\sbra{\sum_{p\in\I_K}\qtz(p)\cdot\frac{3\eta\log(e/\eps)}{\qtz(p)+\gamma}}\\
        &\geq\frac{1}{d}\sum_{i\in[n]}\expec\!\sbra{\sum_{p\in\I_K}\qtz(p)\cdot\frac{\lti\,\ind\!\sbra{\pt=p}}{\qtz(\pt)+\gamma}}-3\eta|\I_K|\log(e/\eps)\\
        &=\frac{1}{d}\sum_{i\in[n]}\expec\!\sbra{\frac{\lti\,\qtz(\pt)}{\qtz(\pt)+\gamma}}-3\eta|\I_K|\log(e/\eps)\\
        &=\frac{1}{d}\sum_{i\in[n]}\expec\!\sbra{\lti-\frac{\gamma\,\lti}{\qtz(\pt)+\gamma}}-3\eta|\I_K|\log(e/\eps)\\
        &\geq\frac{1}{d}\,\expec\!\sbra{\lt}-\expec\!\sbra{\frac{\gamma}{\qtz(\pt)}}-3\eta|\I_K|\log(e/\eps)\tag{as $\lti\leq 1$}\\
        &=\frac{1}{d}\,\expec\!\sbra{\lt}-4\eta|\I_K|\log(e/\eps),
    \end{align*}
    where the last line is due to $\expec[1/\qtz(\pt)]=\sum_{p\in\I_K}1=|\I_K|$ and $\gamma:=\eta\log(e/\eps)$. 
\end{proof}

\subsection{Proof of Lemma \ref{lem:alg2-comparator-loss-bound}}
\label{subsec:proof-of-comparator-loss-bound}
In this section, we give a proof of Lemma \ref{lem:alg2-comparator-loss-bound} which gives a lower bound for the comparator loss. Specifically, we prove this lemma using the following lemmas:

\begin{lemma}
    Let $u_{t,i}(c_i|\,p)$ is the distribution given in Line \ref{line:alg2-uti} of Algorithm \ref{alg:2} and $\qti(c_i|\,p)=K_\eps[u_{t,i}(\,\cdot\,|\,p)]\qti(c_i|\,p)$. Then, the following hold:
    \begin{enumerate}
        \vspace{0.5em}
        \item 
        $\displaystyle \expec\!\sbra{\innp{u_{t,i},\lossF_{t,i}}_{p^\star}}=\expec\!\sbra{\frac{\lti\,\ind[\pt=p^\star]}{\qtz(\pt)+\gamma}}.$
        \vspace{0.5em}
        \item 
        $\displaystyle \expec\!\sbra{\innp{\qti,\elti}_{p^\star}}\leq\expec\!\sbra{\innp{u_{t,i},\lossF_{t,i}}_{p^\star}}+\expec\!\sbra{\frac{\gamma}{\qtz(p^\star)+\gamma}}$.
        \vspace{0.5em}
        \item
        $\displaystyle \expec\!\sbra{\innp{u_{t,i},\lossF^2_{t,i}}_{p^\star}}\leq\expec\!\sbra{\frac{2\log\rbra{e/\eps}}{\qtz(p^\star)+\gamma}}.$
    \end{enumerate}
    \label{lem:alg2-comparator-loss-1}
\end{lemma}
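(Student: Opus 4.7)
The three parts concern the estimator $\lossF_{t,i}$ and the kernel $K_\eps[u_{t,i}(\cdot|p^\star)]$, so the plan is to handle them separately, all built on the single identity $K_\eps[u]\,u = \qti(\cdot|p^\star)$ from Line~\ref{line:alg2-qti}. For part~1, I will expand $\innp{u_{t,i},\lossF_{t,i}}_{p^\star}$ directly. Pulling the $(\cti,\pt)$-dependent prefactor of $\lossF_{t,i}$ in \eqref{eq:alg2-fti} outside the $c_i$-integral, what remains is
\[
\int_{\I_\del} u_{t,i}(c_i|p^\star)\,K_\eps[u_{t,i}(\cdot|p^\star)](\cti,c_i)\,dc_i \;=\; K_\eps[u_{t,i}(\cdot|p^\star)]\,u_{t,i}(\cti|p^\star) \;=\; q_{t,i}(\cti|p^\star),
\]
by the operator definition \eqref{eq:kerneloperator} and Line~\ref{line:alg2-qti}. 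On the event $\{\pt=p^\star\}$ forced by the indicator, this equals $\qti(\cti|\pt)$ and exactly cancels the corresponding factor in the denominator of $\lossF_{t,i}$, leaving $\lti\,\ind[\pt=p^\star]/(\qtz(\pt)+\gamma)$. Taking expectations gives the stated identity.

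For part~2, the plan is to take the round-$t$ conditional expectation of the identity from part~1. Since $\expec[\lti\mid\ct,\pt]=\elti(\cti,\pt)$, averaging first over $\cti\sim \qti(\cdot|\pt)$ and then over $\pt\sim\qtz$ yields
\[
\expec_t\!\sbra{\innp{u_{t,i},\lossF_{t,i}}_{p^\star}} \;=\; \frac{\qtz(p^\star)}{\qtz(p^\star)+\gamma}\,\innp{\qti,\elti}_{p^\star}.
\]
Rewriting the prefactor as $1-\gamma/(\qtz(p^\star)+\gamma)$ and using $\elti\leq 1$ (so $\innp{\qti,\elti}_{p^\star}\leq 1$) gives exactly the additive gap $\gamma/(\qtz(p^\star)+\gamma)$ between $\innp{\qti,\elti}_{p^\star}$ and $\expec_t[\innp{u_{t,i},\lossF_{t,i}}_{p^\star}]$; the outer expectation finishes part~2.

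For part~3, I will first bound $\lti^2 \leq 1$, then run the same $\expec_t$ computation as in part~2, and finally reduce the remaining statement to a purely kernel-theoretic estimate. Writing $u:=u_{t,i}(\cdot|p^\star)$ and $K:=K_\eps[u]$ and extracting one factor of $1/(\qtz(p^\star)+\gamma)$ via $\qtz(p^\star)\leq \qtz(p^\star)+\gamma$, the claim reduces to
\[
\int_{\I_\del}\frac{\int_{\I_\del} u(c_i)\,K^2(c,c_i)\,dc_i}{\int_{\I_\del} u(y)\,K(c,y)\,dy}\,dc \;\leq\; 2\log\frac{e}{\eps}.
\]
This kernel variance bound is the main obstacle. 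The plan is to exploit the structure of \eqref{eq:kernel}: each slice $K(\cdot,y)$ is the uniform density on an interval joining $\mu:=\expec_{u}[X]$ to $y$, truncated to length at least $\eps$. Splitting the outer integral at $c=\mu$ and again at $c=\mu\pm\eps$, for $|c-\mu|\geq\eps$ only slices with $|y-\mu|\geq|c-\mu|$ contribute, so the pointwise bound $K(c,y)\leq 1/|c-\mu|$ gives ratio $\leq 1/|c-\mu|$, which integrates to at most $\log(1/\eps)$ on each side of $\mu$; for $|c-\mu|<\eps$, the uniform-on-$\eps$ sub-cases of \eqref{eq:kernel} cap the ratio by $1/\eps$ (since in the numerator every contributing $K(c,y)$ is at most $1/\eps$), and this integrates to at most $1$ over an interval of length $\eps$. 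Summing both sides of $\mu$ gives $2\log(e/\eps)$. A small amount of care is needed at the boundary cases $\mu<\eps+\del$, where the third branch of \eqref{eq:kernel} places the short-interval mass on $[\mu,\mu+\eps]$ instead of $[\mu-\eps,\mu]$, but the same dichotomy applies after relabeling the two sides.
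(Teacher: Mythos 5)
Your proposal is correct and follows essentially the same route as the paper's proof: part~1 via the cancellation $\int K_\eps[u_{t,i}(\cdot|p^\star)](\cti,c_i)\,u_{t,i}(c_i|p^\star)\,dc_i=\qti(\cti|p^\star)$ against the denominator on the event $\{\pt=p^\star\}$; part~2 by conditioning, splitting $\qtz(p^\star)/(\qtz(p^\star)+\gamma)=1-\gamma/(\qtz(p^\star)+\gamma)$ and using $\elti\leq1$; and part~3 by bounding $\lti^2\leq1$ and reducing to $\int_{\I_\del}\sup_y K_\eps[u](c,y)\,dc\leq\int_{\I_\del}\frac{dc}{\max(|c-\mu|,\eps)}\leq2\log(e/\eps)$, which is exactly the kernel estimate the paper uses.
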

\noindent Note that Lemma \ref{lem:alg2-comparator-loss-1} contains bounds for all random terms in \eqref{eq:alg2-continuous-EXP-bound} of Lemma \ref{lem:alg2-continuous-EXP-bound} except for that of $\langle q,\lossF_{t,i}\rangle_{p^\star}$ terms. To bound $\langle q,\lossF_{t,i}\rangle_{p^\star}$ terms in \eqref{eq:alg2-continuous-EXP-bound}, we exploit the convexity of $\elti(\cdot,p)$. Using techniques adapted from \citet{bubeck2017kernel} and the convex properties given in Lemma \ref{lem:alg2-convex-loss-properties}, we prove the following lemma.
\begin{lemma}
    For any $q\in\Delta(\I_\del)$ with density $q(\cdot)$, the following holds under Assumption \ref{assump:costconcave}:
    \begin{align*}
        2\expec\!\sbra{\innp{q,\lossF_{t,i}}_{p^\star}}\leq \innp{q,\elti}_{p^\star}+\expec\!\sbra{\innp{u_{t,i},\lossF_{t,i}}_{p^\star}}+\expec\!\sbra{\frac{\eta\log(e/\eps)}{\qtz(p^\star)+\gamma}}+7\eps\,\del^{-1}.
    \end{align*}
    \label{lem:alg2-comparator-loss-2}
\end{lemma}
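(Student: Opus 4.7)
My plan is to compute the conditional expectation $\expec_t[\lossF_{t,i}(c_i, p^\star)]$ explicitly and then apply the Hermite--Hadamard inequality to exploit the convexity of $\elti(\cdot, p^\star)$ guaranteed by Lemma~\ref{lem:alg2-convex-loss-properties}. Integrating out the randomness in $\pt, \cti$ and $\lti$ at round $t$ gives
\[
    \expec_t[\lossF_{t,i}(c_i, p^\star)] = \frac{\qtz(p^\star)}{\qtz(p^\star)+\gamma}\int_{\I_\del} K_\eps[u_{t,i}(\,\cdot\,|\,p^\star)](x, c_i)\,\elti(x, p^\star)\,dx.
\]
Write $\mu:=\expec_{x\sim u_{t,i}(\cdot|p^\star)}[x]$. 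For $c_i\in\I_\del$ with $|c_i-\mu|\geq \eps$ the kernel $K_\eps[u_{t,i}(\cdot|p^\star)](\,\cdot\,, c_i)$ is uniform on $[\min(c_i,\mu),\max(c_i,\mu)]$, and the Hermite--Hadamard upper bound for convex functions yields $2\int K(x,c_i)\elti(x,p^\star)\,dx \leq \elti(c_i,p^\star) + \elti(\mu,p^\star)$. For the boundary case $|c_i-\mu|<\eps$, where the kernel degenerates to a uniform density on an $\eps$-interval adjacent to $\mu$, combining the same inequality with the $\del^{-1}$-Lipschitz estimate from Lemma~\ref{lem:alg2-convex-loss-properties} adds at most $2\eps/\del$.

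Integrating this pointwise bound against $q$ and using $\qtz(p^\star)/(\qtz(p^\star)+\gamma)\leq 1$, I obtain
\[
    2\expec_t[\innp{q,\lossF_{t,i}}_{p^\star}] \leq \innp{q,\elti}_{p^\star} + \elti(\mu,p^\star) + \frac{2\eps}{\del}.
\]
The remaining step is to replace $\elti(\mu, p^\star)$ by $\expec_t[\innp{u_{t,i}, \lossF_{t,i}}_{p^\star}]$ plus admissible error. Applying Jensen's inequality to $\qti(\cdot|p^\star)$ gives $\elti(\bar\mu, p^\star)\leq\innp{\qti,\elti}_{p^\star}$, where $\bar\mu:=\expec_{\qti(\cdot|p^\star)}[x]$. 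Computing the first moment of $\qti=K_\eps[u_{t,i}]u_{t,i}$ directly from the kernel formulas and using the identity $\pi_A(\mu_A-\mu_B)=\mu-\mu_B$ (which follows from $\mu=\pi_A\mu_A+\pi_B\mu_B$, with $\pi_A,\pi_B,\mu_A,\mu_B$ the masses and conditional means of $u_{t,i}(\cdot|p^\star)$ on the two regions $|y-\mu|\geq\eps$ and $|y-\mu|<\eps$), I can show $|\bar\mu-\mu|\leq \pi_B\,\eps\leq\eps$. The $\del^{-1}$-Lipschitz property then gives $\elti(\mu,p^\star)\leq\innp{\qti,\elti}_{p^\star}+\eps/\del$.

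Finally, item~(1) of Lemma~\ref{lem:alg2-comparator-loss-1} says $\expec_t[\innp{u_{t,i},\lossF_{t,i}}_{p^\star}] = \frac{\qtz(p^\star)}{\qtz(p^\star)+\gamma}\innp{\qti,\elti}_{p^\star}$, so using $\innp{\qti,\elti}_{p^\star}\leq 1$ and $\gamma=\eta\log(e/\eps)$,
\[
    \innp{\qti,\elti}_{p^\star} - \expec_t[\innp{u_{t,i},\lossF_{t,i}}_{p^\star}] = \frac{\gamma}{\qtz(p^\star)+\gamma}\innp{\qti,\elti}_{p^\star} \leq \frac{\eta\log(e/\eps)}{\qtz(p^\star)+\gamma}.
\]
Chaining all the inequalities and taking full expectations then produces the stated bound with residual error $3\eps/\del$, comfortably within the allowed $7\eps/\del$. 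The main obstacle I anticipate is the boundary analysis of the kernel---both (i) the Hermite--Hadamard estimate when the kernel degenerates to a uniform density on an $\eps$-window adjacent to $\mu$, and (ii) the bound $|\bar\mu-\mu|\leq\eps$ for the mean shift induced by the kernel convolution, which relies on the observation that only the $u_{t,i}$-mass within $\eps$ of $\mu$ can contribute a nonzero shift, and even that contribution is linear in $\eps$.
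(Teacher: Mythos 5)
Your proposal is correct and follows essentially the same route as the paper's proof: the paper also reduces $\expec[\innp{q,\lossF_{t,i}}_{p^\star}]$ to the kernel-smoothed loss $\innp{K_\eps[u_{t,i}(\cdot|p^\star)]q,\elti}_{p^\star}$, applies the endpoint-average bound for convex functions (your Hermite--Hadamard step, written there as an expectation over $U\sim\uniform([0,1])$), handles the degenerate $\eps$-window and the kernel-induced mean shift via the $\del^{-1}$-Lipschitz estimate, and closes with Jensen plus item~2 of Lemma~\ref{lem:alg2-comparator-loss-1} (whose content you re-derive from item~1). The only substantive difference is that your moment computation gives the slightly tighter shift bound $|\bar\mu-\mu|\leq\eps$ where the paper uses $2\eps$, which only affects the constant in front of $\eps\,\del^{-1}$.
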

\noindent Using the above two lemmas, we prove Lemma \ref{lem:alg2-comparator-loss-bound} as follows:

\LemAlgTwoComparatorLossBound*
\begin{proof}[Proof of Lemma \ref{lem:alg2-comparator-loss-bound}]
    First, write the expected, summed version of Lemma \ref{lem:alg2-continuous-EXP-bound}: For any $q_1,\dots, q_d \in \Delta(\I_\del)$,
    \begin{align*}
        \expec\Bigg[\sum_{t,i\in[T]\times[n]}\Big(\innp{u_{t,i},\lossF_{t,i}}_{p^\star}-\innp{q_i,\lossF_{t,i}}_{p^\star}\Big)\Bigg]\leq \frac{1}{\eta}\sum_{i\in[n]}\text{KL}(q_i||q')+\frac{\eta}{2}\,\expec\Bigg[\sum_{t,i\in[T]\times[n]}\innp{u_{t,i},\lossF^2_{t,i}}_{p^\star}\Bigg].
    \end{align*}
    Rewriting the above using the linearity of expectation, we obtain
    \begin{align*}
        \sum_{t,i\in[T]\times[n]}\expec\bigg[\innp{u_{t,i},\lossF_{t,i}}_{p^\star}\bigg]-\sum_{t,i\in[T]\times[n]}\expec\bigg[\innp{q_i,\lossF_{t,i}}_{p^\star}\bigg]\leq &\frac{1}{\eta}\sum_{i\in[n]}\text{KL}(q_i||q')\\
        &\hspace{0.3in}+\frac{\eta}{2}\,\sum_{t,i\in[T]\times[n]}\expec\bigg[\innp{u_{t,i},\lossF^2_{t,i}}_{p^\star}\bigg].
    \end{align*}
    Then, apply item 3 in Lemma \ref{lem:alg2-comparator-loss-1} to the above:
    \begin{align*}
        \sum_{t,i\in[T]\times[n]}\expec\bigg[\innp{u_{t,i},\lossF_{t,i}}_{p^\star}\bigg]-\sum_{t,i\in[T]\times[n]}\expec\bigg[\innp{q_i,\lossF_{t,i}}_{p^\star}\bigg]\leq &\frac{1}{\eta}\sum_{i\in[n]}\text{KL}(q_i||q')\\
        &\hspace{0.3in}+\frac{\eta}{2}\,\sum_{t,i\in[T]\times[n]}\expec\bigg[\frac{2\log(e/\eps)}{\qtz(p^\star)+\gamma}\bigg].
    \end{align*}
    Next, we apply Lemma \ref{lem:alg2-comparator-loss-2} to the above to obtain the following:
    \begin{align*}
        \sum_{t,i\in[T]\times[n]}\expec\bigg[\innp{u_{t,i},\lossF_{t,i}}_{p^\star}\bigg]-\sum_{t,i\in[T]\times[n]}\innp{q_i,\elti}_{p^\star}\leq &\frac{2}{\eta}\sum_{i\in[n]}\text{KL}(q_i||q')\\
        &+\eta\,\sum_{t,i\in[T]\times[n]}\expec\bigg[\frac{3\log(e/\eps)}{\qtz(p^\star)+\gamma}\bigg]+7\eps\del^{-1}.
    \end{align*}
    Now, we apply item 1 in Lemma \ref{lem:alg2-comparator-loss-1} to the above.
    \begin{align*}
        \sum_{t,i\in[T]\times[n]}\expec\bigg[\frac{\lti\,\ind[\pt=p^\star]}{\qtz(\pt)+\gamma}\bigg]-\sum_{t,i\in[T]\times[n]}\innp{q_i,\elti}_{p^\star}\leq &\frac{2}{\eta}\sum_{i\in[n]}\text{KL}(q_i||q')\\
        &+\eta\,\sum_{t,i\in[T]\times[n]}\expec\bigg[\frac{3\log(e/\eps)}{\qtz(p^\star)+\gamma}\bigg]+7\eps\del^{-1}.
    \end{align*}
    By rearranging terms and using the linearity of expectation, we obtain
    \begin{align*}
        \expec\Bigg[\sum_{t\in[T]}\bigg(\frac{\lt\,\ind\!\sbra{\pt=p^\star}}{\qtz(\pt)+\gamma}-\frac{3n\eta\log(e/\eps)}{\qtz(p^\star)+\gamma}\bigg)\Bigg]\leq\sum_{t,i\in[T]\times[n]}\innp{q_i,\elti}_{p^\star} +\frac{2}{\eta}\sum_{i\in[n]}\text{KL}(q_i||q')+7\eps\del^{-1},
        \numberthis\label{eq:proof-comparator-loss-bound-1}
    \end{align*}
    where $q_i\in\Delta(\I_\del)$ and $q':=\uniform(\I_\del)$. Now, recall the definition of $\lossH_t$:
    \begin{align*}
        \lossH_t(p):=\frac{1}{d}\sum_{i\in[n]}\frac{\lti\,\ind\!\sbra{\pt=p}}{\qtz(\pt)+\gamma}+\rbra{3\eta\log\frac{e}{\eps}}\rbra{\frac{1}{\gamma}-\frac{1}{\qtz(p)+\gamma}}.
    \end{align*}
    Notice that one can rewrite the LHS of \eqref{eq:proof-comparator-loss-bound-1} as $\expec\!\sbra{\sum_{t\in[T]}n\lossH_t(p^\star)}-\frac{3 n\eta T}{\gamma}\log\frac{e}{\eps}$, which results in the following:
    \begin{align*}
        \expec\Bigg[\sum_{t\in[T]}\lossH_t(p^\star)\Bigg]\leq\frac{1}{n}\sum_{t,i\in[T]\times[n]}\innp{q_i,\elti}_{p^\star}+\frac{2}{n\eta}\sum_{i\in[n]}\text{KL}(q_i||q')+\frac{3 \eta T}{\gamma}\log\frac{e}{\eps}+\frac{7\eps}{n\del},
    \end{align*}
    for any $q_1,\dots, q_n \in \Delta(\I_\del)$.
    
    Next, we bound $\innp{q_i,\elti}_{p^\star}$ as follows: For any $c^\star_i\in\I_\del$ and $s\in[0,1]$, let $q_i$ be the uniform distribution over $(1-s)c^\star_i+s\I_\del$, for some $s>0$ that will be specified later. Then, by $\del^{-1}$--Lipschitz continuity of $\elti(\,\cdot\,,p^\star)$ (see item 1 of Lemma \ref{lem:alg2-convex-loss-properties}),
    \begin{align*}
        \innp{q_i,\elti}_{p^\star}\leq 2s\del^{-1}+\elti(c^\star_i,p^\star).
    \end{align*}
    Since $\text{KL}(q_i||q')\leq\log(1/s)$ for such $q_i$s, choosing $\del:=T^{-1}$, $s:=T^{-2}$ we obtain the following bound.
    \begin{align*}
        \expec\Bigg[\sum_{t\in[T]}\lossH_t(p^\star)\Bigg]-\frac{3 \eta T}{\gamma}\log\frac{e}{\eps}\leq\frac{1}{n}\sum_{t\in[T]}\elt(c^\star,p^\star)+\O\rbra{\eps\,T + \frac{1}{\eta}\log T}.
        \numberthis\label{eq:proof-comparator-loss-bound-2}
    \end{align*}

    Next, to cancel out $\expec[\sum_t\lossH_t(p^\star)]$ term in the above, we use the expected regret bound for $\{\lossH_t\}_{t\in[T]}$. Since $\lossH_t(\cdot)\geq 0$ by construction, we can reuse \eqref{eq:alg1-exp-bounds-2} in Lemma \ref{lem:alg1-exp-bounds} with $\{\lossH_t\}_{t\in[T]}$ for the cost-concave case \eqref{eq:alg2-ht}. Hence,
    \begin{align*}
        \expec\Bigg[\sum_{t\in[T]}\innp{\qtz\,,\lossH_t}\Bigg]-\expec\Bigg[\sum_{t\in[T]}\lossH_t(p^\star)\Bigg]\leq \eta\expec\Bigg[\sum_{t\in[T]}\innp{\qtz\,,\lossH^2_t}\Bigg]+\frac{1}{\eta}\log|\I_K|.
    \end{align*}
    Adding the above with \eqref{eq:proof-comparator-loss-bound-2}, we obtain the following.
    \begin{align*}
        \expec\Bigg[\sum_{t\in[T]}\innp{\qtz\,,\lossH_t}\Bigg]-\frac{3\eta T}{\gamma}\log\frac{e}{\eps}\leq\frac{1}{n}\sum_{t\in[T]}\elt(c^\star,p^\star)+\eta\expec\Bigg[\sum_{t\in[T]}\innp{\qtz\,,\lossH^2_t}\Bigg]+\O\rbra{\eps T + \frac{1}{\eta}\log|\I_K|T}.
        \numberthis\label{eq:proof-comparator-loss-bound-3}
    \end{align*}
    Finally, we derive the stated bound by bounding $\expec[\sum_t\langle\qtz,\lossH^2_t\rangle]$ as follows: Since $\lti\leq 1$ and $\gamma:=\eta\log(e/\eps)$, we have that
    \begin{align*}
        \lossH_t(p):=\frac{1}{d}\sum_{i\in[n]}\frac{\lti\,\ind\!\sbra{\pt=p}}{\qtz(\pt)+\gamma}+\rbra{3\eta\log\frac{e}{\eps}}\rbra{\frac{1}{\gamma}-\frac{1}{\qtz(p)+\gamma}}\leq \frac{\ind[\pt=p]}{\qtz(\pt)}+3.
    \end{align*}
    Then,
    \begin{align*}
        \expec\!\sbra{\innp{\qtz\,,\lossH^2_t}}&\leq\expec\!\sbra{\sum_{p\in\I_K}\qtz(p)\cdot\rbra{\frac{\ind[\pt=p]}{q^2_{t,0}(\pt)}+\frac{6\ind[\pt=p]}{\qtz(\pt)}+9}}\\
        &=\expec\!\sbra{\frac{1}{\qtz(\pt)}}+6|\I_K|+9=7|\I_K|+9.
    \end{align*}
    Combining the above and \eqref{eq:proof-comparator-loss-bound-3}, we obtain the stated bound.
\end{proof}

\begin{proof}[Proof of Lemma \ref{lem:alg2-comparator-loss-1}]
    Recall the definition of $\lossF_{t,i}$ here for convenience.
    \begin{align*}
        \lossF_{t,i}(c_i,p):=\frac{\lti\,\ind[\pt=p]}{\qti(\cti|\pt)\rbra{\qtz(\pt)+\gamma}}\cdot K_\eps\Big[u_{t,i}(\,\cdot\,|\,p)\Big](\cti,c_i).
    \end{align*}
    
    For any fixed $p^\star\in\I_K$,
    \begin{align*}
        \innp{u_{t,i},\lossF_{t,i}}_{p^\star}&=\frac{\lti\,\ind\!\sbra{\pt=p^\star}}{\qti(\cti|\pt)\rbra{\qtz(\pt)+\gamma}}\cdot \bunderbrace{\int_{\I_\del} K_\eps\Big[u_{t,i}(\,\cdot\,|\,{p^\star})\Big](\cti\,,c_i)\cdot u_{t,i}(c_i|{p^\star}) dc_i}{\qti(\cti|\,{p^\star})}\\
        &=\frac{\lti\ind\!\sbra{\pt=p^\star}}{\qti(\cti|\pt)\rbra{\qtz(\pt)+\gamma}}\cdot \qti(\cti|p^\star)\\
        &=\frac{\lti\,\ind\!\sbra{\pt=p^\star}}{\qti(\cti|p^\star)\rbra{\qtz(\pt)+\gamma}}\cdot \qti(\cti|p^\star)\\
        &=\frac{\lti\,\ind[\pt=p^\star]}{\qtz(\pt)+\gamma},
    \end{align*}
    hence
    \begin{align*}
        \expec\!\sbra{\innp{u_{t,i},\lossF_{t,i}}_{p^\star}}=\expec\!\sbra{\frac{\lti\,\ind[\pt=p^\star]}{\qtz(\pt)+\gamma}},
    \end{align*}
    which is the first item of the lemma. The second item follows from the below calculations:
    \begin{align*}
        \expec\sbra{\innp{u_{t,i},\lossF_{t,i}}_{p^\star}}&=\expec\sbra{\frac{\lti\,\ind[\pt=p^\star]}{\qtz(\pt)+\gamma}}\\
        &=\expec\sbra{\sum_{p'\in\I_K}\int_{\I_\del}\qtz(p')\qti(c'_i|\,p')\cdot\frac{\elti(c'_i,p')}{\qtz(p')+\gamma}\cdot\ind\!\sbra{p'=p^\star} dc'_i}\\
        &=\expec\sbra{\int_{\I_\del}\qti(c'_i|\,p^\star)\cdot\frac{\elti(c'_i,p^\star) \qtz(p^\star)}{\qtz(p^\star)+\gamma} dc'_i}\\
        &=\expec\sbra{\int_{\I_\del}\qti(c'_i|\,p^\star)\rbra{\elti(c'_i,p^\star)-\gamma\cdot\frac{\elti(c'_i,p^\star)}{\qtz(p^\star)+\gamma}}dc'_i}\\
        &\geq\expec\sbra{\int_{\I_\del}\qti(c'_i|\,p^\star)\rbra{\elti(c'_i,p^\star)-\frac{\gamma}{\qtz(p^\star)+\gamma}}dc'_i}\\
        &=\expec\sbra{\int_{\I_\del}\qti(c'_i|\,p^\star)\cdot\elti(c'_i,p^\star)\,dc'_i}-\expec\sbra{\frac{\gamma}{\qtz(p^\star)+\gamma}}\\
        &=\expec\sbra{\innp{\qti,\elti}_{p^\star}}-\expec\sbra{\frac{\gamma}{\qtz(p^\star)+\gamma}}.
    \end{align*}
    Finally, the third item follows from the below:
    \begin{align*}
        \expec\sbra{\innp{u_{t,i},\lossF^2_{t,i}}_{p^\star}}&=\expec\Bigg[\,\frac{\lti^2\,\ind\!\sbra{\pt=p^\star}}{q^2_{t,i}(\cti|p^\star)\cdot\rbra{\qtz(\pt)+\gamma}^2}  \cdot \bunderbrace{\int_{\I_\del} K^2_\eps\Big[u_{t,i}(\,\cdot\,|\,{p^\star})\Big](\cti\,,c_i)\cdot u_{t,i}(c_i|{p^\star}) dc_i}{:=q^{(2)}_{t,i}(\cti|\,{p^\star})}\,\Bigg]\\
        &=\expec\sbra{\frac{\lti^2\,\ind\!\sbra{\pt=p^\star}\cdot q^{(2)}_{t,i}(\cti|p^\star)}{q^2_{t,i}(\cti|p^\star)\cdot\rbra{\qtz(\pt)+\gamma}^2}}\\
        &\leq\expec\sbra{\frac{\ind\!\sbra{\pt=p^\star}\cdot q^{(2)}_{t,i}(\cti|p^\star)}{q^2_{t,i}(\cti|p^\star)\cdot\rbra{\qtz(\pt)+\gamma}^2}}\tag{as $\lti\leq 1$}\\
        &=\expec\sbra{\sum_{p'\in\I_K}\int_{\I_\del}\frac{\ind\!\sbra{p'=p^\star}\cdot q^{(2)}_{t,i}(c'_i\,|\,p^\star)}{q^2_{t,i}(c'_i\,|\,p^\star)\cdot\rbra{\qtz(p')+\gamma}^2}\cdot \qti(c'_i\,|\,p')\qtz(p')dc'_i}\\
        &=\expec\sbra{\int_{\I_\del}\frac{q^{(2)}_{t,i}(c'_i\,|\,p^\star)\cdot \qti(c'_i\,|\,p^\star)\cdot \qtz(p^\star)}{q^2_{t,i}(c'_i\,|\,p^\star)\cdot\rbra{\qtz(p^\star)+\gamma}^2}dc'_i}\\
        &\leq\expec\sbra{\int_{\I_\del}\frac{q^{(2)}_{t,i}(c'_i\,|\,p^\star)}{\qti(c'_i\,|\,p^\star)\cdot\rbra{\qtz(p^\star)+\gamma}}dc'_i}\\
        &=\expec\sbra{\frac{1}{\qtz(p^\star)+\gamma}\int_{\I_\del}\frac{q^{(2)}_{t,i}(c'_i\,|\,p^\star)}{\qti(c'_i\,|\,p^\star)}dc'_i}\\
        &\leq \expec\sbra{\frac{2\log\rbra{e/\eps}}{\qtz(p^\star)+\gamma}},
    \end{align*}
    where the last line follows as
    \begin{align*}
        \int_{\I_\del}\frac{q^{(2)}_{t,i}(c'_i\,|\,p^\star)}{\qti(c'_i\,|\,p^\star)}dc'_i&=\int_{\I_\del} \frac{1}{\qti(c'_i\,|\,p^\star)}\cdot \rbra{\int_{\I_\del} K^2_\eps\Big[u_{t,i}(\,\cdot\,|\,{p^\star})\Big](c'_i,c_i)\cdot u_{t,i}(c_i\,|\,{p^\star})dc_i}dc'_i\\
        &\leq \int_{\I_\del} \frac{1}{\qti(c'_i\,|\,p^\star)}\cdot \frac{\qti(c'_i\,|\,p^\star)}{\max(|c'_i-\mu|,\eps)}dc'_i\qquad\text{where}\ \mu\in\I_\del,\\
        &= \int_{\I_\del} \frac{dx}{\max(|x-\mu|,\eps)}\\
        &= \int^{\mu-\eps}_\del \frac{dx}{\mu-x} + \int^{\mu+\eps}_{\mu-\eps} \frac{dx}{\eps} + \int^{1-\del}_{\mu+\eps}\frac{dx}{x-\mu}\\
        &\leq 2+\int^{\mu-\eps}_0 \frac{dx}{\mu-x} + \int^{1}_{\mu+\eps}\frac{dx}{x-\mu}\\
        &= 2 + \int^\mu_\eps\frac{dx}{x}+ \int^{1-\mu}_\eps\frac{dx}{x}\\
        &\leq 2\rbra{1+\int^1_\eps \frac{dx}{x}}=2\log \rbra{\frac{e}{\eps}}
    \end{align*}
    where the second line of the above is due to, for any $q\in\Delta(\I_\del)$, $K_\eps[q](x,y)\leq\frac{1}{\max(|x-\mu|,\eps)}$ for some $\mu\in\I_\del$ (see \eqref{eq:kernel}).
\end{proof}

\begin{proof}[Proof of Lemma \ref{lem:alg2-comparator-loss-2}]
    First, a straightforward calculation yields
    \begin{align*}
        \expec\sbra{\innp{q,\lossF_{t,i}}_{p^\star}}&=\expec\bigg[\,\frac{\lti\,\ind\!\sbra{\pt=p^\star}}{\qti(\cti|p^\star)\cdot\rbra{\qtz(\pt)+\gamma}}\cdot\bunderbrace{\int_{\I_\del} K_\eps\Big[u_{t,i}(\,\cdot\,|\,{p^\star})\Big](\cti\,,c_i)\cdot q(c_i) dc_i}{K_\eps[u_{t,i}(\,\cdot\,|\,{p^\star})]q(\cti)}\,\bigg]\\
        &=\expec\sbra{\frac{\lti\,\ind\!\sbra{\pt=p^\star}}{\qti(\cti|p^\star)\cdot\rbra{\qtz(\pt)+\gamma}}\cdot K_\eps[u_{t,i}(\,\cdot\,|\,{p^\star})]q(\cti)}\\
        &=\expec\sbra{\sum_{p'\in\I_K}\int_{\I_\del} dc'_i\,\qtz(p')\qti(c'_i|p')\cdot\frac{\elti(c'_i,p')\ind\!\sbra{p'=p^\star}}{\qti(c'_i|p^\star)\rbra{\qtz(p')+\gamma}}\cdot K_\eps[u_{t,i}(\,\cdot\,|\,{p^\star})]q(c'_i)}\\
        &=\expec\sbra{\int_{\I_\del} dc'_i\,\qtz(p^\star)\qti(c'_i|p^\star)\cdot\frac{\elti(c'_i,p^\star)\cdot K_\eps[u_{t,i}(\,\cdot\,|\,{p^\star})]q(c'_i)}{\qti(c'_i|p^\star)\rbra{\qtz(p^\star)+\gamma}}}\\
        &\leq\expec\sbra{\int_{\I_\del} dc'_i\, \elti(c'_i,p^\star)\cdot K_\eps[u_{t,i}(\,\cdot\,|\,{p^\star})]q(c'_i)}\\
        &=\expec\sbra{\innp{K_\eps[u_{t,i}(\,\cdot\,|\,{p^\star})]q,\elti}_{p^\star}}.
    \end{align*}
    Using the definition of $K_\eps[\,\cdot\,]$ \eqref{eq:kernel}, we can rewrite $\innp{K_\eps[u_{t,i}(\,\cdot\,|\,{p^\star})]q,\elti}_{p^\star}$ as follows: Let $\mu:=\expec_{X\sim u_{t,i}(\,\cdot\,|\,{p^\star})}[X]\in[\del,1-\del]$. Then, for $U\sim\uniform([0,1])$ and $X\sim q$,
    \begin{align*}
        \innp{K_\eps[u_{t,i}(\,\cdot\,|\,{p^\star})]q,\elti}_{p^\star}=\expec_{\,U,\,X}\bigg[&\elti\!\rbra{U\mu+(1-U)X\,,\,p^\star}\cdot\ind\Big[\,|X-\mu|\geq\eps\,\Big] \\
        &+\elti\!\rbra{\mu-\eps U\,,\,p^\star}\cdot\ind\Big[\,|X-\mu|<\eps\wedge\mu\geq\eps+\del\,\Big] \\
        &+\elti\!\rbra{\mu+\eps U\,,\,p^\star}\cdot\ind\Big[\,|X-\mu|<\eps\wedge\mu<\eps+\del\,\Big]\bigg].
    \end{align*}
    Using the convexity and $\del^{-1}$--Lipschitz continuity of $\elti$ over $\I_\del$ as shown in Lemma \ref{lem:alg2-convex-loss-properties}, we infer the following facts:
    \begin{align*}
        &\elti\!\rbra{U\mu+(1-U)X,p^\star}\leq U\,\elti\!\rbra{\mu,p^\star} + (1-U)\,\elti\!\rbra{X,p^\star},\\
        &\elti(\mu-\eps U,p^\star),\ \elti(\mu+\eps U,p^\star)\leq \frac{1}{2}\rbra{\elti\!\rbra{\mu,p^\star}+\elti\!\rbra{X,p^\star}}+\frac{3\eps}{2\del},\ \ \text{when}\ |X-\mu|<\eps.
    \end{align*}
    Therefore, 
    \begin{align*}
        \innp{K_\eps[u_{t,i}(\,\cdot\,|\,{p^\star})]q,\elti}_{p^\star}\leq\expec_{\,U,\,X}\bigg[&\rbra{U\,\elti(\mu,p^\star)+(1-U)\,\elti(X,p^\star)}\cdot\ind\Big[\,|X-\mu|\geq\eps\,\Big]+\nonumber\\
        &\frac{1}{2}\rbra{\elti(\mu,p^\star)+\elti(X,p^\star)}\cdot\ind\Big[\,|X-\mu|<\eps\,\Big]\bigg]+\frac{3\eps}{2\del}.
    \end{align*}
    Then, for $\tilde{\mu}=\expec_{X\sim \qti(\,\cdot\,|\,{p^\star})}[X]\in[\del,1-\del]$,
    \begin{align*}
        \innp{K_\eps[u_{t,i}(\,\cdot\,|\,{p^\star})]q,\elti}_{p^\star}&=\frac{1}{2}\elti(\mu,p^\star)+\frac{1}{2}\innp{q,\elti}_{p^\star}+\frac{3\eps}{2\del}\\
        &\leq\frac{1}{2}\elti(\tilde{\mu},p^\star)+\frac{1}{2}\innp{q,\elti}_{p^\star}+\frac{7\eps}{2\del} \tag{as $|\mu-\tilde{\mu}|\leq 2\eps$}\\
        &\leq\frac{1}{2}\innp{\qti,\elti}_{p^\star}+\frac{1}{2}\innp{q,\elti}_{p^\star}+\frac{7\eps}{2\del}. \tag{by Jensen's inequality}
    \end{align*}
    Summarizing what we have obtained so far:
    \begin{align*}
        \expec\sbra{\innp{q,\lossF_{t,i}}_{p^\star}}&=\expec\sbra{\innp{K_\eps[u_{t,i}(\,\cdot\,|\,{p^\star})]q,\elti}_{p^\star}}\\
        &\leq\frac{1}{2}\expec\sbra{\innp{\qti,\elti}_{p^\star}}+\frac{1}{2}\expec\sbra{\innp{q,\elti}_{p^\star}}+\frac{7\eps}{2\del}\\
        &=\frac{1}{2}\expec\sbra{\innp{\qti,\elti}_{p^\star}}+\frac{1}{2}\innp{q,\elti}_{p^\star}+\frac{7\eps}{2\del},
    \end{align*}
    where the last inequality is due to $\expec\sbra{\innp{q,\elti}_{p^\star}}=\innp{q,\elti}_{p^\star}$, that is, $q$, $\elti$ does not depend on any randomness of algorithm and environment.
    
    Finally, we use item 2 of Lemma \ref{lem:alg2-comparator-loss-1} to bound $\expec\sbra{\innp{\qti,\elti}_{p^\star}}$ in the above inequality in terms of $\expec\Big[\innp{u_{t,i},\lossF_{t,i}}_{p^\star}\Big]$ as follows:
    \begin{align*}
        \expec\sbra{\innp{q,\lossF_{t,i}}_{p^\star}}
        &\leq \frac{1}{2}\,\expec\sbra{\innp{u_{t,i},\lossF_{t,i}}_{p^\star}}+\frac{1}{2}\innp{q,\elti}_{p^\star}+\frac{1}{2}\,\expec\sbra{\frac{\gamma}{\qtz(p^\star)+\gamma}}+\frac{7\eps}{2\del}\\
        &=\frac{1}{2}\,\expec\sbra{\innp{u_{t,i},\lossF_{t,i}}_{p^\star}}+\frac{1}{2}\innp{q,\elti}_{p^\star}+\frac{1}{2}\,\expec\sbra{\frac{\eta\log(e/\eps)}{\qtz(p^\star)+\gamma}}+\frac{7\eps}{2\del},
    \end{align*}
    where the last equality is due to $\gamma:=\eta\log(e/\eps)$.
\end{proof}
\subsection{Regret lower bound proof}

\ThmLowerBoundConcave*
\begin{proof}\label{app:proof:thm:lb-concave}
    We prove this by showing that any algorithm with regret $R_T$ for the targeted marketing with cost-concave demands can solve a bandit profit-maximization over any sequence of deterministic demands $d_t(p)$ that only depend on price $p$, with regret at most $R_T/n$, then we use the result from \citet{kleinberg2003value}.
    
    The reduction is as follows: Define an environment such that, given any $p_t\in[0,1]$, the demands are $d_{t,1}=\cdots=d_{t,n}=d_t(p_t)$. Run the algorithm over this environment and output $p_t$. As $\edti(c,p)=d_t(p)$s are cost-concave, this is a valid cost-concave environment and the algorithm guarantees
    \begin{align*}
        R_T:=\sup_{(c,p)\in[0,1]^{n+1}}\sum_{t,i\in[T]\times[n]}\rbra{p\,\edti(c_i,p)-c_i}-\expec\Bigg[\sum_{t,i\in[T]\times[n]}\rbra{\pt\dti(\cti,\pt)-\cti}\Bigg].
    \end{align*}
    
    As $\edti(\cti,\pt)=d_t(\pt)$ for all $t\in[T]$ and $i\in[n]$, the first term is $\sup_{p\in[0,1]}n\cdot p\cdot d_t(p)$ and the second term is at most $n\cdot\pt\cdot d_t(\pt)$. Therefore,
    \begin{align*}
        \sup_{p\in[0,1]}p\,d_t(p)-\expec\big[\pt d_t(p_t)\big]\leq\frac{R_T}{n}.
    \end{align*}
    The LHS of the above is the regret for single-demand bandit profit-maximization. For any algorithm for this problem, there exists a sequence of demands $\{d^\star_t(\cdot)\}_{t\in[T]}$ such that the given algorithm has to suffer $\Omega(T^{\nicefrac{2}{3}})$ regret~\citep{kleinberg2003value}. Hence, the given algorithm has to suffer $\Omega(nT^{\nicefrac{2}{3}})$ regret with respect to the environment where the demand in each market, given $\pt$, is $d^\star_t(p_t)$.
\end{proof}
\section{Details omitted from Section
\ref{sec:otherprobs}}
\label{app:otherprobs}
\subsection{Subscription service}\label{app:subscription}
\paragraph{Setting.}\ \ Consider a firm that runs a subscription service. Let $\dti\in[0,1]$ be the number of new users of type $i\in[n]$ who joined the service during round $t$. Let $\beta_i\in[0,1)$ be the fraction of users who cancel the service in each round. Hence, in each round $t$, there are $\sum^t_{s=1}\beta_i^{t-s}\dti\in\big[0,\frac{1}{1-\beta_i}\big]$ active users. For $\cti\in[0,1]$, let $\frac{\cti}{1-\beta_i}$ be the marketing expenditure spent in round $t$ to attract new users of type $i$. 

Then, the stochastic and expected profits are given as follows:
\begin{align*}
    \profit_{t,i}&:=\rbra{\sum^t_{s=1}\beta_i^{t-s}\pt\dti}-\frac{\cti}{1-\beta_i},\numberthis\label{eq:subscription-profit}\\
    \eprofit_{t,i}(c_{t,i},\pt)&:=\rbra{\sum^t_{s=1}\beta_i^{t-s} \pt\edti(c_{t,i},\pt)}-\frac{c_{t,i}}{1-\beta_i}\numberthis\label{eq:subscription-eprofit}.
\end{align*}
Define $\profit_t:=\sum_{i\in[n]}\profit_{t,i}$ and $\eprofit_t(c_t,\pt):=\sum_{i\in[n]}\eprofit_{t,i}(c_{t,i},\pt)$.

\paragraph{Reduction.}\ \ We show that our algorithms guarantee sublinear regrets under Assumption \ref{assump:monotonic} and \ref{assump:costconcave}. One can use Algorithm \ref{alg:1} and \ref{alg:2} to solve this problem via the following reduction: summing over $s\in[T]$, $\sum_{t\in[T]}\profit_{t,i}$ can be rewritten as follows.
\begin{align*}
    \sum_{t\in[T]}\profit_{t,i}&=\sum_{t\in[T]}\sum^t_{s=1}\beta_i^{t-s}\pt\dti-\frac{1}{1-\beta_i}\sum_{t\in[T]}\cti=\frac{1}{1-\beta_i}\sum_{t\in[T]}\widetilde{\profit}_{t,i},
\end{align*}
where
$\widetilde{\profit}_{t,i}:=\rbra{1-\beta_i^{T-t+1}}\pt\dti-\cti$. We know that $\widetilde{\profit}_{t,i}\in[-1,1]$ as $1-\beta_i^{T-t+1}\in[0,1]$.
It is straightforward to see that running our algorithms with the loss
\begin{align*}
    \lti:=\frac{1}{2}\rbra{1-\widetilde{\profit}_{t,i}}\in[0,1].
    \numberthis\label{eq:lti-subscription}
\end{align*}
 guarantees the same regrets as in the targeted marketing problem, scaled by $(1-\max_{i\in[n]\beta_i})^{-1}$:

\paragraph{Regret bound.}\ \ Now, we state regret bounds for the subscription service problem.
\begin{restatable}{theorem}{ThmSubscription}
    Let
    \begin{align*}
        R_T:=\sup_{(c,p)\in[0,1]^{n+1}}\sum_{t\in[T]}\eprofit_t(c,p)-\expec\Bigg[\sum_{t\in[T]}\profit_t\Bigg],
    \end{align*}
    where $\profit_t$ and $\eprofit_t(c,p)$ are given by \eqref{eq:subscription-profit} and \eqref{eq:subscription-eprofit} respectively. Under Assumption \ref{assump:monotonic}, running Algorithm \ref{alg:1} with $\lti$ defined in \eqref{eq:lti-subscription} and $K\in\Theta(T^{\nicefrac{1}{4}}), \eta\in\Theta(T^{-\nicefrac{3}{4}})$ guarantees 
    \begin{align*}
        R_T\in\O\rbra{\frac{nT^{\nicefrac{3}{4}}\log T}{1-\max_{i\in[n]}\beta_i}},
    \end{align*}
    Similarly, under Assumption \ref{assump:costconcave}, running Algorithm \ref{alg:2} with the aforementioned $\lti$ guarantees
    \begin{align*}
        R_T\in\O\rbra{\frac{nT^{\nicefrac{2}{3}}\log T}{1-\max_{i\in[n]}\beta_i}},
    \end{align*}
    when $\gamma:=\eta\log(e/\eps), \eps:=T^{-2}, \del:=T^{-1}, K\in\Theta(T^{\nicefrac{1}{3}})$ and $\eta\in\Theta(T^{-\nicefrac{2}{3}})$.
\end{restatable}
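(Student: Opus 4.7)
The plan is to reduce the subscription problem to an instance of the targeted marketing problem from Sections~\ref{sec:monotonic} and~\ref{sec:cost-concave}, and then invoke Theorems~\ref{thm:alg1-regret} and~\ref{thm:alg2-regret}. The reduction rests on the identity derived in the setup, $\sum_{t}\profit_{t,i} = \frac{1}{1-\beta_i}\sum_{t}\widetilde{\profit}_{t,i}$, together with its expected analogue
\[
\sum_{t\in[T]} \eprofit_{t,i}(c,p) = \frac{1}{1-\beta_i}\sum_{t\in[T]} \widetilde{\eprofit}_{t,i}(c,p),\qquad \widetilde{\eprofit}_{t,i}(c,p) := (1-\beta_i^{T-t+1})\,p\,\edti(c,p)-c,
\]
obtained by taking expectations of the former. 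This identifies an auxiliary targeted marketing instance whose expected demand is $\widehat{d}_{t,i}(c,p):=(1-\beta_i^{T-t+1})\,\edti(c,p)$.

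First I would verify that the auxiliary instance fits the frameworks of Sections~\ref{sec:monotonic} and~\ref{sec:cost-concave}. Since $(1-\beta_i^{T-t+1})\in(0,1]$ is a positive deterministic scalar depending only on $t$ and $i$, multiplying by it preserves monotonicity in $c$ and in $p$ (Assumption~\ref{assump:monotonic}), preserves concavity in $c$ (Assumption~\ref{assump:costconcave}), and keeps $\widehat{d}_{t,i}(c,p)\in[0,1]$. Moreover $\widetilde{\profit}_{t,i}\in[-1,1]$ and hence $\tilde{\ell}_{t,i}\in[0,1]$, which is the loss-boundedness condition required by the analyses of Algorithms~\ref{alg:1} and~\ref{alg:2}. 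Theorems~\ref{thm:alg1-regret} and~\ref{thm:alg2-regret} then apply verbatim to bound the auxiliary regret
\[
\tilde{R}_T := \sup_{(c,p)\in[0,1]^{n+1}}\sum_{t,i}\widetilde{\eprofit}_{t,i}(c_i,p) - \expec\!\bigg[\sum_{t,i}\widetilde{\profit}_{t,i}\bigg]
\]
by $\O(nT^{\nicefrac{3}{4}}\log T)$ (respectively $\O(nT^{\nicefrac{2}{3}}\log T)$) under the stated choices of $K,\eta$ (and of $\gamma,\eps,\del$ in the cost-concave case).

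The remaining step is to translate this auxiliary bound into the target bound on $R_T$. Using the identity,
\[
R_T = \sup_{(c^\star,p^\star)}\sum_{i\in[n]}\frac{1}{1-\beta_i}\Big(\sum_t\widetilde{\eprofit}_{t,i}(c^\star_i,p^\star) - \expec\!\big[\sum_t\widetilde{\profit}_{t,i}\big]\Big).
\]
The main obstacle I anticipate is the heterogeneity of the weights $w_i:=1/(1-\beta_i)$: individual market summands may take either sign, so $w_{\max}:=1/(1-\max_i\beta_i)$ cannot be factored out of the sum in front of $\tilde{R}_T$ for free. A clean resolution is to instead run Algorithm~\ref{alg:1} (or~\ref{alg:2}) with the rescaled loss $\ell'_{t,i}:=(w_i/w_{\max})\tilde{\ell}_{t,i}\in[0,1]$; the corresponding theorem then bounds $\sup_{c,p}\sum_{t,i}(w_i/w_{\max})\big[\widetilde{\eprofit}_{t,i}(c_i,p)-\widetilde{\profit}_{t,i}\big]$ in expectation, and multiplying through by $w_{\max}$ recovers precisely $R_T$ with the claimed $1/(1-\max_i\beta_i)$ factor. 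An alternative that retains the unweighted loss stated in the theorem is to extract a per-market version of the decomposition in Lemmas~\ref{lem:alg1-exp-bounds}--\ref{lem:alg1-algorithm-loss-bound} (and the analogous steps in Section~\ref{subsec:ub-concave}), noting that each market $i$ maintains an independent $\qti$ whose regret contribution decouples and is of order $\O(T^{\nicefrac{3}{4}}\log T)$ (resp. $\O(T^{\nicefrac{2}{3}}\log T)$); summing against $w_i\leq w_{\max}$ then yields the same overall bound.
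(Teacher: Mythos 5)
Your reduction is the same one the paper uses in Appendix~\ref{app:subscription}: exchange the order of summation to get $\sum_t\profit_{t,i}=\frac{1}{1-\beta_i}\sum_t\widetilde{\profit}_{t,i}$ with $\widetilde{\profit}_{t,i}=(1-\beta_i^{T-t+1})\pt\dti-\cti\in[-1,1]$, check that the induced expected demands $(1-\beta_i^{T-t+1})\,\edti(c,p)$ still satisfy Assumption~\ref{assump:monotonic} (resp.~\ref{assump:costconcave}) and that the losses \eqref{eq:lti-subscription} lie in $[0,1]$, and invoke Theorem~\ref{thm:alg1-regret} (resp.~\ref{thm:alg2-regret}). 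Where you go beyond the paper is the treatment of the heterogeneous weights $w_i=1/(1-\beta_i)$. The paper simply asserts that the regret is ``the same\dots scaled by $(1-\max_i\beta_i)^{-1}$,'' whereas you correctly observe that $\sup_{(c,p)}\sum_i w_i A_i$ is not in general bounded by $w_{\max}\sup_{(c,p)}\sum_i A_i$ when individual per-market terms $A_i$ can be negative---and they can be, since the common price that is optimal for the aggregate may be strictly worse for a particular market than the algorithm's randomized price sequence. Your first resolution (run the algorithm on the rescaled losses $(w_i/w_{\max})\tilde{\ell}_{t,i}\in[0,1]$, so the theorems bound the weighted regret directly, then multiply by $w_{\max}$) is sound and closes this gap, though strictly speaking it proves the theorem for a slightly different loss than the one stated in \eqref{eq:lti-subscription}; it amounts to rerunning Steps 3--5 of the proof of Theorem~\ref{thm:alg1-regret} with $\lt$ replaced by the weighted aggregate.

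Your second, ``alternative'' resolution does not work as stated. The per-market contributions do \emph{not} fully decouple: while each $\qti$ is an independent exponential-weights instance, the price distribution $\qtz$ is shared, and its regret is controlled only through $\lossH_t$, whose first summand involves the unweighted aggregate $\lt=\sum_i\lti$ and whose second summand is what cancels the $\expec[\,|\I_K|/(\qtz(p^\star)+\gamma)]$ variance terms from \emph{all} $n$ cost instances at once. Consequently the analysis yields a bound only on the unweighted sum over markets, not a per-market (or per-subset) bound that one could sum against $w_i\leq w_{\max}$. So you should rely on the first fix; with it, your argument is complete and is, if anything, more careful than the paper's own one-line justification.
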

\subsection{Promotional credit}\label{app:promotion}
\paragraph{Setting.}\ \ Next, we consider the promotional credit problem, where the firm segments the population into $n$ types indexed by $i\in[n]$. For each round $t$,  $\rti\in[0,1]$ people of each type try the firm's service, where $\rti$ is given exogenously. The firm offers promotional credits $\cti\in[0,1]$ to each type $i$ and price $\pt\in[0,1]$. After using promotional credits, a  $\dti\in[0,1]$ fraction of the $\rti$ customers decide to purchase (or continue) the service. The total profit at round $t$ is
\begin{align*}
    \profit_t:=\sum_{i\in[n]}\profit_{t,i}:=\sum_{i\in[n]}\rti\left(\pt\dti-\cti\right).
    \numberthis\label{eq:promo-profit}
\end{align*}
Let $\erti$ be the expectation of the exogenous $\rti$ and $\edti(c_{t,i},\pt)$ be the conditional expectation of $\dti$ given promotional credit $c_{t,i}$ and price $\pt$. Then, the (conditional) expected profit at round $t$ is
\begin{align*}
    \eprofit_t(c_t,\pt):=\sum_{i\in[n]}\eprofit_{t,i}(c_{t,i},\pt):=\sum_{i\in[n]}\erti\left(\pt\edti(c_{t,i},\pt)-c_{t,i}\right).
    \numberthis\label{eq:promo-eprofit}
\end{align*}

\paragraph{Regret bound.}\ \ As in the targeted marketing problem, we consider regret upper bounds in two cases: when $\edti(\cdot,\cdot)$ is monotonic with respect to both arguments (Assumption \ref{assump:monotonic}), or it is concave to the first argument, the promotional credit, and monotonic with respect to the second argument, the price (Assumption \ref{assump:costconcave}). The expected profit \eqref{eq:promo-eprofit} is a locally rescaled version of an expected profit for the targeted marketing problem: $\pt\dti(\cti,\pt)-\cti$. Hence, it is straightforward to see that our algorithms are still no-regret with respect to the profit defined above without any modification.
\begin{restatable}{theorem}{ThmPromo}
    Let
    \begin{align*}
        R_T:=\sup_{(c,p)\in[0,1]^{n+1}}\sum_{t\in[T]}\eprofit_t(c,p)-\expec\Bigg[\sum_{t \in [T]} \profit_t\Bigg].
    \end{align*}
    Under Assumption \ref{assump:monotonic}, running Algorithm \ref{alg:1} with $\lti$ defined in terms of \eqref{eq:promo-profit} guarantees $R_T\in\O(nT^{\nicefrac{3}{4}}\log T)$ when $\gamma:=\eta, K\in\Theta(T^{\nicefrac{1}{4}}), \eta\in\Theta(T^{-\nicefrac{3}{4}})$. Similarly, under Assumption \ref{assump:costconcave}, running Algorithm \ref{alg:2} with the aforementioned $\lti$ guarantees $R_T\in\O(nT^{\nicefrac{2}{3}}\log T)$ when $\gamma:=\eta\log(e/\eps), \eps:=T^{-2}, \del:=T^{-1}, K\in\Theta(T^{\nicefrac{1}{3}})$ and $\eta\in\Theta(T^{-\nicefrac{2}{3}})$.
\end{restatable}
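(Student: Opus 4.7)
The plan is to show that the promotional credit problem reduces, essentially without modification, to the analyses already carried out for Theorems \ref{thm:alg1-regret} and \ref{thm:alg2-regret}, by verifying that (i) the per-round profit still lives in a bounded range so the losses $\lti$ are well-defined in $[0,1]$, and (ii) the structural properties (monotonicity or cost-concavity) used in the original regret analyses are preserved under multiplication by the exogenous factor $\rti \in [0,1]$.

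First I would observe that $\profit_{t,i} := \rti(\pt\dti - \cti) \in [-1,1]$ since $\rti, \pt, \dti, \cti \in [0,1]$; hence setting $\lti := \tfrac{1}{2}(1 - \profit_{t,i})$ yields $\lti \in [0,1]$, which is the only range requirement imposed on the observed loss by the exponential-weights and kernel-exponential-weights machinery in Algorithms \ref{alg:1} and \ref{alg:2}. The corresponding expected loss is $\elti(c_i, p) = \tfrac{1}{2}(1 - \erti(p\,\edti(c_i,p) - c_i))$, i.e., a linear rescaling of the original expected loss by a nonnegative factor $\erti \leq 1$ plus a constant shift, both of which commute with the monotonicity and convexity assumptions.

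Next I would verify that the structural properties needed by each theorem carry over. For Algorithm \ref{alg:1} under Assumption \ref{assump:monotonic}, the discretization argument in Lemma \ref{lem:alg1-discretization-error} only used $\edti(\hat{c}_i, \hat{p}) \leq \edti(\hat{c}_i + \eps, \hat{p} - \eps)$ to conclude $\eprofit_{t,i}(\hat{c}_i+\eps, \hat{p}-\eps) \geq \eprofit_{t,i}(\hat{c}_i, \hat{p}) - 2\eps$; multiplying through by $\erti \in [0,1]$ preserves this inequality with the \emph{same} constant, so $\widetilde{R}_T(n,K) \leq 2nT/K$ is unchanged. Steps 1, 3, 4, and 5 of the proof of Theorem \ref{thm:alg1-regret} manipulate only $\lti$ together with $\qti, \qtz$ and never reference the explicit form of the profit, so they transfer verbatim. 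For Algorithm \ref{alg:2} under Assumption \ref{assump:costconcave}, the key fact is that $c \mapsto \erti(p\,\edti(c,p) - c)$ is concave whenever $\edti(\cdot, p)$ is (since $\erti \geq 0$ and $-c$ is linear in $c$); hence Lemma \ref{lem:alg2-convex-loss-properties} applies with identical constants, and the bandit convex optimization bounds of Lemmas \ref{lem:alg2-comparator-loss-bound} and \ref{lem:alg2-algorithm-loss-bound} go through without change.

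The one place to be careful is that the estimators $\lossF_{t,i}$ in \eqref{eq:alg1-fti} and \eqref{eq:alg2-fti} are built directly from the random $\lti$, so they are automatically unbiased for the rescaled expected loss $\elti$ defined above; crucially, the algorithm need not know $\erti$, and the variance bounds used in both proofs (which ultimately rely on $\lti \leq 1$) remain intact. Combining these observations with the same parameter choices as before yields $R_T \in \O(nT^{\nicefrac{3}{4}}\log T)$ under Assumption \ref{assump:monotonic} and $R_T \in \O(nT^{\nicefrac{2}{3}}\log T)$ under Assumption \ref{assump:costconcave}. No substantive obstacle is anticipated; the task is essentially a bookkeeping check that each step of the earlier proofs is stable under rescaling the per-market profit by a factor in $[0,1]$.
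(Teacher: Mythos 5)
Your proposal is correct and follows essentially the same route as the paper, which proves this theorem by observing that the promotional-credit profit is a locally rescaled (by $\erti\in[0,1]$) version of the targeted-marketing profit and that the analyses of Theorems \ref{thm:alg1-regret} and \ref{thm:alg2-regret} are stable under this rescaling. Your write-up simply makes explicit the bookkeeping (boundedness of $\lti$, preservation of monotonicity/concavity, unchanged discretization and variance bounds) that the paper leaves as a remark.
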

\subsection{Profit-maximizing A/B test}\label{app:profitmaxAB}
\paragraph{Setting.}\ \ Suppose a firm performs a sequence of experiments where it chooses $M$ marketing alternatives for $n$ population segments to increase demand of a product. Also, the firm wants to perform these experiments while maintaining a common price for the product. Assume that each alternative $m\in[M]$ costs $c_t(m)\in[0,1]$ to implement in round $t$. For example, when $M=2$, the first option ($m=1$) could be to show the product on a non-interactive webpage, while the second option ($m=2$) could be to present the product on an AI-assisted interactive webpage. In this scenario, the second option costs more as it requires more computing resources. 

For each $i\in[n]$ and $t\in[T]$, let $\dti\in[0,1]$ be a random variable representing the normalized demands made by population segment $i$ during round $t$, given a choice of alternative $m_{t,i}\in[M]$ and price $\pt\in[0,1]$. Moreover, let $\edti(m_{t,i},p_t)$ be the conditional expectation of $\dti$ given $m_{t,i}$ and $p_t$.

The firm's total profit at round $t$ is 
\begin{align*}
    \profit_t:=\sum_{i\in[n]}\profit_{t,i}:=\sum_{i\in[n]}\pt\dti-c_t(m_{t,i}),
    \numberthis\label{eq:ab-profit}
\end{align*}
and the expected profit at round $t$ given $m_t:=(m_{t,1},\dots,m_{t,n})\in[M]^n$ and $\pt\in[0,1]$ is 
\begin{align*}
    \profit_t(m_t,\pt):=\sum_{i\in[n]}\profit_{t,i}(m_{t,i},\pt):=\sum_{i\in[n]}\pt\dti(m_{t,i},\pt)-c_t(m_{t,i}).
\end{align*}
The firm's regret with respect to the best choice of an alternative for each segment and price is
\begin{align*}
    R_T:=\sup_{(k,p)\in[M]^n\times[0,1]}\sum_{t\in[T]}\eprofit_t(c,p)-\expec\Bigg[\sum_{t\in[T]}\profit_t\Bigg].
\end{align*}

\paragraph{Modification.} This problem can be thought of as a version of targeted marketing with discrete ancillary variables. It turns out that we can still use Algorithm \ref{alg:1} with a small modification to solve this problem. The modification is as follows: rename variables $c_{t,i}\rightarrow m_{t,i}$, $c_i \rightarrow m_i$ in Algorithm \ref{alg:1}. For all $p\in\I_K$, set $q_{1,i}(m_i|p)\leftarrow\uniform([M])$ and let $q_{t,i}(m_i|p)$ be supported over $[M]$ for all $t\in[T]$. Losses are given by $\lti:=\frac{1}{2}(1-\profit_{t,i})$ where $\profit_{t,i}$ is defined in  \eqref{eq:ab-profit}.

\paragraph{Regret bound.} With this modification, the regret guarantee for this problem is as follows:
\begin{restatable}{theorem}{ThmABTest}
    With the aforementioned modification of Algorithm \ref{alg:1}, $R_T\in\O\big(n\sqrt{M}T^{\nicefrac{2}{3}}\log(MT)\big)$ when $\gamma:=\eta\sqrt{M/K}$, $\eta\in\Theta\big(T^{-\nicefrac{2}{3}}/\sqrt{M}\,\big)$ and $K\in\Theta(T^{\nicefrac{1}{3}})$.
\end{restatable}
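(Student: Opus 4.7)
The plan is to mirror the proof of Theorem~\ref{thm:alg1-regret} step by step, carefully replacing ``cost-alphabet size'' with $M$ and ``price-alphabet size'' with $K+1$. In the original proof the symbol $|\I_K|$ played two distinct roles: (i) the size of the cost support, which appeared via summations like $\sum_{c_i \in \I_K}$ when computing $\expec[\innp{\qti, \lossF_{t,i}^2}_{p^\star}]$ in Lemma~\ref{lem:alg1-step3-1}; and (ii) the size of the price support, which appeared via $\expec[1/\qtz(\pt)]$. Now that the cost alphabet is $[M]$ while prices are still discretized to $\I_K$, every occurrence of $|\I_K|$ coming from role~(i) becomes $M$, while those from role~(ii) stay as $K+1 = |\I_K|$. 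In particular, the loss $\lossH_t$ should be reinstantiated as
\begin{align*}
    \lossH_t(p):=\frac{1}{n}\frac{\lt\,\ind[\pt=p]}{\qtz(\pt)+\gamma} \;+\; \eta M\left(\frac{1}{\gamma}-\frac{1}{\qtz(p)+\gamma}\right),
\end{align*}
since the $\eta M/\gamma$ term is precisely what is needed to cancel the $\expec[M/(\qtz(p^\star)+\gamma)]$ bound coming from the modified Lemma~\ref{lem:alg1-step3-1}.

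Next I would redo Step~3 and Step~4 of the original proof with this new $\lossH_t$. The only place the algebra differs substantively is the variance term $\expec[\innp{\qtz,\lossH_t^2}]$. Expanding as in the proof of Lemma~\ref{lem:alg1-comparator-loss-bound} but tracking $M$ and $K$ separately, one gets
\begin{align*}
    \expec[\innp{\qtz,\lossH_t^2}] \;\leq\; \expec\!\left[\tfrac{1}{\qtz(\pt)}\right] + \tfrac{2\eta M}{\gamma} + \tfrac{\eta^2 M^2}{\gamma^2} \;\leq\; (K{+}1) + \tfrac{2\eta M}{\gamma} + \tfrac{\eta^2 M^2}{\gamma^2}.
\end{align*}
With the prescribed choice $\gamma := \eta\sqrt{M/K}$, we have $\eta M/\gamma = \sqrt{MK}$ and $(\eta M/\gamma)^2 = MK$, so the bracket is $O(MK)$. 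Combining Step~3 (comparator lower bound) and Step~4 (algorithm upper bound) exactly as before — but now extracting a $\frac{1}{\eta}\log M$ from the cost-side Hedge bound \eqref{eq:alg1-exp-bounds-1} and a $\frac{1}{\eta}\log K$ from the price-side bound \eqref{eq:alg1-exp-bounds-2} — yields
\begin{align*}
    R_T(c^\star,p^\star) \;\leq\; O\!\left(n\eta M K T + \tfrac{n}{\eta}\log(MK)\right)
\end{align*}
uniformly in the comparator.

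For the residual regret $\widetilde R_T$ from discretization, the cost side is now \emph{lossless} (costs are literally $[M]$ in both the benchmark and the algorithm), so only the $K$-discretization of price contributes. Assuming expected demands are non-increasing in the common price $p$ (which is natural for this setting and mirrors Assumption~\ref{assump:monotonic} applied only to $p$), the argument of Lemma~\ref{lem:alg1-discretization-error} goes through and gives $\widetilde R_T \leq 2nT/K$. Putting everything together,
\begin{align*}
    R_T \;\in\; O\!\left(n\eta M K T + \tfrac{n}{\eta}\log(MK) + \tfrac{nT}{K}\right),
\end{align*}
and plugging in $K \in \Theta(T^{1/3})$ and $\eta \in \Theta(T^{-2/3}/\sqrt{M})$ balances the first two terms to $n\sqrt{M}\,T^{2/3}\log(MK)$, dominating the third, which gives the claimed $\widetilde{O}(n\sqrt{M}\,T^{2/3})$ bound since $\log(MK) \leq \log(MT)$.

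The main obstacle is the careful bookkeeping that disentangles the two roles of $|\I_K|$ in the original proof and propagates the asymmetric choice $\gamma = \eta\sqrt{M/K}$ through every bias-variance trade-off; in particular, verifying that the second term of $\lossH_t$ still cancels the $\expec[M/(\qtz(p^\star)+\gamma)]$ arising in the modified Lemma~\ref{lem:alg1-step3-1} and that the variance bound remains $O(MK)$ rather than picking up extra factors. The exploration-exploitation intuition behind $\lossH_t$ is unchanged: with fewer cost alternatives ($M$ instead of a full discretization), the magnitude of the exploration bonus $\eta M/\gamma$ is tuned smaller, which is exactly what the new $\gamma$ achieves.
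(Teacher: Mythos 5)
Your proposal is correct and follows essentially the same route as the paper, which simply states that revising the proof of Theorem~\ref{thm:alg1-regret} with the modification yields $R_T\in\O\rbra{n\eta MKT+\frac{n}{\eta}\log MK}$ before plugging in the parameters; your bookkeeping of the two roles of $|\I_K|$ (cost support becoming $M$, price support staying $K+1$), the resulting $O(MK)$ variance bound under $\gamma=\eta\sqrt{M/K}$, and the separate $\log M$ and $\log K$ Hedge terms is exactly the revision the paper has in mind. Your explicit treatment of the residual price-discretization term $nT/K$ (which the paper's one-line proof omits but which is dominated by $n\sqrt{M}T^{\nicefrac{2}{3}}$ anyway) is if anything slightly more careful than the paper's.
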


\begin{proof}
    By revising the proof of Theorem \ref{thm:alg1-regret} with the aforementioned modification,
    \begin{align*}
        R_T\in\O\rbra{n\eta MKT+\frac{n}{\eta}\log MK}.
    \end{align*}
    The choice of $\eta$ and $K$ implies the theorem statement.
\end{proof}

\end{document}